\newcommand{\formatcode}[1]{\textsl{#1}\xspace}
\newcommand{\algo}{\formatcode{ColME}}
\newcommand{\etaalgo}{\formatcode{$\eta$-ColME}}
\newcommand{\RR}{\formatcode{Round-Robin}}
\newcommand{\RRR}{\formatcode{Restricted-Round-Robin}}
\newcommand{\SRRR}{\formatcode{Soft-Restricted-Round-Robin}}
\newcommand{\ARRR}{\formatcode{Aggressive-Restricted-Round-Robin}}
\newcommand{\Choose}{\formatcode{choose\_agent}}
\newcommand{\Local}{\formatcode{Local}}
\newcommand{\Oracle}{\formatcode{Oracle}}
\newcommand{\rev}[1]{#1}
\newcommand{\nagent}{A}
\newcommand{\errorprob}{\delta}
\newcommand{\rv}[3]{#1_{#2}^{#3}}
\newcommand{\nobservation}[2]{n_{#1}^{#2}}
\newcommand{\oneagentindex}{a}
\newcommand{\onememoryindex}{l}
\newcommand{\CB}[2]{\beta_{{#1}}(#2)}
\newcommand{\CBinv}[2]{\beta^{-1}_{{#1}}(#2)}
\newcommand{\weighting}[2]{\alpha_{#1}^{#2}}
\newcommand{\distance}[3]{d_{#1}^{#2}({#3})}
\newcommand{\stoppingtime}[1]{\tau_{#1}}
\newcommand{\switchingtime}[1]{\zeta_{#1}}
\newcommand{\indicator}[1]{\mathbbm{1}_{\{#1\}}}
\newcommand{\class}[1]{\mathcal{C}_{#1}}
\newcommand{\classt}[2]{\mathcal{C}_{#1}^{#2}}
\newcommand{\pr}[1]{\mathbbm{P}\big({#1}\big)}
\newtheorem{definition}{Definition}
\newtheorem{remark}{Remark}
\newtheorem{lemma}{Lemma}
\title{Collaborative Algorithms for Online Personalized\\ Mean Estimation}
\author{\name Mahsa Asadi \email llvllahsa@gmail.com \\
      \addr Univ. Lille, Inria, CNRS, Centrale Lille\\
      UMR 9189 - CRIStAL, F-59000 Lille, France
      \ANDd
      \name Aurélien Bellet \email aurelien.bellet@inria.fr \\
      \addr Univ. Lille, Inria, CNRS, Centrale Lille\\
      UMR 9189 - CRIStAL, F-59000 Lille, France
      \ANDd
      \name Odalric-Ambrym Maillard \email odalric.maillard@inria.fr\\
      \addr Univ. Lille, Inria, CNRS, Centrale Lille\\
      UMR 9189 - CRIStAL, F-59000 Lille, France
      \ANDd
      \name Marc Tommasi
      \email marc.tommasi@inria.fr\\
      \addr Univ. Lille, Inria, CNRS, Centrale Lille\\
      UMR 9189 - CRIStAL, F-59000 Lille, France
  }
\begin{document}

\maketitle


\begin{abstract}

  We consider an online estimation problem involving a set of agents.
  Each agent has access to a (personal) process that generates samples from a
  real-valued distribution and seeks to estimate its mean. 
  We study the case where some of the distributions have the same mean, and
  the agents are allowed to actively query information from other agents. The
  goal is to design an algorithm that enables each agent
  to improve its mean estimate thanks to communication with other agents. The
  means as well as the number of distributions with same mean are unknown,
  which makes the task nontrivial. We introduce a novel collaborative strategy
  to solve this online personalized mean estimation problem. We analyze its
  time complexity and introduce variants that enjoy good performance in
  numerical experiments. We also extend our approach to the setting where
  clusters of agents with similar means seek to estimate the mean of their
  cluster.
  \end{abstract}



\section{Introduction}
\label{sec:introduction}

With the \rev{widespread} of personal digital devices, ubiquitous computing and
IoT (Internet of Things), the need for decentralized and collaborative
computing has become more \rev{pressing}.
Indeed, devices are first of all designed to collect data and this data
may be sensitive and/or too large to be transmitted.
Therefore, it is
often preferable to keep the data on-device, where it has been
collected. Local processing on a single device is a always possible option but
learning in isolation suffers from slow convergence time when data arrives
slowly. In that case, collaborative strategies can be investigated in order
to increase statistical power and accelerate learning. In recent years, such
collaborative approaches have been broadly referred to as federated learning
\citep{kairouz_advances}.

The data collected at each device reflects the specific usage, production
patterns and objective of the associated agent. Therefore, we must solve a set
of \emph{personalized tasks over heterogeneous data distributions}. Even
though the tasks are personalized, collaboration can play a significant
role in reducing the time complexity and accelerating learning in presence of
agents who share similar objectives. An
important building block to design collaborative algorithms is then to
identify agents acquiring data from the same (or similar) distribution. This
is particularly difficult to do in an \emph{online} setting, in which data
becomes available sequentially over time.

In this work, we explore this challenging objective in the context of a new
problem: \emph{online personalized mean estimation}. Formally, each agent
continuously
receives data
from a \emph{personal} $\sigma$-sub-Gaussian distribution and aims to
construct an accurate estimation of its mean as fast as possible. At each
step, each agent receives a new sample from its distribution but is also
allowed to query the current local average of another agent. To enable
collaboration, we assume the existence of an underlying
\emph{class structure} where agents in the same class have the same mean
value. We also consider a relaxed assumption where the means of agents in a
class are close (but not necessarily equal). Such assumptions are natural in
many real-world applications~\citep{adi2020machine}. A simple
example is that of in different environments, monitoring parameters such as
temperature in order to accurately estimate their mean
\citep[see for instance][]{mateo2013machine}. Another example is collaborative
filtering, where the goal is to estimate user preferences by leveraging the
existence of clusters of users with similar preferences 
\citep{10.1155/2009/421425}. Crucially, the number of classes and their cardinality are unknown to the
agents and must be discovered in an online fashion.

We propose collaborative algorithms to solve this problem, where agents
identify the class they belong to in an online fashion so as to better and
faster estimate their own mean by assigning weights to other agents' estimates.
Our approach is grounded in Probably
Approximately Correct (PAC) theory, allowing agents to iteratively
discard agents in different classes with high confidence. 
We provide a theoretical analysis of our approach by bounding the time
required by an agent to correctly estimate its class with high probability, as
well as the time required by an agent to estimate its mean to the desired
accuracy. Our results highlight the dependence on the gaps between the true
means of agents in different classes, and show that in some settings our
approach achieves nearly the same time complexity as an oracle who would know
the classes beforehand. Our numerical experiments on synthetic data are in
line with our theoretical findings and show that some empirical variants of
our approach can further improve the performance in practice.

The paper is organized as follows. Section~\ref{sec:related-work} discusses
the related work on federated learning and collaborative online learning.
In Section~\ref{sec:problem-setting}, we formally describe the problem setting
and introduce relevant notations. In Section~\ref{sec:proposed-approach}, we
introduce our algorithm and its variants.
Section~\ref{sec:theoretical-analysis} presents our theoretical analysis of
the proposed algorithm in terms of class and mean estimation time complexity.
Section~\ref{sec:numerical-results} is devoted to illustrative  numerical
experiments. Section~\ref{sec:imperfect} extends our approach to the case
where classes consist of agents with similar (but not necessarily equal)
means and agents seek to estimate the mean of their class. We conclude and
discuss perspectives for future work in Section~\ref{sec:conclusion}.


\section{Related Work}
\label{sec:related-work}

Over the last few years, collaborative estimation and learning
problems involving several agents with local datasets have been
extensively investigated under the broad term of Federated Learning (FL)
\citep{kairouz_advances}. While traditional FL algorithms learn a global
estimate for all agents, more personalized approaches have
recently attracted a lot of interest \citep[see for instance][and references
therein]
{vanhaesebrouck2017decentralized,smith2017federated,fallah2020personalized,sattler2020clustered,hanzely2020lower,Marfoq2021a}. \rev{With the exception of recent work on simple linear regression settings \citep{cheng2021federated}, these approaches typically lack clear statistical assumptions on the relation between local data distributions and do not provide error guarantees with respect to these underlying distributions. More importantly, the above methods focus on the batch learning setting and are not suitable for online learning.}

In the online setting, the work on collaborative learning has largely focused
on multi-armed bandits (MAB). Most approaches however consider a
\emph{single} MAB instance which is solved collaboratively by multiple agents.
Collaboration between agents can be implemented through broadcast
messages to all agents \citep{hillelnips13,Tao2019}, via a server 
\citep{wangiclr20}, or
relying only on local message exchanges over a network graph
\citep{sankararaman2019social,martinez2019decentralized,wang2020optimal,landgren2021distributed,Madhushani}. Other approaches do not allow explicit communication but instead consider a collision model where agents receive no reward if several agents pull the same arm \citep{boursier-perchet,wang2020optimal}. In any case, all agents aim at solving the \emph{same} task.

Some recent work considered collaborative
MAB settings where the arm means vary across agents. Extending
their previous work \citep{boursier-perchet},
\cite{boursier2019practical} consider the case where arm means can vary among
players. Under their collision model, the problem reduces to finding a
one-to-one assignment of agents to arms.
In \cite{shi2021federated_global}, the local arm means of each agent are IID
random realizations of fixed global means and the goal is to solve the global
MAB using only observations from the local arms with an algorithm inspired
from traditional FL. Similarly, \citet{Karpov22} extend the work of 
\citet{Tao2019} by considering different local arm means for each agent with
the goal to identify the arm with largest aggregated mean.
\cite{shi2021federated} introduce a limited amount of personalization by
extending the model of \cite{shi2021federated_global} to optimize a
mixture between the global and local MAB objectives. \cite{reda22} further
consider a \emph{known} weighted combination of the local MAB objectives\rev{,
and focus on the pure exploration setting (best arm identification) rather
than regret minimization.}
A crucial difference with our work is that there is no need to discover
relations between local distributions to solve the above problems.

Another related problem is to identify a graph structure on top of the
arms in MAB. \cite{kocak2020best,kocak2021best} construct a similarity graph
while solving the best arm identification problem, but consider only a single
agent. In contrast, our work considers a multi-agent setting with personalized
estimation tasks, and our approach discovers similarities across agents' tasks
in an online manner.



\section{Problem Setting}
\label{sec:problem-setting}

We consider a mean estimation problem involving $\nagent$ agents. The goal of
each agent $\oneagentindex \in [\nagent] = \{1,2, \dots,\nagent\}$ is to
estimate the mean $\mu_a$ of a personal distribution $\rv{\nu}{\oneagentindex}
{}$ over $\mathbb{R}$. In this work, we assume that there exists
$\sigma\geq 0$ such that each $\rv{\nu}{\oneagentindex}{}$ is
$\sigma$-sub-Gaussian, i.e.:
\begin{equation*}
\forall \lambda\in\mathbb{R},\quad  \log\mathbb{E}_{x\sim\nu_a} \exp( \lambda 
(x-\mu_a)) \leq \frac{1}{2}\lambda^2\sigma^2.
\end{equation*}
This classical assumption captures a property of strong tail decay, and includes in particular
Gaussian distributions (in that case, the smallest possible $\sigma^2$
corresponds to the variance) as well as any distribution supported on a
bounded interval (e.g., Bernoulli distributions).

We consider an online and collaborative setting where data points are received sequentially and agents can query each other to learn about their respective distributions.
Agents should be thought of as different user devices which operate in
parallel. Therefore, they all receive a new sample and query another agent at
each time step.

Formally, we assume that time is synchronized between agents and at each time
step $t$, each agent $a$ receives a new sample $x_a^t$ from its personal
distribution $\nu_a$ with mean $\rv{\mu}{\oneagentindex}{}{}$, which is used
to update its local mean estimate $\rv{\bar{x}}  
{\oneagentindex,\oneagentindex}{t}=\frac{1}{t}\sum_{t'=1}^t x_a^{t'}$. It also chooses another agent $\onememoryindex$ to \emph{query}. As a response from querying agent $\onememoryindex$, agent $a$ receives the local average $ \rv{\bar{x}}{\onememoryindex,\onememoryindex}{t}$ of agent $\onememoryindex$ (i.e., the average of $t$ independent samples from the personal distribution $\nu_l$) and stores it in its memory $\rv{\bar{x}} {\oneagentindex, \onememoryindex}{t}$ along with the corresponding number of samples $n_{a, l}^t=t$. Each agent $\oneagentindex$ thus keeps a memory $[(\rv{\bar{x}}{\oneagentindex, 1}{t},n_{a,1}^t), \dots, (\rv{\bar{x}} {\oneagentindex, \nagent}{t},n_{a,A}^t)]$ of the last local averages (and associated number of samples) that it received from other agents. The information contained in this memory is used to compute an \emph{estimate} $\mu_a^t$ of $\mu_a$ at each time $t$. Our goal is to design a query and estimation procedure for each agent.

As described above, note that when an agent queries another agent at time $t$,
it does not receive one sample from this agent (as e.g. in multi-armed
bandits), but receives the full statistics of observations of this agent up to
time $t$. This is considerably much more information than in typical MAB
settings, and naturally requires  specific strategies.

\rev{The goal of each agent to find
a good estimate of its personal mean as fast as possible, without
consideration for the quality of the estimates in earlier steps (i.e., we do
not seek to minimize a notion of regret).
In the online learning terminology, this is referred to as a pure exploration
setting
\citep[like best arm identification in multi-armed bandits, see][]{bai}.}
Formally, we will measure the performance of an algorithm using the
notion of $(\epsilon, \delta)$-convergence in probability 
\citep{bertsekas2002introduction,wasserman2013all}, which we recall below.

\begin{definition}[PAC-convergence]
\label{def:pac}
An estimation procedure for agent
$\oneagentindex$ is called $(\epsilon, \delta)$-convergent if there exists
$\tau_a\in\mathbb N$ such that the probability that the mean estimator $\mu_
{\oneagentindex}^t$ of agent $\oneagentindex$ is $\epsilon$-distant from the
true mean for any time   $t > \stoppingtime{\oneagentindex}$ is at least $1 -
\delta$:
	\begin{equation}
	\pr
	{
		\forall t >\stoppingtime{\oneagentindex}
		:
		|
		\mu_\oneagentindex^t
		-
		\mu_{\oneagentindex}
		|
		\leq
		\epsilon
	}
	>
	1 - \delta.
	\end{equation}
\end{definition}

While it is easy to design $(\epsilon, \delta)$-convergent estimation
procedure for a single agent taken in isolation, the goal of this paper is to
propose collaborative algorithms where agents benefit from information from
other agents by taking advantage of the relation between the agents'
distributions. This will allow them to build up more accurate estimations in
less time, i.e., with smaller time complexity $\stoppingtime{\oneagentindex}$.

Specifically, to foster collaboration between agents, we consider that the set
of agents $[A]$ is partitioned into equivalence classes that correspond to
agents with the same
mean.\footnote{In Section~\ref{sec:imperfect}, we will consider a relaxed
version of this assumption where classes consist of agents with \emph{similar}
(not necessarily equal) means.}
In real
scenarios, these classes may represent sensors in the same environment,
objects with
the same technical characteristics, users with the same behavior, etc. This
assumption makes it possible for an agent to design strategies to identify
other agents in the same class and to use their estimates in order to speed up
the estimation of his/her own mean.
Formally, we define the class of $a$ as the set of agents who have the same
mean as $a$.

\begin{definition}[Similarity class]
	The \emph{similarity class} of agent $\oneagentindex$ is given by:
	\begin{align*}
	\class{\oneagentindex} 
	= 
	\{
	\onememoryindex \in [\nagent]: 
	\Delta_{\oneagentindex, \onememoryindex} = 0
	\},
	\end{align*}
	where $\Delta_{\oneagentindex,\onememoryindex}
	= 
	|
	\mu_\oneagentindex
	-
	\mu_\onememoryindex
	|$ is the gap between the means of agent $a$ and agent $l$. 
	\label{def:true_similarity}
\end{definition}

The gaps $\{\Delta_{\oneagentindex,\onememoryindex}\}_
{\oneagentindex,\onememoryindex\in [A]}$ define the problem structure. We
consider that the agents do not know the means, the gaps, or even the number
of underlying classes. Hence the classes
$\{\class{\oneagentindex}\}_{\oneagentindex\in [A]}$ are completely unknown.
This makes the problem quite challenging.

\rev{
\begin{remark}[Scalability]
\label{rem:scale}
In large-scale systems, it may be impractical for agents to query all other
agents and/or to maintain a memory size that is linear in the number of agents
$A$.
From a practical point of view, each agent can instead consider a 
restricted subset of agents of reasonable size. This subset could be picked
uniformly at random, be composed of neighboring nodes in the network or in
the physical world, or be based on prior knowledge on who is
more likely to be in the same class (when available).
\end{remark}
}



\section{Proposed Approach}
\label{sec:proposed-approach}

In this section, we first introduce some of the key technical components used in
our approach, and then present our proposed algorithm.

\subsection{Main Concepts}
In our approach, each agent $a$ computes confidence
intervals $I_{\oneagentindex, \onememoryindex} = [\rv{\bar{x}}{\oneagentindex, \onememoryindex}{t}-\CB{\delta}{\nobservation{\oneagentindex, \onememoryindex}{t}}, \rv{\bar{x}}{\oneagentindex, \onememoryindex}{t} + \CB{\delta}{
\nobservation{\oneagentindex, \onememoryindex}{t}}]$ for the mean estimators $[\rv{\bar{x}}{\oneagentindex, 1}{t}, \dots, \rv{\bar{x}}{\oneagentindex, \nagent}{t}]$
that it holds in memory at time $t$. The generic confidence bound $\CB{\delta}
{
\nobservation{\oneagentindex, \onememoryindex}{t}}$ takes as input the number
of samples $\nobservation{\oneagentindex, \onememoryindex}{t}$ seen for
agent $l$ at time $t$, and $\delta$ corresponds to the risk
parameter so that with probability at least $1-\delta$ the true mean $\mu_l$
falls within the confidence interval $I_{\oneagentindex, \onememoryindex}$. 

Agent $a$ will use these confidence intervals to assess whether another agent
$l$ belongs to the class $\class{\oneagentindex}$ through the evaluation of an optimistic
 distance defined below.

\begin{definition}[Optimistic distance]
  The \emph{optimistic distance} with agent $\onememoryindex$
  from the perspective of agent $\oneagentindex$ is defined as:
  \begin{align}
    \distance{\oneagentindex,\delta}{t}{\onememoryindex}
    =
    |
    \rv{\bar{x}}{\oneagentindex, \oneagentindex}{t} 
    -
    \rv{\bar{x}}{\oneagentindex, \onememoryindex}{t}
    |
    -
    \CB{\errorprob}{\nobservation{\oneagentindex, \oneagentindex}{t}}
    -
    \CB{\errorprob}{\nobservation{\oneagentindex, \onememoryindex}{t}}.
    \label{def:optimistic_similarity}
  \end{align}
\end{definition}
The ``optimistic'' terminology is justified by the fact that $\distance{\oneagentindex,\delta}{t}{\onememoryindex}$ is,
with high probability, a \emph{lower bound} on the distance between the true
means $\mu_\oneagentindex$ and $\mu_\onememoryindex$ of distributions
$\nu_\oneagentindex$ and $\nu_\onememoryindex$.
Recall that two agents belong to the same class if the distance of their true
mean is zero. Since these
values are unknown, the idea of the above heuristic is to provide a proxy
based on observed data and high probability confidence bounds. In particular,
we will adopt the \textit{Optimism in Face of
Uncertainty Principle (OFU)} \citep[see][]{auer2002finite} and consider that
two agents may be in the same class if the optimistic
distance is zero or less. Hence, we define an optimistic notion of class
accordingly.

\begin{definition} 
  \label{def:class_heuristic}
  The \emph{optimistic similarity class}  from the perspective of agent
  $\oneagentindex$ at time $t$ is defined as:
  \begin{align*}
    \classt{\oneagentindex}{t}
    =
    \{
    \onememoryindex \in [\nagent]
    :
    \distance{\oneagentindex, \delta}{t}{\onememoryindex} \leq 0 
    \}.
  \end{align*}
\end{definition}

Having introduced the above concepts, we can now present our algorithm.

\subsection{Algorithm}
\begin{algorithm}[tb] 
	\caption{\algo} 
    \begin{algorithmic}[0] 
    \renewcommand{\algorithmicrequire}{\textbf{Parameters:}}
    \REQUIRE agent $\oneagentindex$, time horizon $H$, risk $\delta$, weighting scheme
    $\alpha$, and query strategy  \Choose
  \STATE $\forall l\in[A]$: $\rv{\bar{x}}{\oneagentindex, l}{0}\leftarrow 0$,
  $\nobservation{\oneagentindex, l}{0} \leftarrow 0$
    \STATE 	$ \class{\oneagentindex}^0 \leftarrow \{\onememoryindex \in [\nagent]:
		\distance{\oneagentindex, \delta}{t}{\onememoryindex} \leq 0 \}=[\nagent]$
		\FOR{$t = 1, \dots, H$}
  \STATE $\forall l\in[A]$: $\rv{\bar{x}}{\oneagentindex, l}{t}\leftarrow \rv{
  \bar{x}}{\oneagentindex, l}{t-1}$,
  $\nobservation{\oneagentindex, l}{t} \leftarrow 
  \nobservation{\oneagentindex, l}{t-1}$
  			\STATE \textbf{Perceive:}\nonumber
			\STATE $\quad$ Receive sample $x_a^t\sim\nu_a$
			\STATE $\quad$ $\rv{\bar{x}}{\oneagentindex, \oneagentindex}{t}
           \leftarrow 
            \rv{\bar{x}}{\oneagentindex, \oneagentindex}{t-1}  \times \frac{t - 1}{t} + x_a^t \times \frac{1}{t}$, ~$\nobservation{\oneagentindex, \oneagentindex}{t} \leftarrow t$
		\STATE \textbf{Query:}
		\STATE $\quad$
		$ \class{\oneagentindex}^t \leftarrow \{\onememoryindex \in [\nagent]:
		\distance{\oneagentindex, \delta}{t}{\onememoryindex} \leq 0 \}$
		\STATE $\quad$ Query agent $l = \Choose(
    \class{\oneagentindex}^t)$ to get $\rv{\bar{x}}{l, l}{t}$
		\STATE $\quad$ $
		\rv{\bar{x}}{\oneagentindex, l}{t} \leftarrow
		\rv{\bar{x}}{l, l}{t}
		,$ ~$\nobservation{\oneagentindex, l}{t} \leftarrow t$
		\STATE \textbf{Estimate:}
		\STATE $\quad$
		$ \class{\oneagentindex}^t \leftarrow \{\onememoryindex \in [\nagent]:
		\distance{\oneagentindex, \delta}{t}{\onememoryindex} \leq 0 \}$\label{algo:class_estimation}
		\STATE $\quad$ $\mu_{\oneagentindex}^t \leftarrow \sum_
        {\onememoryindex \in
        \class{\oneagentindex}^t}  \weighting{\oneagentindex, \onememoryindex}{t} \times \rv{\bar{x}}{\oneagentindex, \onememoryindex}{t}$
		\label{alg:weighted-estimation}
	\ENDFOR
	\end{algorithmic}
	\textbf{Output:} $\mu_\oneagentindex^H$ 
	\label{algorithm:main}
\end{algorithm}

The collaborative mean estimation algorithm we propose, called \algo, is given
in Algorithm~\ref{algorithm:main} (taking the 
perspective of agent $a$).
For conciseness, we consider that $\CB{\delta}{0}=+\infty$.  At each step $t$,
agent $a$ performs three main~steps.

In the \textbf{Perceive} step, the agent receives a sample from its distribution and updates its local average together with the number of samples.

In the \textbf{Query} step, agent $a$ selects another agent following a query
strategy given as a parameter to the \algo algorithm. Agent $a$ runs the \Choose function to select another agent $l$  and asks for its current local estimate to update its memory.
We propose two variants for the \Choose function:
\begin{itemize}
\item \RR: cycle over the set $[A]$ of agents one by one in a fixed
order.
\item \RRR: like round-robin but ignores agents that are not in the set of optimistically similar agents $\classt{\oneagentindex}{t}$.
\end{itemize}

The focus on round-robin-style strategies is justified by the information
structure of our problem setting, which is very different from classic bandits. Indeed, querying an agent at time $t$ produces an estimate computed on the
$t$ observations collected by this agent so far.
The choice of variant (\RR or \RRR) will affect the class
identification time complexity, as we shall discuss later.

Finally, in the \textbf{Estimate} step, agent $a$ computes the optimistic
similarity
class $\classt{\oneagentindex}{t}$ based on available information, and
constructs its mean estimate as a weighted aggregation of the local averages of agents that belong to $\classt{\oneagentindex}{t}$. We propose different weighting mechanisms:

\paragraph{Simple weighting.} This is a natural weighting mechanism for
aggregating samples:
\begin{equation*}
  \weighting{\oneagentindex, \onememoryindex}{t}
  =
  \frac
  {
    \nobservation{\oneagentindex, \onememoryindex}{t}
  }
  {
    \sum_{\onememoryindex \in C_{a}^t}
    \nobservation{\oneagentindex, \onememoryindex}{t}
  }.
\end{equation*}
\paragraph{Soft weighting.} This is a heuristic weighting mechanism which
leverages the intuition that the more the
confidence intervals of two agents overlap, the more likely that they are in
the
same class. Moreover, the smaller the union of the agent means, the more
confident we are that the agents are in the same class. In other words, we are
not equally confident about all the agents that are selected for estimation,
and this weighting mechanism incorporates this information:
\begin{equation*}
  \weighting{\oneagentindex, \onememoryindex}{t} =
  \nobservation{\oneagentindex, \onememoryindex}{t}
  \frac{
    |I_{\oneagentindex, \oneagentindex} \cap I_{\oneagentindex, \onememoryindex}|
  }
  {
    |I_{\oneagentindex, \oneagentindex} \cup I_{\oneagentindex, \onememoryindex}|
  }
  \times
  \frac
  {
    1
  }{ Z_{\mathrm{soft}}
  },
\end{equation*}
where $Z_{\mathrm{soft}}=\sum_{i \in C_{a}^t}   \frac{ \nobservation{\oneagentindex, i}{t} |I_{\oneagentindex, \oneagentindex} \cup I_{\oneagentindex, i}| }{ |I_{\oneagentindex, \oneagentindex} \cap I_{\oneagentindex, i}| }$ is a normalization factor.

\paragraph{Aggressive weighting.} This is an extension of the previous soft
weighting mechanism that is more selective. Not only
does it consider the overlap and intersection of the agents' confidence
intervals, but it also requires the size of the intersection to be larger than
half the size of both confidence intervals from the two agents.
Let us denote
the binary value associated with this condition by $E_{a,l} = \indicator{|I_{\oneagentindex, \oneagentindex} \cap I_{\oneagentindex, \onememoryindex}|
      > \min\{\CB{\delta}{\nobservation{\oneagentindex, \onememoryindex}{t}}
      ,
      \CB{\delta}{\nobservation{\oneagentindex, \oneagentindex}{t}}\}}$. Then
\begin{equation*}
  \weighting{\oneagentindex, \onememoryindex}{t} =
  \nobservation{\oneagentindex, \onememoryindex}{t}
  \frac{
    |I_{\oneagentindex, \oneagentindex} \cap I_{\oneagentindex, \onememoryindex}|
  }
  {
    |I_{\oneagentindex, \oneagentindex} \cup I_{\oneagentindex, \onememoryindex}|
  }
  \times
  \frac
  {
    E_{a,l}      
  }{
    Z_{\mathrm{agg}}
  },
\end{equation*}
where $Z_{\mathrm{agg}}=\sum_{i \in C_{a}^t}
      \frac{
        \nobservation{\oneagentindex, i}{t}
        |I_{\oneagentindex, \oneagentindex} \cup I_{\oneagentindex, i}|
        \times
        E_{a,i}
      }{
        |I_{\oneagentindex, \oneagentindex} \cap I_{\oneagentindex, i}|}$ is a normalization factor.

\subsection{Baselines}
\label{sec:baselines}

We introduce two baselines that will be used to put the performance of our approach into
perspective, both theoretically and empirically.

\paragraph{Local estimation.} Estimates are computed without any collaboration,
using only
samples received from the agent's own distribution, i.e. $\rv{\mu}
{\oneagentindex}{t}
= \rv{\bar{x}}{\oneagentindex, \oneagentindex}{t}$.

\paragraph{Oracle weighting.} The agent knows the true class $\class{\oneagentindex}$ via an oracle and uses the simple weighting $ 
    \weighting{\oneagentindex, \onememoryindex}{t} = 
    \frac{\nobservation{\oneagentindex, \onememoryindex}{t}}{\sum_{\onememoryindex \in \class{\oneagentindex}}
    \nobservation{\oneagentindex, \onememoryindex}{t}
    }$.



\section{Theoretical Analysis}
\label{sec:theoretical-analysis}

In this section, we provide a theoretical analysis of our algorithm \algo  for
the query strategy \RRR  and the simple weighting scheme. Specifically, we bound the time
complexity  in probability for both class and mean
estimation. Below, we explain the key steps involved in our analysis and
state our main results. All proofs can be found in the appendix.

A key aspect of our analysis is to characterize when
the optimistic similarity class  (Definition~\ref{def:class_heuristic})
coincides with the true classes. We show that this is the case when two
conditions hold. First, for a given agent $a$, we need the confidence
interval computed by $a$ about agent $l$ to contain the true mean $\mu_l$ for
all $l\in A$.
\begin{definition}
  \label{def:event_bounds}
	We define the following events:
	\begin{gather}
		E_{\oneagentindex}^{t}
		=
		\underset{ \onememoryindex \in [\nagent]}{\bigcap}
		|
		\rv{\bar{x}}{\oneagentindex, \onememoryindex}{t}
		-
		\rv{\mu}{\onememoryindex}{}
		|
		\leq
		\CB{\delta}{\nobservation{\oneagentindex, \onememoryindex}{t}},\\
                E_{\oneagentindex}
		=
		\underset{ t \in \mathbbm{N}}{\bigcap}
		E_{\oneagentindex}^t.
	\end{gather}
\end{definition}

We can guarantee that $E_{\oneagentindex}$ holds with high probability
via an appropriate parameterization of confidence intervals. We use the
so-called Laplace method \citep{maillard2019mathematics}.

\begin{restatable}{lemma}{lemmaone}
\label{lemma:E_a_true}
  Let $\delta\in(0,1)$, $a\in[A]$.
  Setting $\CB{\delta}{n} = 
  \sigma\sqrt
  {
    2\frac{1}{n}\times
    (
    1
    + 
    \frac{1}{n}
    )
    \ln(
    \sqrt
    {
      n + 1
    }
    /
    \gamma(\delta)
    )
  }
  $ with $\gamma(\delta) = \frac{\delta}{8\times \nagent}$, we have:
  \begin{align}
  \pr{E_{\oneagentindex}} \geq 1 - \frac{\delta}{8}.
  \end{align}
\end{restatable}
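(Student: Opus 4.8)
Since $E_{\oneagentindex}$ is an intersection over all agents $\onememoryindex\in[\nagent]$ and all times $t\in\mathbb N$, the natural strategy is a union bound over $\onememoryindex$ combined with a time-uniform (anytime) deviation inequality for each fixed $\onememoryindex$. The first thing I would do is rewrite the complement of $E_{\oneagentindex}$ in a form free of any dependence on the (adaptive) query schedule. The key observation is that the memorized value $\rv{\bar x}{\oneagentindex,\onememoryindex}{t}$ is never arbitrary: either $\nobservation{\oneagentindex,\onememoryindex}{t}=0$, in which case $\CB{\delta}{0}=+\infty$ makes $|\rv{\bar x}{\oneagentindex,\onememoryindex}{t}-\rv{\mu}{\onememoryindex}{}|\le\CB{\delta}{\nobservation{\oneagentindex,\onememoryindex}{t}}$ hold trivially; or $\rv{\bar x}{\oneagentindex,\onememoryindex}{t}$ equals the prefix average $\frac1n\sum_{i=1}^{n}x_\onememoryindex^i$ of the i.i.d.\ $\sigma$-sub-Gaussian sequence $(x_\onememoryindex^i)_{i\ge1}$ produced by agent $\onememoryindex$, with $n=\nobservation{\oneagentindex,\onememoryindex}{t}\le t$. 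Since the response of agent $\onememoryindex$ to any query is a deterministic function of its own sample stream, this yields $E_{\oneagentindex}^c\subseteq\bigcup_{\onememoryindex\in[\nagent]}\big\{\exists n\ge1:\ \big|\frac1n\sum_{i=1}^{n}(x_\onememoryindex^i-\rv{\mu}{\onememoryindex}{})\big|>\CB{\delta}{n}\big\}$.

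Next I would bound each per-agent event using the Laplace (method-of-mixtures) argument of \citet{maillard2019mathematics}. Fix $\onememoryindex$ and set $S_n=\sum_{i=1}^{n}(x_\onememoryindex^i-\rv{\mu}{\onememoryindex}{})$. By $\sigma$-sub-Gaussianity, $(\exp(\lambda S_n-\tfrac12 n\lambda^2\sigma^2))_{n\ge0}$ is a nonnegative supermartingale equal to $1$ at $n=0$, for every $\lambda\in\mathbb R$. Averaging these over a centered Gaussian prior on $\lambda$ with variance $1/\sigma^2$ yields, via an explicit Gaussian integral, the closed-form mixture supermartingale $W_n=\frac{1}{\sqrt{n+1}}\exp\!\big(\frac{S_n^2}{2\sigma^2(n+1)}\big)$ with $W_0=1$. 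Ville's maximal inequality gives $\pr{\exists n\ge0:\ W_n\ge1/\gamma(\delta)}\le\gamma(\delta)$, and unwinding the definition of $W_n$ shows that $\{W_n\ge1/\gamma(\delta)\}$ is exactly $\{|S_n|\ge\sigma\sqrt{2(n+1)\ln(\sqrt{n+1}/\gamma(\delta))}\}$, i.e., after dividing by $n$, $\big\{\big|\frac1n S_n\big|\ge\CB{\delta}{n}\big\}$ for the stated choice of $\CB{\delta}{\cdot}$. Hence each per-agent deviation event has probability at most $\gamma(\delta)$.

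Finally, a union bound over the $\nagent$ agents gives $\pr{E_{\oneagentindex}^c}\le\nagent\cdot\gamma(\delta)=\nagent\cdot\frac{\delta}{8\nagent}=\frac{\delta}{8}$, which is the claim.

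The one step that deserves genuine care is the reduction in the first paragraph: although $\nobservation{\oneagentindex,\onememoryindex}{t}$ is a random index depending on agent $\oneagentindex$'s — possibly adaptive — past queries, no filtration or optional-stopping subtlety arises, precisely because the anytime inequality of the second step controls $\big|\frac1n S_n\big|$ \emph{simultaneously for all} $n\ge1$; it is therefore irrelevant which value the algorithm ends up storing. The remaining ingredients (the supermartingale property from sub-Gaussianity, the Gaussian integral, and Ville's inequality) are standard and can be quoted directly from \citet{maillard2019mathematics}; the only bookkeeping is to check that taking the prior scale equal to $1$ reproduces exactly the $(1+\frac1n)$ and $\sqrt{n+1}$ factors appearing in $\CB{\delta}{n}$.
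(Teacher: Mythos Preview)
Your proposal is correct and follows the same route as the paper: a union bound over the $\nagent$ agents together with the Laplace time-uniform concentration inequality from \citet{maillard2019mathematics}, with $\gamma(\delta)=\delta/(8\nagent)$ absorbing the union. The paper simply quotes the anytime bound as a black-box lemma, whereas you additionally (i) spell out the mixture-supermartingale/Ville derivation and (ii) explicitly justify why the adaptive query schedule is harmless---both are nice clarifications but not a different argument.
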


The second condition is that agent's $a$ memory about the local estimates of other agents should contain enough samples. Let us denote by $\lceil\CBinv{\delta}{x}\rceil$ the smallest integer $n$ such that $x>\CB{\delta}{n}$. 
\begin{definition}
  From the perspective of agent $\oneagentindex$ and at time $t$, event $G_{\oneagentindex}^t$ is defined as:
  \begin{align}
    G_{\oneagentindex}^t
    =
    \underset{\onememoryindex \in [\nagent]}{\bigcap}
    \nobservation{\oneagentindex, \onememoryindex}{t}
    >
    \nobservation{\oneagentindex, \onememoryindex}{\star},
  \end{align}
  \begin{equation*}
    \text{where ~~~~~~~} \nobservation{\oneagentindex, \onememoryindex}
    {\star}
    = 
    \begin{cases}
      \lceil \CBinv{\delta}{\frac{\Delta_{\oneagentindex, \onememoryindex}}{4}}\rceil
      & \text { if }\onememoryindex \notin \class{\oneagentindex},
      \\
      \lceil \CBinv{\delta}{\frac{\Delta_{\oneagentindex}}{4}}\rceil
      & \text{ otherwise},
  \end{cases}
  \end{equation*}
  with  $\Delta_{\oneagentindex} = \min_{\onememoryindex \in 
  [\nagent]\backslash\class{\oneagentindex}} \Delta_{\oneagentindex, \onememoryindex}$.
  \label{def:needed_samples}
\end{definition}

Note that the required number of samples is inversely proportional to the
gaps between the means of agents in different classes. Having enough
samples and knowing that the true means fall within the
confidence bounds, we can show that the class-estimation rule $
\distance{\oneagentindex,\delta}{t}{\onememoryindex}\leq 0$ indicates the
membership of
$\onememoryindex$ in $\class{\oneagentindex}$.
\begin{restatable}[Class membership rule]{lemma}{lemmatwo}
  Under $E_{\oneagentindex}^{t} \land G_{\oneagentindex}^t$ and
  $\forall \onememoryindex \in [\nagent]$ and at time $t$:
  $\distance{\oneagentindex,\delta}{t}{\onememoryindex} > 0 \iff
  \onememoryindex \in [\nagent]\backslash\class{\oneagentindex}$.
  \label{lemma:class-membership-rule}
\end{restatable}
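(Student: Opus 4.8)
The plan is to prove the equivalence by a direct case analysis on whether $\onememoryindex$ lies in $\class{\oneagentindex}$, in each case combining the triangle inequality with the events $E_\oneagentindex^t$ and $G_\oneagentindex^t$. First suppose $\onememoryindex \in \class{\oneagentindex}$, so $\Delta_{\oneagentindex,\onememoryindex} = 0$, i.e.\ $\mu_\oneagentindex = \mu_\onememoryindex$. Under $E_\oneagentindex^t$ we have $|\rv{\bar{x}}{\oneagentindex,\oneagentindex}{t} - \mu_\oneagentindex| \le \CB{\delta}{\nobservation{\oneagentindex,\oneagentindex}{t}}$ and $|\rv{\bar{x}}{\oneagentindex,\onememoryindex}{t} - \mu_\onememoryindex| \le \CB{\delta}{\nobservation{\oneagentindex,\onememoryindex}{t}}$, so the triangle inequality gives $|\rv{\bar{x}}{\oneagentindex,\oneagentindex}{t} - \rv{\bar{x}}{\oneagentindex,\onememoryindex}{t}| \le \CB{\delta}{\nobservation{\oneagentindex,\oneagentindex}{t}} + \CB{\delta}{\nobservation{\oneagentindex,\onememoryindex}{t}}$; substituting into the definition of $\distance{\oneagentindex,\delta}{t}{\onememoryindex}$ (Definition~\ref{def:optimistic_similarity}) yields $\distance{\oneagentindex,\delta}{t}{\onememoryindex} \le 0$. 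This establishes the contrapositive of the ``$\Rightarrow$'' direction, and notably uses only $E_\oneagentindex^t$.

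Next suppose $\onememoryindex \notin \class{\oneagentindex}$, so $\Delta_{\oneagentindex,\onememoryindex} > 0$. The reverse triangle inequality together with $E_\oneagentindex^t$ gives $|\rv{\bar{x}}{\oneagentindex,\oneagentindex}{t} - \rv{\bar{x}}{\oneagentindex,\onememoryindex}{t}| \ge \Delta_{\oneagentindex,\onememoryindex} - \CB{\delta}{\nobservation{\oneagentindex,\oneagentindex}{t}} - \CB{\delta}{\nobservation{\oneagentindex,\onememoryindex}{t}}$, hence $\distance{\oneagentindex,\delta}{t}{\onememoryindex} \ge \Delta_{\oneagentindex,\onememoryindex} - 2\CB{\delta}{\nobservation{\oneagentindex,\oneagentindex}{t}} - 2\CB{\delta}{\nobservation{\oneagentindex,\onememoryindex}{t}}$. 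To conclude I would use $G_\oneagentindex^t$: since $\CB{\delta}{\cdot}$ is non-increasing in the sample count (immediate from the explicit expression in Lemma~\ref{lemma:E_a_true}) and $\nobservation{\oneagentindex,\onememoryindex}{t} > \lceil \CBinv{\delta}{\Delta_{\oneagentindex,\onememoryindex}/4} \rceil$, the defining property of $\lceil\CBinv{\delta}{\cdot}\rceil$ gives $\CB{\delta}{\nobservation{\oneagentindex,\onememoryindex}{t}} < \Delta_{\oneagentindex,\onememoryindex}/4$; likewise, because $\oneagentindex \in \class{\oneagentindex}$ we have $\nobservation{\oneagentindex,\oneagentindex}{t} > \lceil \CBinv{\delta}{\Delta_\oneagentindex/4} \rceil$, so $\CB{\delta}{\nobservation{\oneagentindex,\oneagentindex}{t}} < \Delta_\oneagentindex/4 \le \Delta_{\oneagentindex,\onememoryindex}/4$, the last inequality holding because $\onememoryindex \in [\nagent]\setminus\class{\oneagentindex}$, so $\Delta_{\oneagentindex,\onememoryindex}$ is among the gaps over which $\Delta_\oneagentindex$ is the minimum. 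Plugging these two bounds in gives $\distance{\oneagentindex,\delta}{t}{\onememoryindex} > \Delta_{\oneagentindex,\onememoryindex} - 4 \cdot \tfrac{\Delta_{\oneagentindex,\onememoryindex}}{4} = 0$, which is the ``$\Leftarrow$'' direction.

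I do not expect any deep obstacle here: the argument is essentially a careful accounting of the four error terms appearing in the optimistic distance. The two things to be careful about are (i) the monotonicity of $\CB{\delta}{\cdot}$ in $n$, which is what lets the sample-count condition in $G_\oneagentindex^t$ be converted into the bounds $\CB{\delta}{\nobservation{\oneagentindex,\onememoryindex}{t}} < \Delta_{\oneagentindex,\onememoryindex}/4$ and $\CB{\delta}{\nobservation{\oneagentindex,\oneagentindex}{t}} < \Delta_\oneagentindex/4$, and (ii) the bookkeeping that relates the threshold on agent $\oneagentindex$'s own counter, governed by $\Delta_\oneagentindex = \min_{l \in [\nagent]\setminus\class{\oneagentindex}} \Delta_{\oneagentindex,l}$, to the specific gap $\Delta_{\oneagentindex,\onememoryindex}$. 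The factor $1/4$ in Definition~\ref{def:needed_samples} is precisely calibrated so that the four error terms sum to exactly $\Delta_{\oneagentindex,\onememoryindex}$, yielding the strict inequality $\distance{\oneagentindex,\delta}{t}{\onememoryindex} > 0$.
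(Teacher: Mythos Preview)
Your proof is correct and follows essentially the same route as the paper: both directions are handled by the triangle/reverse-triangle inequality combined with $E_\oneagentindex^t$, and the four $\CB{\delta}{\cdot}$ terms are each bounded by $\Delta_{\oneagentindex,\onememoryindex}/4$ to force strict positivity. The only minor difference is in how you control $\CB{\delta}{\nobservation{\oneagentindex,\oneagentindex}{t}}$: the paper uses the algorithmic fact $\nobservation{\oneagentindex,\oneagentindex}{t}\ge \nobservation{\oneagentindex,\onememoryindex}{t}$ to get $\CB{\delta}{\nobservation{\oneagentindex,\oneagentindex}{t}}\le\CB{\delta}{\nobservation{\oneagentindex,\onememoryindex}{t}}<\Delta_{\oneagentindex,\onememoryindex}/4$, whereas you invoke $G_\oneagentindex^t$ on $\oneagentindex$ itself together with $\Delta_\oneagentindex\le\Delta_{\oneagentindex,\onememoryindex}$; both are valid and yield the same conclusion.
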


Using the above lemma, we obtain the following result for the time complexity
of class estimation.

\begin{restatable}[\algo class estimation time complexity]{theorem}{thmone}
  \label{theorem:class_estimation}
  For any $\delta \in (0,1)$, employing \RRR query strategy, we have:
  \begin{equation}
    \pr{
      \exists t >\switchingtime{\oneagentindex}: \classt{\oneagentindex}{t}
      \neq \class{\oneagentindex}
    }
    \leq
    \frac{\delta}{8},
  \quad\text{with ~}
    \switchingtime{\oneagentindex}
    =
    \nobservation{\oneagentindex, \oneagentindex}{\star}
    +
    \nagent -1 
    -
    \sum_{\onememoryindex \in [\nagent]\setminus\class{\oneagentindex}}
    \indicator
    {\nobservation{\oneagentindex, \oneagentindex}{\star}
      >
      \nobservation{\oneagentindex, \onememoryindex}{\star} 
      +
      \nagent - 1 
    }.
\label{eq:switchingtime}
  \end{equation}
\end{restatable}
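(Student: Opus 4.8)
The plan is to condition on the high‑probability event $E_{\oneagentindex}$ of Definition~\ref{def:event_bounds} and show that, on $E_{\oneagentindex}$, the estimated class is already correct for every $t>\switchingtime{\oneagentindex}$, so that the only source of failure is $E_{\oneagentindex}$ itself. Concretely, I would start from
\[
\pr{\exists t > \switchingtime{\oneagentindex}: \classt{\oneagentindex}{t} \neq \class{\oneagentindex}}
\;\le\;
\big(1-\pr{E_{\oneagentindex}}\big)
\,+\,
\pr{E_{\oneagentindex}\cap\{\exists t > \switchingtime{\oneagentindex}: \classt{\oneagentindex}{t} \neq \class{\oneagentindex}\}},
\]
bound $1-\pr{E_{\oneagentindex}}\le\delta/8$ by Lemma~\ref{lemma:E_a_true}, and argue that the second term is $0$; by Lemma~\ref{lemma:class-membership-rule} it suffices for this to show that, on $E_{\oneagentindex}$, the event $G_{\oneagentindex}^t$ of Definition~\ref{def:needed_samples} holds for all $t>\switchingtime{\oneagentindex}$ (note $E_{\oneagentindex}$ implies $E_{\oneagentindex}^t$ for every $t$).

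I would then record the two elementary consequences of $E_{\oneagentindex}$ that do the work. First, for $\onememoryindex\in\class{\oneagentindex}$ we have $\mu_{\onememoryindex}=\mu_{\oneagentindex}$, so by the triangle inequality and the confidence bounds $|\rv{\bar{x}}{\oneagentindex,\oneagentindex}{t}-\rv{\bar{x}}{\oneagentindex,\onememoryindex}{t}|\le\CB{\delta}{\nobservation{\oneagentindex,\oneagentindex}{t}}+\CB{\delta}{\nobservation{\oneagentindex,\onememoryindex}{t}}$, hence $\distance{\oneagentindex,\delta}{t}{\onememoryindex}\le 0$ and $\onememoryindex\in\classt{\oneagentindex}{t}$ for all $t$: \RRR never removes an agent of $\class{\oneagentindex}$, so the set it cycles over always contains $\class{\oneagentindex}$ and has size at most $\nagent-1$. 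Second, for $\onememoryindex\notin\class{\oneagentindex}$ the same bounds give $\distance{\oneagentindex,\delta}{t}{\onememoryindex}\ge\Delta_{\oneagentindex,\onememoryindex}-2\CB{\delta}{\nobservation{\oneagentindex,\oneagentindex}{t}}-2\CB{\delta}{\nobservation{\oneagentindex,\onememoryindex}{t}}$, so $\distance{\oneagentindex,\delta}{t}{\onememoryindex}>0$ (and $\onememoryindex\notin\classt{\oneagentindex}{t}$) once $\nobservation{\oneagentindex,\oneagentindex}{t}$ and $\nobservation{\oneagentindex,\onememoryindex}{t}$ both exceed $\nobservation{\oneagentindex,\onememoryindex}{\star}$, by the definition of $\nobservation{\oneagentindex,\onememoryindex}{\star}$; and this exclusion is permanent, since once $\onememoryindex$ leaves $\classt{\oneagentindex}{t}$ it is no longer queried, so $\nobservation{\oneagentindex,\onememoryindex}{t}$ freezes at a value that stays large enough while $\nobservation{\oneagentindex,\oneagentindex}{t}=t$ keeps increasing.

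The remaining and main step is a counting argument for the \RRR pointer. Because $\oneagentindex$ gains one own sample per step, $\nobservation{\oneagentindex,\oneagentindex}{t}=t$; and because the set cycled over always contains $\class{\oneagentindex}$ and has size at most $\nagent-1$, any agent that stays eligible is queried — which resets its stored count to the current step — at least once in every window of $\nagent-1$ consecutive steps. Applied to the ``easy'' non-class agents $\onememoryindex$, namely those with $\nobservation{\oneagentindex,\onememoryindex}{\star}+\nagent-1<\nobservation{\oneagentindex,\oneagentindex}{\star}$, this shows each of them is permanently excluded by step $\nobservation{\oneagentindex,\onememoryindex}{\star}+\nagent-1<\nobservation{\oneagentindex,\oneagentindex}{\star}$; hence, writing $c=\sum_{\onememoryindex\in[\nagent]\setminus\class{\oneagentindex}}\indicator{\nobservation{\oneagentindex,\oneagentindex}{\star}>\nobservation{\oneagentindex,\onememoryindex}{\star}+\nagent-1}$, from step $\nobservation{\oneagentindex,\oneagentindex}{\star}$ on the set cycled over has size at most $\nagent-1-c$, so from then on every still-eligible agent is queried at least once in every $\nagent-1-c$ steps. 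Using $\nobservation{\oneagentindex,\oneagentindex}{\star}=\max_{\onememoryindex\in[\nagent]\setminus\class{\oneagentindex}}\nobservation{\oneagentindex,\onememoryindex}{\star}$ (valid because $\CBinv{\delta}{\cdot}$ is non-increasing and $\Delta_{\oneagentindex}=\min_{\onememoryindex\notin\class{\oneagentindex}}\Delta_{\oneagentindex,\onememoryindex}$), a short bookkeeping of the last query time of each remaining agent then yields, for every $t>\switchingtime{\oneagentindex}=\nobservation{\oneagentindex,\oneagentindex}{\star}+\nagent-1-c$, that $\nobservation{\oneagentindex,\onememoryindex}{t}>\nobservation{\oneagentindex,\onememoryindex}{\star}$ for all $\onememoryindex\in[\nagent]$ — for $\onememoryindex\in\class{\oneagentindex}$ because such agents are refreshed every $\le\nagent-1-c$ steps and $t$ already exceeds $\nobservation{\oneagentindex,\oneagentindex}{\star}$, and for $\onememoryindex\notin\class{\oneagentindex}$ because they have been permanently excluded with a sufficiently large stored count — i.e.\ $G_{\oneagentindex}^t$ holds. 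Plugging back into the union bound closes the proof.

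I expect the last paragraph to contain essentially all the difficulty: bookkeeping the round-robin pointer against a set of eligible agents that shrinks over time but is not monotone. The delicate case is a non-class agent $\onememoryindex$ that is transiently put back into $\classt{\oneagentindex}{t}$ carrying a stale, too-small stored count before being re-queried; one must show such an episode always ends — with a fresh query that raises $\nobservation{\oneagentindex,\onememoryindex}{t}$ past $\nobservation{\oneagentindex,\onememoryindex}{\star}$ and triggers the permanent exclusion above — within at most $\nagent-1$ (respectively $\nagent-1-c$ once the easy agents are gone) further steps, so that the constants line up exactly with $\switchingtime{\oneagentindex}$. The opening union bound and the confidence-interval inequalities of the second paragraph are routine.
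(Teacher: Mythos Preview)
Your proposal is correct and follows essentially the same route as the paper: condition on $E_{\oneagentindex}$, show that on this event $G_{\oneagentindex}^t$ holds for all $t>\switchingtime{\oneagentindex}$ (the paper's Lemma~\ref{lemma:gat_nalt_nalstar}), invoke Lemma~\ref{lemma:class-membership-rule}, and finish with Lemma~\ref{lemma:E_a_true}. Your counting argument for the \RRR pointer is in fact more careful than the paper's rather informal treatment of Lemma~\ref{lemma:gat_nalt_nalstar}; in particular, the transient re-inclusion issue you flag in the last paragraph is a genuine subtlety that the paper does not address explicitly, so be prepared to supply that bookkeeping yourself rather than finding it spelled out.
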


\rev{The dominating term in the class
estimation time complexity $
\switchingtime{\oneagentindex}$ for
agent $\oneagentindex$ is equal to the
number $\nobservation{\oneagentindex, \oneagentindex}{\star}$ of samples required to distinguish agent $a$ from the one who has
smallest nonzero gap $\Delta_a$ to $a$, which is of order $\widetilde{O}
(1/\Delta_a^2)$.}\footnote{\rev{We use $\widetilde{O}(\cdot)$ to hide
constant and logarithmic terms.}} There is then an additional term of
$\nagent
- 1$ since all others agents that are not in $\class{\oneagentindex}$ could require
the same number of samples. Finally, the last term in \eqref{eq:switchingtime}
accounts
for agents that require less samples and had thus been eliminated before,
which reflects the gain of using \RRR query strategy over \RR.
When we have enough samples (at least $\switchingtime{\oneagentindex}$),
Theorem~\ref{theorem:class_estimation} guarantees that we correctly learn the
class ($\class{\oneagentindex} = \classt{\oneagentindex}{t}$) with high
probability. We build upon this result to quantify the mean estimation time
complexity of our approach.

\begin{restatable}[\algo mean estimation time complexity]{theorem}{thmtwo}
  \label{thm:mean_estimation}
  Given the risk parameter $\delta$, using the \RRR query strategy and simple
  weighting, the mean estimator $\rv{\mu}{\oneagentindex}{t}$ of agent
  $\oneagentindex$ is $(\epsilon,\frac{\delta}{4})$-convergent, that is:
  \begin{align}
  \mathbbm{P}
  \big(
  \forall t > \stoppingtime{\oneagentindex}
  :
  |\rv{\mu}{\oneagentindex}{t}
  -
  \rv{\mu}{ \oneagentindex}{}|
  \leq
  \epsilon
  \big)
  >
  1 - \frac{\delta}{4},
\quad
  \text{with ~} \stoppingtime{\oneagentindex}
  =
  \max
  (
  \switchingtime{\oneagentindex}
  ,
  \frac
  {\lceil\CBinv{\delta}{\epsilon}\rceil}{|\class{\oneagentindex}|}
  +\frac{|\class{\oneagentindex}| - 1}{2}
  ).
  \label{eq:tau_a}
  \end{align}
\end{restatable}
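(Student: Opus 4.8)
The plan is to run the class-identification guarantee of Theorem~\ref{theorem:class_estimation} together with a concentration bound for the \emph{aggregated} estimator, and then to count how many samples the \RRR schedule has pooled over $\class{\oneagentindex}$ by time $t$. First I restrict to two good events. By Lemma~\ref{lemma:E_a_true}, $\pr{E_\oneagentindex}\ge 1-\errorprob/8$, and the bad event of Theorem~\ref{theorem:class_estimation} is contained in $E_\oneagentindex^c$, so on $E_\oneagentindex$ we have $\classt{\oneagentindex}{t}=\class{\oneagentindex}$ for all $t>\switchingtime{\oneagentindex}$. Second, I introduce an event $M_\oneagentindex$ stating that whenever $\classt{\oneagentindex}{t}=\class{\oneagentindex}$ the estimator lies within its confidence radius, $|\rv{\mu}{\oneagentindex}{t}-\rv{\mu}{\oneagentindex}{}|\le\CB{\errorprob}{\nobservation{\oneagentindex}{t}}$, where $\nobservation{\oneagentindex}{t}:=\sum_{\onememoryindex\in\class{\oneagentindex}}\nobservation{\oneagentindex,\onememoryindex}{t}$ denotes the total number of pooled samples. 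I will argue $\pr{M_\oneagentindex^c}\le\errorprob/8$, so that $\pr{(E_\oneagentindex\cap M_\oneagentindex)^c}\le\errorprob/4$, and it then suffices to show that on $E_\oneagentindex\cap M_\oneagentindex$ one has $|\rv{\mu}{\oneagentindex}{t}-\rv{\mu}{\oneagentindex}{}|\le\gap$ for every $t>\stoppingtime{\oneagentindex}$.

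For the event $M_\oneagentindex$: on $E_\oneagentindex$ and for $t>\switchingtime{\oneagentindex}$, the weighted estimate is $\rv{\mu}{\oneagentindex}{t}=\sum_{\onememoryindex\in\class{\oneagentindex}}\weighting{\oneagentindex,\onememoryindex}{t}\,\rv{\bar{x}}{\oneagentindex,\onememoryindex}{t}$ with the simple weights $\weighting{\oneagentindex,\onememoryindex}{t}=\nobservation{\oneagentindex,\onememoryindex}{t}/\nobservation{\oneagentindex}{t}$; since $\nobservation{\oneagentindex,\onememoryindex}{t}\,\rv{\bar{x}}{\oneagentindex,\onememoryindex}{t}$ equals the sum of the first $\nobservation{\oneagentindex,\onememoryindex}{t}$ samples of agent $\onememoryindex$, and every $\onememoryindex\in\class{\oneagentindex}$ has mean $\rv{\mu}{\oneagentindex}{}$, the estimate $\rv{\mu}{\oneagentindex}{t}$ is \emph{exactly} the empirical mean of $\nobservation{\oneagentindex}{t}$ mutually independent $\sigma$-sub-Gaussian samples centered at $\rv{\mu}{\oneagentindex}{}$. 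A concentration bound with radius $\CB{\errorprob}{\nobservation{\oneagentindex}{t}}$ then follows from the same Laplace / method-of-mixtures construction used in Lemma~\ref{lemma:E_a_true}; the subtlety is that the per-agent counts $(\nobservation{\oneagentindex,\onememoryindex}{t})_{\onememoryindex\in\class{\oneagentindex}}$ are selected adaptively (through $\switchingtime{\oneagentindex}$ and the query schedule, which depend on the data), so I would use a mixture with one mixing variable per agent in $\class{\oneagentindex}$ to get a bound uniform over all count vectors, and then note that the count vectors actually produced by \RRR are near-balanced (they differ by at most the current cycle length), so for each value of $\nobservation{\oneagentindex}{t}$ only finitely many occur and a union bound over them leaves the radius at $\CB{\errorprob}{\nobservation{\oneagentindex}{t}}$ up to the constants already absorbed in $\gamma(\errorprob)$.

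It remains to lower bound $\nobservation{\oneagentindex}{t}$. For $t>\switchingtime{\oneagentindex}$ the class has stabilized, so agent $\oneagentindex$ round-robins within $\class{\oneagentindex}$: it keeps $\nobservation{\oneagentindex,\oneagentindex}{t}=t$, and each of the other $|\class{\oneagentindex}|-1$ members was queried at a distinct one of the most recent time steps, hence carries a count of at least $t-|\class{\oneagentindex}|+1$ with these counts pairwise distinct (a slightly more careful argument exploiting the regularity of \RRR queries during the discovery phase handles the short transient just after $\switchingtime{\oneagentindex}$). Summing in the worst case,
\begin{equation*}
\nobservation{\oneagentindex}{t}\;\ge\;|\class{\oneagentindex}|\,t-\binom{|\class{\oneagentindex}|}{2}\;=\;|\class{\oneagentindex}|\Big(t-\tfrac{|\class{\oneagentindex}|-1}{2}\Big),
\end{equation*}
so $t>\tfrac{\lceil\CBinv{\errorprob}{\gap}\rceil}{|\class{\oneagentindex}|}+\tfrac{|\class{\oneagentindex}|-1}{2}$ forces $\nobservation{\oneagentindex}{t}\ge\lceil\CBinv{\errorprob}{\gap}\rceil$, and by the definition of $\lceil\CBinv{\errorprob}{\cdot}\rceil$ together with the monotonicity of $n\mapsto\CB{\errorprob}{n}$ this yields $\CB{\errorprob}{\nobservation{\oneagentindex}{t}}\le\gap$. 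Combining with the previous paragraph, on $E_\oneagentindex\cap M_\oneagentindex$ and for every $t>\stoppingtime{\oneagentindex}=\max\big(\switchingtime{\oneagentindex},\,\tfrac{\lceil\CBinv{\errorprob}{\gap}\rceil}{|\class{\oneagentindex}|}+\tfrac{|\class{\oneagentindex}|-1}{2}\big)$ one gets $|\rv{\mu}{\oneagentindex}{t}-\rv{\mu}{\oneagentindex}{}|\le\CB{\errorprob}{\nobservation{\oneagentindex}{t}}\le\gap$, which is the claimed $(\gap,\errorprob/4)$-convergence.

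I expect the crux to be the second paragraph: making the aggregate concentration event $M_\oneagentindex$ rigorous despite adaptively chosen, data-dependent per-agent counts, while keeping the confidence radius at $\CB{\errorprob}{\nobservation{\oneagentindex}{t}}$ --- that is, \emph{without} the $\sqrt{|\class{\oneagentindex}|}$ loss a naive triangle inequality over the per-agent radii of $E_\oneagentindex$ would incur. The \RRR sample-count bookkeeping of the third paragraph is more routine, but its transient around $\switchingtime{\oneagentindex}$ is precisely what the $\tfrac{|\class{\oneagentindex}|-1}{2}$ correction term in $\stoppingtime{\oneagentindex}$ accounts for.
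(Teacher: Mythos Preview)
Your proof follows essentially the same three-step structure as the paper's: class stabilization via $E_\oneagentindex$ and Theorem~\ref{theorem:class_estimation}, concentration of the pooled estimator with radius $\CB{\delta}{\sum_{\onememoryindex\in\class{\oneagentindex}} \nobservation{\oneagentindex,\onememoryindex}{t}}$, and the \RRR sample-count lower bound $\sum_\onememoryindex \nobservation{\oneagentindex,\onememoryindex}{t}\ge|\class{\oneagentindex}|\,t-\binom{|\class{\oneagentindex}|}{2}$, yielding the identical formula for $\stoppingtime{\oneagentindex}$.

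On the concentration step you are actually more careful than the paper. The paper simply asserts that $|\rv{\mu}{\oneagentindex}{t}-\rv{\mu}{\oneagentindex}{}|\le\CB{\delta}{\sum_\onememoryindex \nobservation{\oneagentindex,\onememoryindex}{t}}$ ``when $E_\oneagentindex$ holds'' and then books $\errorprob/8+\pr{\bar E_\oneagentindex}=\errorprob/4$ in the final union bound; but $E_\oneagentindex$ only controls each $\rv{\bar{x}}{\oneagentindex,\onememoryindex}{t}$ individually, and combining these via the triangle inequality would cost precisely the $\sqrt{|\class{\oneagentindex}|}$ factor you warn against. Your introduction of a separate event $M_\oneagentindex$ with its own $\errorprob/8$ budget is the right way to make that step rigorous and is presumably what the paper's extra $\errorprob/8$ is meant to pay for. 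The data-dependence of the per-agent counts that you flag is genuine but mild: once $t>\switchingtime{\oneagentindex}$ the \RRR counts within $\class{\oneagentindex}$ are a deterministic function of $t$ and of the (random) stabilization time, so a single time-uniform Laplace bound applied to the interleaved stream of samples from $\class{\oneagentindex}$, or a light union bound over the finitely many near-balanced count vectors compatible with \RRR, handles it without changing the radius.
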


Several comments are in order. First, recall that collaboration induces a
bias in mean estimation before class estimation time. Because the problem structure is unknown, any collaborative algorithm that aggregate observations from different agents will suffer from this bias, but the bias vanishes as soon as the class is estimated and we outperform local estimation.

Then, to interpret the guarantees provided by
Theorem~\ref{thm:mean_estimation}, it
is useful to compare them with the local estimation baseline, which has time
complexity $\lceil\CBinv{\delta}{\epsilon}\rceil\rev{=\widetilde{O}
(1/\epsilon^2)}$. Inspecting \eqref{eq:tau_a}, we see
that our approach has a time complexity of $\rev{\widetilde{O}
(\max\{1/\Delta_a^2,1/\epsilon^2|\class{\oneagentindex}|\})}$. In other words,
it is faster than local estimation as long as the time $
\switchingtime{\oneagentindex}$
needed to correctly \rev{identify} the class $
\class{\oneagentindex}$ is smaller than $\lceil\CBinv
{\delta}{\epsilon}\rceil$, that is precisely when:
\begin{align}
  \epsilon
  <
  \CB{\delta}
  {
    \nobservation{\oneagentindex, \oneagentindex}{\star}
    +
    \nagent
    -1
    -
    \sum_{\onememoryindex \in [\nagent]\backslash\{\class{\oneagentindex}\}}
    \indicator{\nobservation{\oneagentindex, \oneagentindex}{\star}
    >
    \nobservation{\oneagentindex, \onememoryindex}{\star}
    +
    \nagent
    -
    1
    }
  }.
  \label{eq:epsilon_threshold}
\end{align}
This condition\rev{, which roughly amounts to $\epsilon < \Delta_a$,} relates
the
desired precision of the solution $\epsilon$
to the problem structure captured by the gaps $
\{\Delta_{\oneagentindex,\onememoryindex}\}_{\onememoryindex\in [A]}$ between the true
means through $\{\nobservation{\oneagentindex, \onememoryindex}{\star}\}_
{\onememoryindex\in [A]}$ (see Definition~\ref{def:needed_samples}).
We will see in our experiments that our theory predicts quite well whether an
agent empirically benefits from collaboration.

Remarkably, \rev{our approach can be up to $|\class{\oneagentindex}|$ times
faster
than local estimation: this happens roughly when $\epsilon < \Delta_a / 
\sqrt{|\class{\oneagentindex}|}$, i.e.,} for
large enough gaps or small enough $\epsilon$. In this regime, \emph{the
speed-up achieved by
our approach is nearly optimal}. Indeed, the time complexity of the oracle
weighting baseline introduced in Section~\ref{sec:baselines} is precisely $\frac
  {\lceil\CBinv{\delta}{\epsilon}\rceil}{|\class{\oneagentindex}|}
  +\frac{|\class{\oneagentindex}| - 1}{2}\rev{=\widetilde{O}(1/\epsilon^2|
  \class{\oneagentindex}|)}$, just like our approach. Note that in a full
  information
  setting where agent $a$ would know $\class{\oneagentindex}$ \emph{and} would
also  have
  access
  to up-to-date samples from all agents at each step, the time complexity
  would be only slightly smaller, namely $\frac{\lceil\CBinv{\delta}
  {\epsilon}\rceil}
{|\class{\oneagentindex}|}$.

\rev{
\begin{remark}[Frequency of communication]
\label{rem:sample_vs_com}
For simplicity, we consider that agents communicate each
time they collect a new sample, which is standard in the literature of
collaborative learning (see for instance the
collaborative MAB approaches discussed in Section~\ref{sec:related-work}).
However,
different trade-offs between communication and data collection can be
considered. In particular, it is straightforward to adapt the
setting and our results to the case where each agent collects $m$ samples
between each communication: it amounts to multiplying by $m$ the number of
observations in our confidence intervals and empirical estimates. 
This provides a way to reduce communication, as well as to mitigate
privacy concerns by ensuring that only sufficiently aggregated
quantities are exchanged (even in early rounds).
Extensions to cases where the number of samples between each communication is
random and/or varies across agents are left for future work.
\end{remark}
}



\section{Numerical Results}
\label{sec:numerical-results}
In this section, we provide numerical experiments on synthetic data to
illustrate our theoretical results and assess the practical performance of our
proposed algorithms.\footnote{The code can be found at \href{https://github.com/llvllahsa/CollaborativePersonalizedMeanEstimation}{https://github.com/llvllahsa/CollaborativePersonalizedMeanEstimation}}

\subsection{Experimental Setting}
We consider $\nagent = 200$ agents, a time
horizon of
$2500$
 steps and 
 a risk parameter $\delta=0.001$.
 The personal distributions of agents are all Gaussian with variance
 $\sigma^2=0.25$ and belong to one of 3 classes with
means $0.2$, $0.4$ and $0.8$. The class membership of each agent (and thus
the value of its true mean) is chosen uniformly at random among the three
classes. We thus
obtain roughly balanced class sizes. While the evaluation presented in
this section
focuses on this 3-class problem, in the appendix we provide
additional results on a simpler 2-class problem (with means $0.2$ and $0.8$)
where the benefits of our algorithm is even more significant.

We consider several variants of our algorithm \algo: we compare query
strategies \RR and \RRR with simple weighting, and also evaluate the use of
soft and aggressive weighting schemes in the \RRR case.  This gives 4
 variants of our algorithm: \RR, \RRR, \SRRR and \ARRR.

Regarding competing approaches, we recall that our setting is novel
and we are not aware of existing algorithms addressing the same problem. We
can however compare against two baseline strategies.
The \Local baseline corresponds to the case of no collaboration. On the
other hand, the \Oracle
baseline represents an upper bound on the achievable performance by any
collaborative algorithm as it is given as input the true class membership of
each agent and thus does not need to perform class estimation.

 All algorithms are compared across 20 random runs
 corresponding to 20 different samples. In
a given run, at each time step, each agent receives the same sample for all
algorithms.

\subsection{Class Estimation}
\label{sec:class-identification}

We start by investigating the performance in class estimation.
In this experiment, only \RR and \RRR are shown since the different weighting
schemes have no effect on class estimation.

We first look at how well an agent $a$ estimates its true class $
\class{\oneagentindex}$ with its heuristic class $
\classt{\oneagentindex}
{t}$ across time. To measure this, we
consider the precision at time $t$ computed as follows:
\begin{align}
	\text{precision}_{\classt{\oneagentindex}{t}} = \frac{|
    \classt{\oneagentindex}{t} \cap  \class{\oneagentindex}|}{|\classt{\oneagentindex}{t}|}.
    \label{eq:precision}
\end{align}
We compute the average and standard deviation of \eqref{eq:precision} across
runs, and then average these over all agents.
Figure~\ref{fig:precision3} shows how the precision of class
estimation varies
across time as agents progressively remove others from their heuristic
class and eventually identify their true class. 
\rev{As can be seen clearly in Figure~\ref{fig:identification} in the
appendix,} the
classes $0.2$ and $0.8$ are separated very early, quickly followed by $0.4$
and $0.8$ and finally, after sufficiently many samples have been collected,
the pair with the smallest gap ($0.4$ and $0.2$).

We also observe that
\RR and \RRR only differ slightly in the last time steps before classes are
identified.
\rev{
  This is in line with Theorem~\ref{theorem:class_estimation}, which shows
  that class estimation time mainly depends on $n^{\star}_{a,a}$, the time
  needed to eliminate the agent with smallest nonzero gap. This dominant term
  and the fact that querying an agent at time $t$ yields the full statistics of observations of this agent up to time $t$ explain that the gain of \RRR is small compared to vanilla \RR.
}

\begin{figure*}[t]
    \centering
	\subfigure[Class estimation precision over all agents]{\includegraphics
    [width=.38\textwidth]{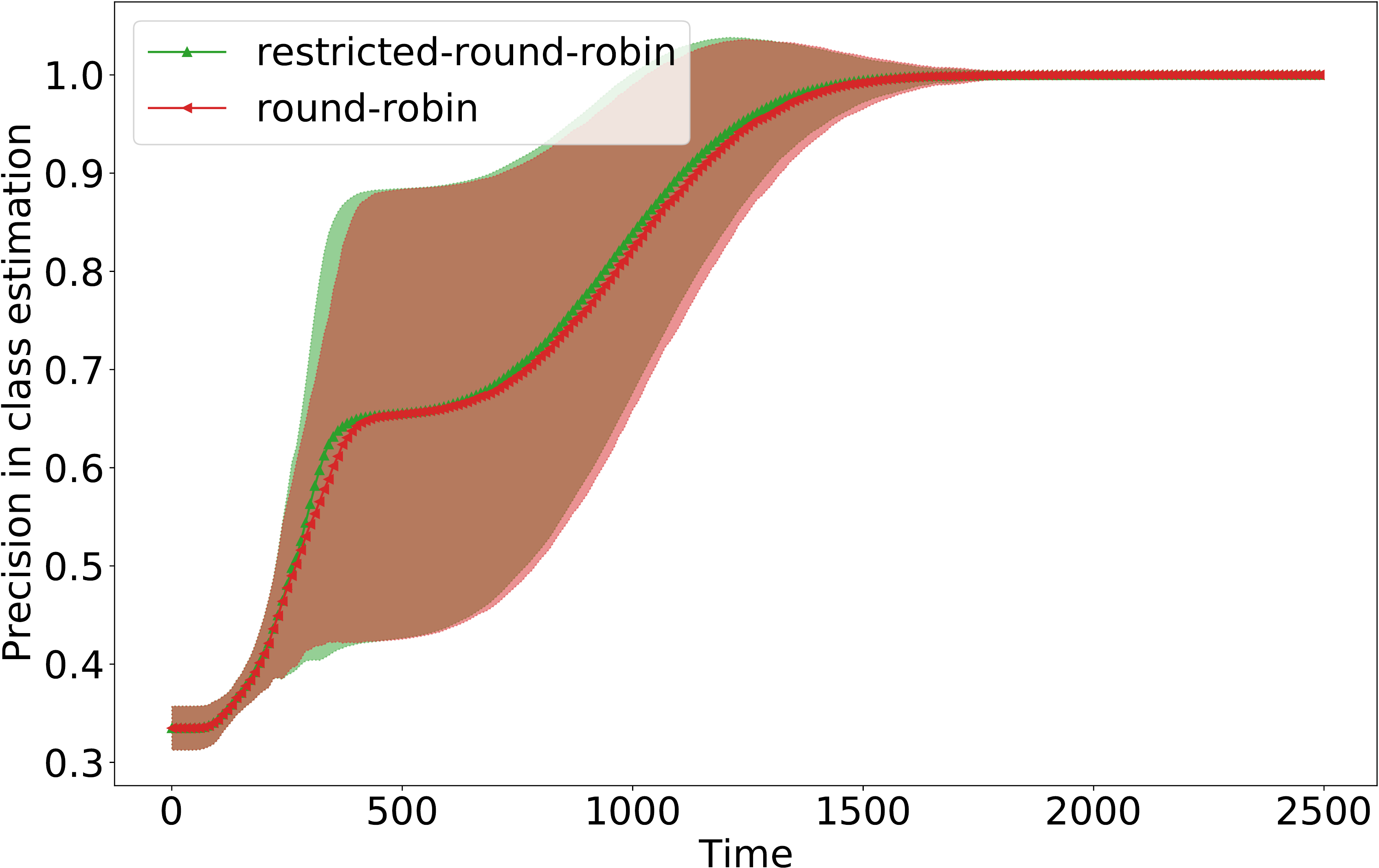} 
    \label{fig:precision3}}
    \hspace*{.05\textwidth}
	\subfigure[Error in mean estimation over all agents]
    {\includegraphics
	[width=.4\textwidth]{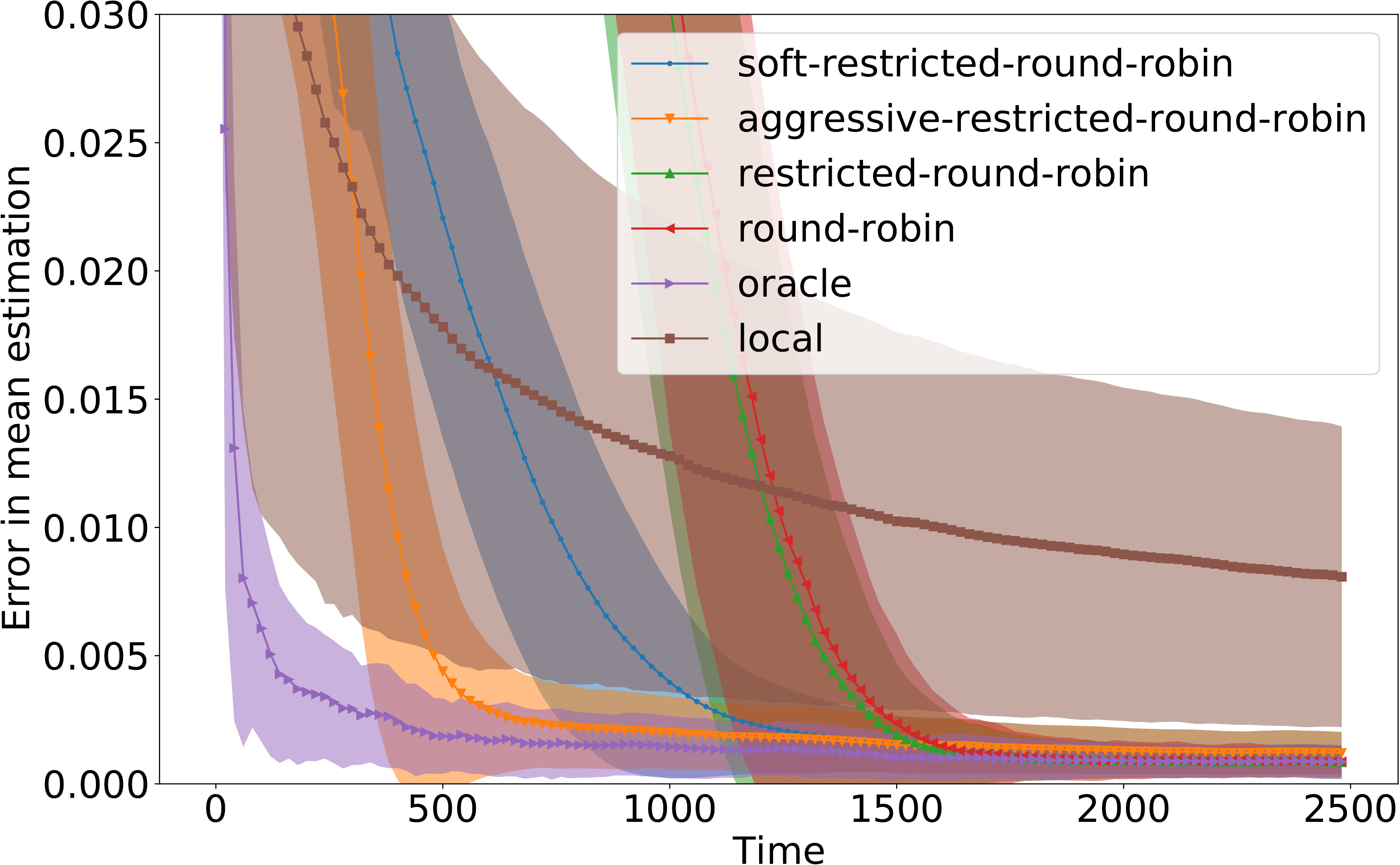}
    \label{fig:estimation3}}
	\caption{Results across time on the 3-class problem (Gaussian
   distributions
   with true means $0.2$, $0.4$, $0.8$). Thanks to our collaborative
   algorithms (\SRRR,
\ARRR, \RRR, \RR), agents are able to estimate their true class
(Fig.~\ref{fig:precision3}) and thereby obtain accurate mean estimates
much more quickly than using purely local estimation (Fig.~\ref{fig:estimation3}).}
\end{figure*}

\begin{table}[t]
    \centering
    \small
    \rev{
    \begin{tabular}{llllllll}
        \toprule
        Algorithm & \multicolumn{2}{c}{class 0.2} & \multicolumn{2}{c}{class 0.4} & \multicolumn{2}{c}{class 0.8}\\
            & avg/std & max & avg/std & max & avg/std & max\\
        \midrule
        Round-Robin (RR) & $1378 \pm 194$& $2150$&$1382 \pm
        191$&$2159$ & $407 \pm 40$& $662$ \\
        Restricted-RR & $1376 \pm 211$& $2163$ & $1379 \pm 210$&
        $2245$ & $373 \pm 55$ & $934$\\
        \midrule
        \midrule
        Theoretical Restricted-RR ($\zeta_a$) & \multicolumn{2}{c}{$3878$} & 
        \multicolumn{2}{c}{$3878$} & \multicolumn{2}{c}{$1085$} \\
        \bottomrule
    \end{tabular}
    }
    \caption{\rev{Empirical class estimation times of Round-Robin and
    Restricted-RR
    on the 3-class problem (Gaussian
   distributions with true means $0.2$, $0.4$, $0.8$). We report
   the average, standard deviation and maximum across agents and runs. We also report the high-probability class
   estimation time $\zeta_a$ given by our theory.}
    \label{tab:class_est}}
\end{table}

\rev{Table~\ref{tab:class_est} shows statistics about the average, standard
deviation and maximum time taken by an agent to correctly identify its
class. As expected, agents from class 0.8 identify their class much faster as
the gaps with respect to other classes are larger. Table~\ref{tab:class_est}
also reports the high-probability class estimation time $\zeta_a$ of \RRR
given by our theoretical analysis
(Theorem~\ref{theorem:class_estimation}). This theoretical value is rather
close to the maximum value we observe: although these values are not directly
comparable, it suggests that our analysis captures the
correct order of magnitude.}

\subsection{Mean Estimation}

We now turn to our main objective: mean estimation.
The error of an agent $a$ at time $t$ is evaluated as the absolute
difference
of its mean
estimate with its true mean:
\begin{align}
    \text{error}_{\oneagentindex}^{t}
    =
    |
    \rv{\mu}{\oneagentindex}{t}
    -
    \rv{\mu}{\oneagentindex}{}
    |.
\end{align}
Similar to above, we compute the average and standard deviation of this
quantity across runs,
and then for
each time step we report in
Figure~\ref{fig:estimation3} the average of these quantities across all
agents for the different algorithms (\SRRR, 
\ARRR, \RRR, \RR, \Oracle,  and \Local).

As expected, all variants of \algo suffer from mean estimation
bias in the early steps (due to optimistic class estimation). However, as
the estimated class of each agent gets more precise (see
Figure~\ref{fig:precision3}), agents progressively eliminate this bias  and
eventually learn estimates with
similar error and variance as the \Oracle
baseline. On the other hand,
\Local does not have estimation bias (hence achieves smaller error on average
in early rounds) but exhibits much higher variance, and its
average error converges very slowly towards zero. These results show the
ability of our collaborative algorithms to construct highly accurate
mean estimates much faster than without collaboration. We can also see that
$\SRRR$
and $
\ARRR$ converge much quicker to low error estimates than $\RRR$. This shows
that our proposed heuristic
weighting schemes successfully reduce the relative weight given to agents
that actually belong to different classes well before they are identified as
such with sufficient confidence. The aggressive weighting scheme is
observed to perform best in practice.

In the appendix, we plot the error in mean estimation for each class
separately and observe that agents with mean
$0.8$ (i.e., in the class that is easiest to discriminate from
others) are the fastest to reach highly accurate estimates, followed by those
with mean $0.4$, and finally those with mean $0.2$.

 \begin{table*}[t]
    \centering
    \small
    \begin{tabular}{lllll}
        \toprule
        Algorithm & \multicolumn{2}{c}{$\mathrm{conv}(0.1)$} &
        \multicolumn{2}{c}{$\mathrm{conv}(0.01)$}\\
        & avg/std & \rev{max} & avg/std & \rev{max}\\
        \midrule
        Round-Robin (RR) & $417 \pm 230$& \rev{$1239$} & $916 \pm 427$
        & \rev{$2088$}
        \\
        Restricted-RR & $405 \pm 227$& \rev{$1175$} & $894 \pm 438$ &
        \rev{$1925$}\\
        Soft-Restricted-RR & $82 \pm 48$ & \rev{$340$} & $608
        \pm 290$ & \rev{$1432$}\\
        Aggressive-Restricted-RR& $56 \pm 39$ & \rev{$340$}& $
        \mathbf{335 \pm 127}$ & \rev{$\mathbf{958}$}\\
        \midrule
        Local & $\mathbf{41 \pm 39}$ & \rev{$\mathbf{289}$}& $4494 \pm
        3945$ & \rev{$28616$} \\
        \midrule
        Oracle & $\mathit{5\pm 4}$& \rev{$\mathit{19}$}&$\mathit{98 \pm
        63}$ & \rev{$\mathit{303}$}\\
        \midrule
        \midrule
        \rev{Theoretical Restricted-RR ($\tau_a$)} & \multicolumn{2}{c}{
        \rev{$3878$}‌}& 
        \multicolumn{2}{c}{\rev{$33406$}} \\
        \rev{Theoretical Local ($\tau_a$)} & \multicolumn{2}{c}{\rev{$885$}}& 
        \multicolumn{2}{c}{\rev{$100216$}}\\
        \bottomrule
    \end{tabular}
    \caption{Empirical convergence times (see Eq.~\ref{eq:convergencetime3}) of
    different algorithms on the 3-class problem (Gaussian
   distributions with true means $0.2$, $0.4$, $0.8$) for a target estimation error of $\epsilon=0.1$ 
    (unfavorable regime) and
    $\epsilon=0.01$ (favorable regime). We report
   the average, standard deviation \rev{and maximum} across agents and runs.
   \rev{We also
   report the high-probability mean
   estimation times $\tau_a$ given by our theory for \RRR and \Local.} In line
   with
   our theory, we see that our
   approach largely
    outperforms the local estimation baseline in the favorable regime and
    remains competitive in the unfavorable regime.}
    \label{table:epsilonthreshold}
\end{table*}

Finally, we quantitatively compare the convergence time of different
algorithms with an empirical
measure inspired by our theoretical PAC criterion 
(Definition~\ref{def:pac}).
We define the 
\emph{empirical convergence} time of an agent as the earliest time
step where the
estimation
error of the agent always stays lower than some $\epsilon$:
\begin{gather}
\mathrm{conv}_{a}(\epsilon)
= \min
\{
\tau \in \mathbbm{N}: \forall t \geq \tau,
\text{error}_{\oneagentindex}^{t}
\leq
\epsilon
\}.
\label{eq:convergencetime3}
\end{gather}
We denote by 
$\mathrm{conv}(\epsilon)$ the
average of the above quantity across all runs and all agents.

Table~\ref{table:epsilonthreshold} reports the average, standard
deviation and maximum empirical convergence time across agents and runs for
two values of $\epsilon$ \rev{(we also provide per-class tables in the
appendix)}.
These values
were chosen to reflect
the two different regimes suggested by our theoretical analysis. Indeed,
recall that our
theory gives a criterion to predict whether our collaborative algorithms will
outperform the \Local baseline: this is the case when the desired accuracy of
the solution $\epsilon$ is small enough for the given problem instance (see
Eq.~\ref{eq:epsilon_threshold}). For the problem considered here,
Eq.~\eqref{eq:epsilon_threshold} gives that \RRR will outperform \Local
for all agents as soon as $\epsilon$ is smaller than $0.049$.
We thus choose $\epsilon=0.01$
as the favorable regime (where we should beat \Local) and $\epsilon=0.1$ as
the unfavorable regime. \rev{We provide the corresponding high-probability
mean
estimation times $\tau_a$ of \RRR and \Local given by our theoretical analysis
(Theorem~\ref{thm:mean_estimation}).}

The results in Table~\ref{table:epsilonthreshold} are
consistent with our theory. All variants of our algorithms largely outperform
\Local
for $\epsilon=0.01$,\footnote{\rev{We had to run \Local for a much larger time
horizon of 30000 steps for all runs to converge to
accuracy $\epsilon=0.01$.}} while \Local is better for $\epsilon=0.1$ as
agents can
reach this precision using only their own samples faster than they can
reliably estimate
their class. Overall, $\RRR$
performs marginally better than \RR, while \SRRR and \ARRR significantly
outperform \RR and \RRR in both cases. \rev{Note that \SRRR and \ARRR perform
roughly the same as \Local in the unfavorable
regime, and get very close to the performance of the \Oracle baseline in the
favorable regime.} These results again show the relevance of our collaborative
algorithms and heuristic weighting schemes. 
\rev{We observe that our theoretical results get looser as
$\epsilon\rightarrow0$, which is somewhat expected.}


\section{Extension to Imperfect Classes}
\label{sec:imperfect}

So far we have assumed that two agents are in the same class if their personal
distributions have \emph{exactly} the same mean, which can be restrictive for
some use-cases. In this section, we show that we can extend the problem setup
and our approach to the case where \emph{two agents are considered to be in
the same class if their means are close enough} and agents seek to 
\emph{estimate the mean of their class}.

Formally, we define a new notion of similarity class parameterized by a radius
$\eta$, which generalizes our previous notion introduced in
Definition~\ref{def:true_similarity}.
\begin{definition}
	Given $\eta>0$, the \emph{$\eta$-similarity class} of agent
	$\oneagentindex$ is given
	by:
	\begin{align*}
	\class{\eta,\oneagentindex} 
	= 
	\{
	\onememoryindex \in [\nagent]: 
	\Delta_{\oneagentindex, \onememoryindex} \leq \eta
	\},
	\end{align*}
	where $\Delta_{\oneagentindex,\onememoryindex}
	= 
	|
	\mu_\oneagentindex
	-
	\mu_\onememoryindex
	|$ is the gap between the means of agent $a$ and agent $l$.
	\label{def:eta_similarity}
\end{definition}

This notion of ``imperfect'' similarity class allows to capture situations
where \emph{clusters} of agents have similar (but not necessarily equal)
means. Such discrepancies between the means of agents in the same class may
for instance stem from the presence of local measurement bias
\citep[e.g., due to local variations in the environment, see][]
{bias_measure}. They can also be used to model groups of
agents with similar preferences, behavior, or goals, in applications like 
collaborative filtering \citep{DBLP:journals/advai/SuK09}, 

In this context, it is natural to slightly redefine the estimation objective. Instead of estimating its personal mean $\mu_a$ as considered so far,
each agent $a$ aims to estimate the mean of its class:
\begin{equation}
\label{eq:mean_class}
\rv{\mu}{\eta, \oneagentindex}{}=\frac{1}{|\class{\eta,\oneagentindex}|}\sum_
{l\in \class{\eta,\oneagentindex}} \mu_l.
\end{equation}
For instance, in the presence of (centered) local measurement bias, estimating
the class mean (instead of the local mean) allows to debias the estimate.

\begin{remark}[Non-separated clusters]
We do not formally require that the $\eta$-similarity classes form separated
clusters, in the sense that for
three distinct agents $a,l,i\in[A]$ we may have
simultaneously $i\in\class{\eta,a}$, $i\in \class{\eta,l}$ and $
\class{\eta,a}\neq \class{\eta,l}$. This happens when $\Delta_
{\oneagentindex,i}\leq \eta$, $\Delta_{l,i}\leq \eta$ and $\eta < \Delta_
{\oneagentindex,l}\leq 2\eta$. In this case, the ``class'' of an agent
simply corresponds to a ball of radius $\eta$ around its mean, which
potentially overlaps with others and thus violates the
transitivity property of equivalence classes. For consistency with the rest of
the paper and with an slight abuse of terminology, we continue to use the term
``class''. Although the case of separated clusters appears more natural, we
note that our proposed approach will still work in the non-separated setting,
in the sense that agents will correctly estimate the mean of their class as
defined in Eq.~\ref{eq:mean_class}.
\end{remark}

Based on the above, we can adapt the notion of optimistic similarity class 
(Definition~\ref{def:optimistic_similarity}) and the
condition on the number of samples required for this optimistic
class to coincide with the true class
(Definition~\ref{def:needed_samples}) by incorporating $\eta$.

\begin{definition} 
	\label{def:eta_class_heuristic}
	The \emph{$\eta$-optimistic similarity class}  from the perspective of agent
	$\oneagentindex$ at time $t$ is defined as:
	\begin{align*}
	\classt{\eta, \oneagentindex}{t}
	=
	\{
	\onememoryindex \in [\nagent]
	:
	\distance{\oneagentindex, \delta}{t}{\onememoryindex} \leq \eta
	\}.
	\end{align*}
\end{definition}

\begin{definition}
	From the perspective of agent $\oneagentindex$ and at time $t$, event $G_{\eta, \oneagentindex}^t$ is defined as:
	\begin{align}
	G_{\eta, \oneagentindex}^t
	=
	\underset{\onememoryindex \in [\nagent]}{\bigcap}
	\nobservation{\oneagentindex, \onememoryindex}{t}
	>
	\nobservation{\oneagentindex, \onememoryindex}{\eta},
	\end{align}
	\begin{equation*}
	\text{where ~~~~~~~} \nobservation{\oneagentindex, \onememoryindex}
	{\eta}
	= 
	\begin{cases}
	\lceil \CBinv{\delta}{\frac{\Delta_{\oneagentindex, \onememoryindex} - \eta}{4}}\rceil
	& \text { if }\onememoryindex \notin \class{\oneagentindex},
	\\
	\lceil \CBinv{\delta}{\frac{\Delta_{\eta, \oneagentindex} - \eta }{4}}\rceil
	& \text{ otherwise},
	\end{cases}
	\end{equation*}
	with  $\Delta_{\eta, \oneagentindex} = \min_{\onememoryindex \in 
		[\nagent]\backslash\class{\eta, \oneagentindex}} \Delta_{\oneagentindex, \onememoryindex}$.
	\label{def:eta_needed_samples}
\end{definition}

\begin{restatable}[Class membership rule]{lemma}{lemmathree}
	Under $E_{\oneagentindex}^{t} \land G_{\eta, \oneagentindex}^t$ and
	$\forall \onememoryindex \in [\nagent]$ and at time $t$:
	$\distance{\oneagentindex,\delta}{t}{\onememoryindex} > \eta \iff
	\onememoryindex \in [\nagent]\backslash\class{\eta, \oneagentindex}$.
	\label{lemma:eta_class-membership-rule}
\end{restatable}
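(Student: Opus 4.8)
The plan is to follow the proof of Lemma~\ref{lemma:class-membership-rule} essentially verbatim in structure, carrying the extra additive slack $\eta$ through every inequality. The equivalence splits into two implications that I would prove separately: (i) $\onememoryindex\in\class{\eta,\oneagentindex}$ implies $\distance{\oneagentindex,\delta}{t}{\onememoryindex}\leq\eta$, and (ii) $\onememoryindex\notin\class{\eta,\oneagentindex}$ implies $\distance{\oneagentindex,\delta}{t}{\onememoryindex}>\eta$; taking the contrapositive of (i) then supplies the remaining direction of the ``iff''. The argument is entirely deterministic once we condition on $E_{\oneagentindex}^t\land G_{\eta,\oneagentindex}^t$: as in the exact-class case, (i) will use only $E_{\oneagentindex}^t$, whereas (ii) additionally invokes $G_{\eta,\oneagentindex}^t$ to push the two confidence radii below $\frac{\Delta_{\oneagentindex,\onememoryindex}-\eta}{4}$.

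For (i), fix $\onememoryindex$ with $\Delta_{\oneagentindex,\onememoryindex}\leq\eta$. Writing $\rv{\bar{x}}{\oneagentindex,\oneagentindex}{t}-\rv{\bar{x}}{\oneagentindex,\onememoryindex}{t}=(\rv{\bar{x}}{\oneagentindex,\oneagentindex}{t}-\rv{\mu}{\oneagentindex}{})+(\rv{\mu}{\oneagentindex}{}-\rv{\mu}{\onememoryindex}{})+(\rv{\mu}{\onememoryindex}{}-\rv{\bar{x}}{\oneagentindex,\onememoryindex}{t})$ and bounding the first and last terms using $E_{\oneagentindex}^t$, the triangle inequality gives $|\rv{\bar{x}}{\oneagentindex,\oneagentindex}{t}-\rv{\bar{x}}{\oneagentindex,\onememoryindex}{t}|\leq\Delta_{\oneagentindex,\onememoryindex}+\CB{\delta}{\nobservation{\oneagentindex,\oneagentindex}{t}}+\CB{\delta}{\nobservation{\oneagentindex,\onememoryindex}{t}}$. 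Substituting into the definition of the optimistic distance (Definition~\ref{def:optimistic_similarity}), the two $\CB{\delta}{\cdot}$ terms cancel and we obtain $\distance{\oneagentindex,\delta}{t}{\onememoryindex}\leq\Delta_{\oneagentindex,\onememoryindex}\leq\eta$ (this also trivially covers $\onememoryindex=\oneagentindex$).

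For (ii), fix $\onememoryindex$ with $\Delta_{\oneagentindex,\onememoryindex}>\eta$; since $\class{\oneagentindex}\subseteq\class{\eta,\oneagentindex}$ we also have $\onememoryindex\notin\class{\oneagentindex}$. On $E_{\oneagentindex}^t$ the reverse triangle inequality gives $|\rv{\bar{x}}{\oneagentindex,\oneagentindex}{t}-\rv{\bar{x}}{\oneagentindex,\onememoryindex}{t}|\geq\Delta_{\oneagentindex,\onememoryindex}-\CB{\delta}{\nobservation{\oneagentindex,\oneagentindex}{t}}-\CB{\delta}{\nobservation{\oneagentindex,\onememoryindex}{t}}$, hence $\distance{\oneagentindex,\delta}{t}{\onememoryindex}\geq\Delta_{\oneagentindex,\onememoryindex}-2\CB{\delta}{\nobservation{\oneagentindex,\oneagentindex}{t}}-2\CB{\delta}{\nobservation{\oneagentindex,\onememoryindex}{t}}$. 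I would then show $G_{\eta,\oneagentindex}^t$ forces both $\CB{\delta}{\nobservation{\oneagentindex,\onememoryindex}{t}}<\frac{\Delta_{\oneagentindex,\onememoryindex}-\eta}{4}$ and $\CB{\delta}{\nobservation{\oneagentindex,\oneagentindex}{t}}<\frac{\Delta_{\oneagentindex,\onememoryindex}-\eta}{4}$, which combined with the previous display yields $\distance{\oneagentindex,\delta}{t}{\onememoryindex}>\Delta_{\oneagentindex,\onememoryindex}-(\Delta_{\oneagentindex,\onememoryindex}-\eta)=\eta$. For the $\onememoryindex$-slot this is immediate: $G_{\eta,\oneagentindex}^t$ gives $\nobservation{\oneagentindex,\onememoryindex}{t}>\nobservation{\oneagentindex,\onememoryindex}{\eta}\geq\CBinv{\delta}{\frac{\Delta_{\oneagentindex,\onememoryindex}-\eta}{4}}$ by Definition~\ref{def:eta_needed_samples}, and by the definition of $\CBinv{\delta}{\cdot}$ this gives the claimed bound. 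For the self-slot I would use that $\nobservation{\oneagentindex,\oneagentindex}{\eta}$ is governed by $\Delta_{\eta,\oneagentindex}=\min_{\onememoryindex'\notin\class{\eta,\oneagentindex}}\Delta_{\oneagentindex,\onememoryindex'}\leq\Delta_{\oneagentindex,\onememoryindex}$, and that $\CB{\delta}{n}$ is decreasing in $n$ so $\CBinv{\delta}{x}$ is non-increasing in $x$; then $\nobservation{\oneagentindex,\oneagentindex}{t}>\nobservation{\oneagentindex,\oneagentindex}{\eta}\geq\CBinv{\delta}{\frac{\Delta_{\eta,\oneagentindex}-\eta}{4}}\geq\CBinv{\delta}{\frac{\Delta_{\oneagentindex,\onememoryindex}-\eta}{4}}$, and the same step as above closes the bound.

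I expect this to be largely routine, since everything reduces to two applications of the triangle inequality plus inversion of $\CB{\delta}{\cdot}$. The only points requiring care are the monotonicity bookkeeping in the self-slot case — confirming that the single quantity $\Delta_{\eta,\oneagentindex}$ appearing in Definition~\ref{def:eta_needed_samples} indeed dominates the per-$\onememoryindex$ requirement for every $\onememoryindex\notin\class{\eta,\oneagentindex}$ — together with the integer rounding implicit in $\lceil\CBinv{\delta}{\cdot}\rceil$ and the boundary convention $\CB{\delta}{0}=+\infty$, which is what guarantees that every memory entry invoked in step (ii) has already been queried at least once, so that the corresponding $\CB{\delta}{\cdot}$ is finite and the strict inequalities are legitimate.
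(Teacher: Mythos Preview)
Your proposal is correct and mirrors the paper's approach, which proves the $\eta$-version exactly as the $\eta=0$ case (via the analogues of Lemmas~\ref{lem:nstar} and~\ref{lem:in_ca}): direction (i) uses only $E_{\oneagentindex}^t$ and the upper bound \eqref{eq:updelta}, direction (ii) uses the lower bound \eqref{eq:lowdelta} together with $G_{\eta,\oneagentindex}^t$.

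One simplification you overlook in direction (ii): the monotonicity bookkeeping on $\Delta_{\eta,\oneagentindex}$ for the self-slot is unnecessary. By construction of the algorithm, $\nobservation{\oneagentindex,\oneagentindex}{t}=t\geq\nobservation{\oneagentindex,\onememoryindex}{t}$ always holds, so monotonicity of $\CB{\delta}{\cdot}$ gives $\CB{\delta}{\nobservation{\oneagentindex,\oneagentindex}{t}}\leq\CB{\delta}{\nobservation{\oneagentindex,\onememoryindex}{t}}$ directly, and hence $\distance{\oneagentindex,\delta}{t}{\onememoryindex}\geq\Delta_{\oneagentindex,\onememoryindex}-4\CB{\delta}{\nobservation{\oneagentindex,\onememoryindex}{t}}$. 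This is exactly how the paper handles it in the proof of Lemma~\ref{lem:nstar}, so only the $\onememoryindex$-slot clause of $G_{\eta,\oneagentindex}^t$ is actually needed; your route through $\nobservation{\oneagentindex,\oneagentindex}{\eta}$ works but is more circuitous.
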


We can see from the above that ruling out an agent $l$ from the optimistic
class $\class{\eta, \oneagentindex}$ requires more samples for larger $\eta$,
which is expected as the size of the confidence interval needs to be smaller
to make this decision reliably.

With these tools in place, we can use our collaborative mean
estimation algorithm ColME (Algorithm~\ref{algorithm:main}) presented before, 
with only minor modifications: we simply need to replace the
notion of optimistic similarity class by the $\eta$-version of
Definition~\ref{def:eta_class_heuristic}, and compute the estimate $\rv{\mu}
{\eta, \oneagentindex}{t}$ at time $t$ using a simple
\emph{class-uniform
weighting scheme}
	$\weighting{\oneagentindex, \onememoryindex}{t}
	=
	\frac{1}{|\classt{\eta, \oneagentindex}{t}|}$ to match the objective in
	Eq.~\ref{eq:mean_class}. We refer to this algorithm as \etaalgo. Note that
	$\eta$ becomes a parameter of the
	algorithm, allowing to choose the desired radius for the class structure.

We can now state the class and mean estimation complexity of \etaalgo. The
proofs can be found in the appendix.

\begin{restatable}[\etaalgo class estimation time complexity]{theorem}{thmthree}
	\label{theorem:eta_class_estimation}
	For any $\delta \in (0,1)$, employing \RRR query strategy, we have:
	\begin{equation}
	\pr{
		\exists t >\switchingtime{\oneagentindex}^{\eta}: \classt{\eta, \oneagentindex}{t}
		\neq \class{\eta, \oneagentindex}
	}
	\leq
	\frac{\delta}{8},
	\quad\text{with ~}
	\switchingtime{\oneagentindex}^{\eta}
	=
	\nobservation{\oneagentindex, \oneagentindex}{\eta} 
	+
	\nagent -1 
	-
	\sum_{\onememoryindex \in [\nagent]\setminus\class{\eta, \oneagentindex}}
	\indicator
	{\nobservation{\oneagentindex, \oneagentindex}{\eta}
		>
		\nobservation{\oneagentindex, \onememoryindex}{\eta} 
		+
		\nagent - 1 
	}.
	\label{eq:eta_switchingtime}
	\end{equation}
\end{restatable}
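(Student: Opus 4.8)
The plan is to mirror the proof of Theorem~\ref{theorem:class_estimation} almost verbatim, substituting the $\eta$-versions of the relevant quantities throughout. First I would invoke Lemma~\ref{lemma:E_a_true} to assert that the good event $E_{\oneagentindex}$ (all confidence intervals held by agent $a$ contain the true means, at all times) holds with probability at least $1 - \delta/8$. On this event, Lemma~\ref{lemma:eta_class-membership-rule} tells us that, whenever the sample-count event $G_{\eta,\oneagentindex}^t$ also holds, the rule $\distance{\oneagentindex,\delta}{t}{\onememoryindex} \leq \eta$ exactly characterizes membership in $\class{\eta,\oneagentindex}$, i.e.\ $\classt{\eta,\oneagentindex}{t} = \class{\eta,\oneagentindex}$. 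So the whole task reduces to showing that, under the \RRR query strategy, $G_{\eta,\oneagentindex}^t$ holds for all $t > \switchingtime{\oneagentindex}^{\eta}$ (deterministically, given $E_{\oneagentindex}$), with $\switchingtime{\oneagentindex}^{\eta}$ as in \eqref{eq:eta_switchingtime}.

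The core combinatorial argument is a counting of query rounds. Agent $a$'s own estimate has $\nobservation{\oneagentindex,\oneagentindex}{t} = t$ samples, so the "otherwise" branch of Definition~\ref{def:eta_needed_samples} is satisfied as soon as $t > \nobservation{\oneagentindex,\oneagentindex}{\eta}$. For the other agents $\onememoryindex \in [\nagent]\setminus\class{\eta,\oneagentindex}$, I would argue by induction on the agents sorted by their required sample count $\nobservation{\oneagentindex,\onememoryindex}{\eta}$: under \RRR, agent $a$ cycles through the agents still in its optimistic class, and an agent $\onememoryindex$ keeps getting queried (roughly once every $|\classt{\eta,\oneagentindex}{t}|$ steps, which is at most $\nagent$) until it has accumulated more than $\nobservation{\oneagentindex,\onememoryindex}{\eta}$ samples, at which point (on $E_{\oneagentindex}$) it is permanently removed from the optimistic class by Lemma~\ref{lemma:eta_class-membership-rule}. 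Once removed, it is no longer queried, which shrinks the cycle and accelerates the elimination of the remaining agents. Summing the per-agent query costs while accounting for this shrinking gives the leading term $\nobservation{\oneagentindex,\oneagentindex}{\eta} + \nagent - 1$ together with the correction term $-\sum_{\onememoryindex} \indicator{\nobservation{\oneagentindex,\oneagentindex}{\eta} > \nobservation{\oneagentindex,\onememoryindex}{\eta} + \nagent - 1}$, which subtracts off the savings from agents that were already eliminated well before time $\nobservation{\oneagentindex,\oneagentindex}{\eta}$. The key observation making this clean is that, while the optimistic class may still be too large in early rounds, it is \emph{monotonically} shrinking on $E_{\oneagentindex}$ and never drops below $\class{\eta,\oneagentindex}$, so $|\classt{\eta,\oneagentindex}{t}| \leq \nagent$ always, and this uniform bound is what is used to control the inter-query gap.

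The main obstacle is the bookkeeping in the worst-case ordering of eliminations: one must be careful that the additive $\nagent - 1$ slack is genuinely sufficient even when many agents have comparable (large) gaps, and that the indicator correction exactly matches the savings rather than over- or under-counting. I expect this to go through by the same argument as in the perfect-class case, since replacing $\Delta_{\oneagentindex,\onememoryindex}$ by $\Delta_{\oneagentindex,\onememoryindex} - \eta$ (and $\Delta_a$ by $\Delta_{\eta,a} - \eta$) inside $\CBinv{\delta}{\cdot}$ only changes the \emph{values} of the thresholds $\nobservation{\oneagentindex,\onememoryindex}{\eta}$ and not the structure of the counting. A secondary point to verify is that the edge case $\onememoryindex \in \class{\eta,\oneagentindex}$ (agents that should stay in the class) is handled correctly by Lemma~\ref{lemma:eta_class-membership-rule}: such agents satisfy $\distance{\oneagentindex,\delta}{t}{\onememoryindex} \leq \eta$ on $E_{\oneagentindex}$ regardless of their sample count once $t$ is large enough, so they never get spuriously removed, and hence do not contribute extra query cost beyond what the $\nagent - 1$ term already covers. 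Finally, a union bound is not even needed here since everything is deterministic on $E_{\oneagentindex}$, so the $\delta/8$ in the statement is inherited directly from Lemma~\ref{lemma:E_a_true}.
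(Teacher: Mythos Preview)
Your proposal is correct and follows essentially the same approach as the paper: the paper's own proof of Theorem~\ref{theorem:eta_class_estimation} simply states that it mirrors Theorem~\ref{theorem:class_estimation} step for step with the threshold $0$ replaced by $\eta$, invoking the $\eta$-versions of the intermediate lemmas (Lemmas~\ref{lem:eta_nstar}, \ref{lem:eta_in_ca}, \ref{lemma:eta_gat_nalt_nalstar}) in place of Lemmas~\ref{lem:nstar}, \ref{lem:in_ca}, \ref{lemma:gat_nalt_nalstar}, and then concluding via Lemma~\ref{lemma:E_a_true}. One small refinement: agents $\onememoryindex \in \class{\eta,\oneagentindex}$ actually satisfy $\distance{\oneagentindex,\delta}{t}{\onememoryindex} \leq \eta$ on $E_{\oneagentindex}$ for \emph{all} $t$ (Lemma~\ref{lem:eta_in_ca}), not merely for $t$ large enough, which is what ensures they are never spuriously removed at any point.
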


\begin{restatable}[\etaalgo mean estimation time complexity]{theorem}{thmfour}
	\label{thm:eta_mean_estimation}
	Given the risk parameter $\delta$, using the \RRR query strategy and
	class-uniform
	weighting (while employing $\class{\eta, \oneagentindex}$), the mean estimator $\rv{\mu}{\oneagentindex}{t}$ of agent
	$\oneagentindex$ is $(\epsilon,\frac{\delta}{4})$-convergent, that is:
	\begin{align}
	\mathbbm{P}
	\big(
	\forall t > \stoppingtime{\oneagentindex}^{\eta}
	:
	|\rv{\mu}{\eta, \oneagentindex}{t}
	-
	\rv{\mu}{\eta, \oneagentindex}{}|
	\leq
	\epsilon
	\big)
	>
	1 - \frac{\delta}{4},
	\quad \text{with ~} 
	\stoppingtime{\oneagentindex}^{\eta}
	=
	\max
	(
	\switchingtime{\oneagentindex}^{\eta}
	,
	\CBinv{\delta}{\epsilon} + |\class{\eta, \oneagentindex}| - 1
	).
	\label{eq:eta_tau_a}
	\end{align}
\end{restatable}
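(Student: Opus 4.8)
The plan is to mirror the two-phase argument used for Theorem~\ref{thm:mean_estimation}, but to exploit the fact that the class-uniform weighting turns the final estimation bound into a one-line triangle inequality, so that the only stochastic events we must control are $E_{\oneagentindex}$ (Definition~\ref{def:event_bounds}) and the class-estimation event of Theorem~\ref{theorem:eta_class_estimation}. Concretely, I would work on the intersection of $E_{\oneagentindex}$, which holds with probability at least $1-\errorprob/8$ by Lemma~\ref{lemma:E_a_true}, and the event $\{\forall t>\switchingtime{\oneagentindex}^{\eta}:\classt{\eta,\oneagentindex}{t}=\class{\eta,\oneagentindex}\}$, whose complement has probability at most $\errorprob/8$ by Theorem~\ref{theorem:eta_class_estimation}. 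A union bound shows this intersection has probability greater than $1-\errorprob/4$, which is exactly the confidence level demanded by $(\gap,\errorprob/4)$-convergence (Definition~\ref{def:pac}); so it suffices to prove that on this event $|\rv{\mu}{\eta,\oneagentindex}{t}-\rv{\mu}{\eta,\oneagentindex}{}|\le\gap$ for every $t>\stoppingtime{\oneagentindex}^{\eta}$.

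Fix such a $t$. Since $\stoppingtime{\oneagentindex}^{\eta}\ge\switchingtime{\oneagentindex}^{\eta}$ and $t>\stoppingtime{\oneagentindex}^{\eta}$, on the good event $\classt{\eta,\oneagentindex}{t}=\class{\eta,\oneagentindex}$, hence the \etaalgo estimate computed in the \textbf{Estimate} step with class-uniform weights is $\rv{\mu}{\eta,\oneagentindex}{t}=\frac{1}{|\class{\eta,\oneagentindex}|}\sum_{\onememoryindex\in\class{\eta,\oneagentindex}}\rv{\bar{x}}{\oneagentindex,\onememoryindex}{t}$. Subtracting the target $\rv{\mu}{\eta,\oneagentindex}{}=\frac{1}{|\class{\eta,\oneagentindex}|}\sum_{\onememoryindex\in\class{\eta,\oneagentindex}}\rv{\mu}{\onememoryindex}{}$, applying the triangle inequality and then $E_{\oneagentindex}$,
\[
\big|\rv{\mu}{\eta,\oneagentindex}{t}-\rv{\mu}{\eta,\oneagentindex}{}\big|
\;\le\;
\frac{1}{|\class{\eta,\oneagentindex}|}\sum_{\onememoryindex\in\class{\eta,\oneagentindex}}\big|\rv{\bar{x}}{\oneagentindex,\onememoryindex}{t}-\rv{\mu}{\onememoryindex}{}\big|
\;\le\;
\max_{\onememoryindex\in\class{\eta,\oneagentindex}}\CB{\errorprob}{\nobservation{\oneagentindex,\onememoryindex}{t}}.
\]
Thus the claim reduces to showing $\nobservation{\oneagentindex,\onememoryindex}{t}\ge\lceil\CBinv{\errorprob}{\gap}\rceil$ for every $\onememoryindex\in\class{\eta,\oneagentindex}$, since by definition of $\CBinv{\errorprob}{\cdot}$ this forces each $\CB{\errorprob}{\nobservation{\oneagentindex,\onememoryindex}{t}}\le\gap$. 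Note that, in contrast to Theorem~\ref{thm:mean_estimation}, no extra concentration event is needed: the class-uniform weights make the bound follow from $E_{\oneagentindex}$ alone, at the price of requiring \emph{each} individual memory to reach size $\CBinv{\errorprob}{\gap}$ rather than the pooled sum $\sum_{\onememoryindex}\nobservation{\oneagentindex,\onememoryindex}{t}$ reaching $\CBinv{\errorprob}{\gap}$ (which was what produced the $1/|\class{\oneagentindex}|$ speed-up there).

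For the memory bound, split on $\onememoryindex$. If $\onememoryindex=\oneagentindex$ then $\nobservation{\oneagentindex,\oneagentindex}{t}=t>\stoppingtime{\oneagentindex}^{\eta}\ge\CBinv{\errorprob}{\gap}+|\class{\eta,\oneagentindex}|-1\ge\CBinv{\errorprob}{\gap}$, hence $\nobservation{\oneagentindex,\oneagentindex}{t}\ge\lceil\CBinv{\errorprob}{\gap}\rceil$. If $\onememoryindex\ne\oneagentindex$, I would use that once the optimistic class has stabilised to $\class{\eta,\oneagentindex}$ (which, on the good event, is the case from time $\switchingtime{\oneagentindex}^{\eta}$ on), the \RRR strategy is plain round-robin over the $|\class{\eta,\oneagentindex}|-1$ other class members, one query per time step; hence the most recent query of $\onememoryindex$ by $\oneagentindex$ occurred at most $|\class{\eta,\oneagentindex}|-2$ steps before $t$, and the number of samples recorded for $\onememoryindex$ is the time of that query, so $\nobservation{\oneagentindex,\onememoryindex}{t}\ge t-(|\class{\eta,\oneagentindex}|-2)>\CBinv{\errorprob}{\gap}+1\ge\lceil\CBinv{\errorprob}{\gap}\rceil$, using $t>\CBinv{\errorprob}{\gap}+|\class{\eta,\oneagentindex}|-1$. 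Combining the two cases gives $\max_{\onememoryindex\in\class{\eta,\oneagentindex}}\CB{\errorprob}{\nobservation{\oneagentindex,\onememoryindex}{t}}\le\gap$, which closes the argument. The class-estimation bound $\switchingtime{\oneagentindex}^{\eta}$ of Theorem~\ref{theorem:eta_class_estimation} is reused verbatim, and the final failure probability is $\errorprob/8+\errorprob/8=\errorprob/4$.

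I expect the delicate point to be exactly the memory lower bound for $\onememoryindex\ne\oneagentindex$: one must bound carefully how stale $\nobservation{\oneagentindex,\onememoryindex}{t}$ can be across the \emph{transition} between the exploration phase (when $\classt{\eta,\oneagentindex}{t}$ may be as large as $[\nagent]$, so a member of $\class{\eta,\oneagentindex}$ can be stale by up to $\nagent-1$ steps) and the stabilised phase, and to verify that this staleness is already absorbed by the stated time --- the $+\,(\nagent-1)$ term in $\switchingtime{\oneagentindex}^{\eta}$ is there for this reason (it reflects, as in Theorem~\ref{theorem:eta_class_estimation}, the extra round needed to re-query everyone after enough samples have been collected), while the $+\,(|\class{\eta,\oneagentindex}|-1)$ slack in $\stoppingtime{\oneagentindex}^{\eta}$ covers one full round-robin cycle in the stabilised phase. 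A clean way to organise this is to invoke, on the good event, the membership rule of Lemma~\ref{lemma:eta_class-membership-rule} --- which in particular yields $\nobservation{\oneagentindex,\onememoryindex}{t}>\nobservation{\oneagentindex,\onememoryindex}{\eta}$ for $t>\switchingtime{\oneagentindex}^{\eta}$ (cf.\ Definition~\ref{def:eta_needed_samples}) --- together with the bookkeeping of query times already carried out in the proof of Theorem~\ref{theorem:eta_class_estimation}, rather than re-deriving the round-robin schedule from scratch.
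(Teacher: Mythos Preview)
Your proposal is correct and follows essentially the same route as the paper's proof: work on $E_{\oneagentindex}$ together with the class-estimation event of Theorem~\ref{theorem:eta_class_estimation}, expand the class-uniform estimate, apply the triangle inequality and $E_{\oneagentindex}$ to reduce to $\CB{\errorprob}{\nobservation{\oneagentindex,\onememoryindex}{t}}\le\gap$ for every $\onememoryindex\in\class{\eta,\oneagentindex}$, then use the round-robin structure of \RRR after stabilisation to lower-bound each $\nobservation{\oneagentindex,\onememoryindex}{t}$ by $t-(|\class{\eta,\oneagentindex}|-1)$ (the paper's count) or your slightly sharper $t-(|\class{\eta,\oneagentindex}|-2)$; the final union bound $\errorprob/8+\errorprob/8=\errorprob/4$ is identical. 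Two minor remarks: (i) the paper's bookkeeping treats the round-robin as cycling over all of $\class{\eta,\oneagentindex}$ (including $\oneagentindex$), so the stalest memory is $t-|\class{\eta,\oneagentindex}|+1$ rather than $t-|\class{\eta,\oneagentindex}|+2$; either version suffices for the stated $\stoppingtime{\oneagentindex}^{\eta}$. (ii) You are right that the transition window just after $\switchingtime{\oneagentindex}^{\eta}$ is the delicate step; the paper handles it at the same informal level you do, simply asserting the staleness bound once $\classt{\eta,\oneagentindex}{t}=\class{\eta,\oneagentindex}$, so your flagging of this point (and the observation that the $+\,(\nagent-1)$ slack in $\switchingtime{\oneagentindex}^{\eta}$ is what absorbs it) is, if anything, more careful than the original.
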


\rev{The class estimation result
(Theorem~\ref{theorem:eta_class_estimation})
is similar to its ``perfect'' class counterpart
(Theorem~\ref{theorem:class_estimation}) except
that it involves $\eta$-dependent quantities. The mean estimation result
(Theorem~\ref{thm:eta_mean_estimation}) differs slightly more from the perfect
class case (Theorem~\ref{thm:mean_estimation}) because the estimation
objective (see Eq.~\ref{eq:mean_class}) and weighting scheme are different.
But most importantly, we see that} for
large enough gaps
or small enough precision $\epsilon$ (similar to
Eq.~\ref{eq:epsilon_threshold}), we again achieve an optimal speed
since the time complexity of an oracle weighting baseline that would know the
true
classes beforehand is $\CBinv{\delta}{\epsilon} + |\class{\eta, \oneagentindex}| - 1$.



\section{Conclusion}
\label{sec:conclusion}
We have presented collaborative online algorithms where each agent learns the
set (class) of agents who
shares the same (or similar) objective and uses this knowledge to speed up the
estimation of its personalized mean. We
have provided PAC-style guarantees for the class and mean estimation time
complexity of our algorithms. In
addition, we have introduced a number of sample weighting mechanisms to
decrease the
bias of the estimates in the early rounds of learning, whose benefits are
illustrated empirically.

Our work initiates the study of online, collaborative and personalized
estimation and learning problems, which we believe to be a promising area for
future work. We outline a few interesting directions below.

\paragraph{\rev{Optimistic vs conservative.}} Instead of the optimistic
approach taken in this work, one could try to design a more
conservative
algorithm where an agent incorporates the estimate of another agent only when
it knows (with sufficient probability) that it belongs to the same class.
This would avoid introducing bias in the estimation in early steps. However, a
downside of such an approach is that agents would typically need some
knowledge of the gaps between their true means in order to determine when
another agent can be considered to be in the same class, which would be a big
practical limitation.

\rev{\paragraph{Large-scale variants.} While a simple way to scale
up our approach to a large number of agents is to have each
agent focus on a restricted subset of other agents (see
Remark~\ref{rem:scale}), an interesting direction to allow more exploration in
large-scale systems could rely on the idea of peer sampling \citep{jelasity2007gossip}, i.e., randomly sampling a few agents from time to time to discover potential new members of the class beyond the initial subset.
}

\paragraph{\rev{Handling data drift.}}
We would like to extend our approach to handle data drift, where the
means of
agents can change over time. Here, one could try to adapt ideas from
non-stationary bandits,  such as sliding-window UCB 
\citep{DBLP:conf/alt/GarivierM11} or UCB strategies mixed
with change-point detection algorithms \citep{pmlr-v89-cao19a}.

\paragraph{\rev{Privacy guarantees.}} \rev{In use cases where data is
considered sensitive (e.g., personal data), it is important to provide
strong privacy guarantees to the agents. While our approach only requires
agents to share aggregate quantities (see also
Remark~\ref{rem:sample_vs_com}), these may still leak sensitive information.
In future work, we would like to use tools from differential privacy
\citep{dwork2013Algorithmic}, such as the tree aggregation technique
for sharing cumulative sums in an online way \citep{treeagg2,treeagg1}, to
provide formal privacy guarantees and analyze the resulting trade-offs between
privacy and utility.}

\paragraph{\rev{Extensions to personalized learning tasks.}}
Finally, the problem could be extended to cases where
each agent aims to solve a personalized machine learning task \citep{vanhaesebrouck2017decentralized} based on
the data it receives online.
In that case, a structure in the distribution  conditioned by the outputs of
the learned models can be inferred, introducing an interesting
exploration-exploitation dilemma in the learning task.



\subsubsection*{Acknowledgments}
The authors thank the reviewers for their insightful comments that allowed to
improve the paper. This work was funded in part by Métropole Européenne de Lille (MEL), Inria, Université de Lille, and the I-SITE ULNE through the AI chair Apprenf number R-PILOTE-19-004-APPRENF.

\bibliography{references}

\begin{thebibliography}{40}
\providecommand{\natexlab}[1]{#1}
\providecommand{\url}[1]{\texttt{#1}}
\expandafter\ifx\csname urlstyle\endcsname\relax
  \providecommand{\doi}[1]{doi: #1}\else
  \providecommand{\doi}{doi: \begingroup \urlstyle{rm}\Url}\fi

\bibitem[Adi et~al.(2020)Adi, Anwar, Baig, and Zeadally]{adi2020machine}
Erwin Adi, Adnan Anwar, Zubair Baig, and Sherali Zeadally.
\newblock Machine learning and data analytics for the iot.
\newblock \emph{Neural Computing and Applications}, 32\penalty0 (20):\penalty0
  16205--16233, 2020.

\bibitem[Audibert et~al.(2010)Audibert, Bubeck, and Munos]{bai}
Jean-Yves Audibert, Sébastien Bubeck, and Rémi Munos.
\newblock Best arm identification in multi-armed bandits.
\newblock In \emph{COLT}, 2010.

\bibitem[Auer et~al.(2002)Auer, Cesa-Bianchi, and Fischer]{auer2002finite}
Peter Auer, Nicolo Cesa-Bianchi, and Paul Fischer.
\newblock Finite-time analysis of the multiarmed bandit problem.
\newblock \emph{Machine learning}, 47\penalty0 (2):\penalty0 235--256, 2002.

\bibitem[Bertsekas \& Tsitsiklis(2002)Bertsekas and
  Tsitsiklis]{bertsekas2002introduction}
Dimitri~P Bertsekas and John~N Tsitsiklis.
\newblock \emph{Introduction to probability}, volume~1.
\newblock Athena Scientific Belmont, MA, 2002.

\bibitem[Boursier \& Perchet(2019)Boursier and Perchet]{boursier-perchet}
Etienne Boursier and Vianney Perchet.
\newblock Sic - mmab: Synchronisation involves communication in multiplayer
  multi-armed bandits.
\newblock In \emph{NeurIPS}, 2019.

\bibitem[Boursier et~al.(2020)Boursier, Kaufmann, Mehrabian, and
  Perchet]{boursier2019practical}
Etienne Boursier, Emilie Kaufmann, Abbas Mehrabian, and Vianney Perchet.
\newblock A practical algorithm for multiplayer bandits when arm means vary
  among players.
\newblock In \emph{AISTATS}, 2020.

\bibitem[Cao et~al.(2019)Cao, Wen, Kveton, and Xie]{pmlr-v89-cao19a}
Yang Cao, Zheng Wen, Branislav Kveton, and Yao Xie.
\newblock Nearly optimal adaptive procedure with change detection for
  piecewise-stationary bandit.
\newblock In \emph{ICML}, 2019.

\bibitem[Chan et~al.(2011)Chan, Shi, and Song]{treeagg1}
T.-H.~Hubert Chan, Elaine Shi, and Dawn Song.
\newblock Private and continual release of statistics.
\newblock \emph{ACM Transactions on Information Systems Security}, 14\penalty0
  (3):\penalty0 1--26, 2011.

\bibitem[Cheng et~al.(2021)Cheng, Chadha, and Duchi]{cheng2021federated}
Gary Cheng, Karan Chadha, and John Duchi.
\newblock Federated asymptotics: a model to compare federated learning
  algorithms.
\newblock \emph{arXiv preprint arXiv:2108.07313}, 2021.

\bibitem[Dwork \& Roth(2013)Dwork and Roth]{dwork2013Algorithmic}
Cynthia Dwork and Aaron Roth.
\newblock The {{Algorithmic Foundations}} of {{Differential Privacy}}.
\newblock \emph{Foundations and Trends\textregistered{} in Theoretical Computer
  Science}, 9\penalty0 (3-4):\penalty0 211--407, 2013.

\bibitem[Dwork et~al.(2010)Dwork, Naor, Pitassi, and Rothblum]{treeagg2}
Cynthia Dwork, Moni Naor, Toniann Pitassi, and Guy~N. Rothblum.
\newblock Differential privacy under continual observation.
\newblock In \emph{STOC}, 2010.

\bibitem[Fallah et~al.(2020)Fallah, Mokhtari, and
  Ozdaglar]{fallah2020personalized}
Alireza Fallah, Aryan Mokhtari, and Asuman Ozdaglar.
\newblock Personalized federated learning: A meta-learning approach.
\newblock In \emph{NeurIPS}, 2020.

\bibitem[Garivier \& Moulines(2011)Garivier and
  Moulines]{DBLP:conf/alt/GarivierM11}
Aur{\'{e}}lien Garivier and Eric Moulines.
\newblock On upper-confidence bound policies for switching bandit problems.
\newblock In \emph{ALT}, 2011.

\bibitem[Hanzely et~al.(2020)Hanzely, Hanzely, Horv{\'a}th, and
  Richt{\'a}rik]{hanzely2020lower}
Filip Hanzely, Slavom{\'\i}r Hanzely, Samuel Horv{\'a}th, and Peter
  Richt{\'a}rik.
\newblock Lower bounds and optimal algorithms for personalized federated
  learning.
\newblock In \emph{NeurIPS}, 2020.

\bibitem[Hillel et~al.(2013)Hillel, Karnin, Koren, Lempel, and
  Somekh]{hillelnips13}
Eshcar Hillel, Zohar~Shay Karnin, Tomer Koren, Ronny Lempel, and Oren Somekh.
\newblock Distributed exploration in multi-armed bandits.
\newblock In \emph{NIPS}, 2013.

\bibitem[Jelasity et~al.(2007)Jelasity, Voulgaris, Guerraoui, Kermarrec, and
  Van~Steen]{jelasity2007gossip}
M{\'a}rk Jelasity, Spyros Voulgaris, Rachid Guerraoui, Anne-Marie Kermarrec,
  and Maarten Van~Steen.
\newblock Gossip-based peer sampling.
\newblock \emph{ACM Transactions on Computer Systems (TOCS)}, 25\penalty0
  (3):\penalty0 8--es, 2007.

\bibitem[Kairouz et~al.(2021)Kairouz, McMahan, Avent, Bellet, Bennis, Bhagoji,
  Bonawitz, Charles, Cormode, Cummings, D’Oliveira, Eichner, Rouayheb, Evans,
  Gardner, Garrett, Gascón, Ghazi, Gibbons, Gruteser, Harchaoui, He, He, Huo,
  Hutchinson, Hsu, Jaggi, Javidi, Joshi, Khodak, Konecný, Korolova,
  Koushanfar, Koyejo, Lepoint, Liu, Mittal, Mohri, Nock, Özgür, Pagh, Qi,
  Ramage, Raskar, Raykova, Song, Song, Stich, Sun, Suresh, Tramèr, Vepakomma,
  Wang, Xiong, Xu, Yang, Yu, Yu, and Zhao]{kairouz_advances}
Peter Kairouz, H.~Brendan McMahan, Brendan Avent, Aurélien Bellet, Mehdi
  Bennis, Arjun~Nitin Bhagoji, Kallista Bonawitz, Zachary Charles, Graham
  Cormode, Rachel Cummings, Rafael G.~L. D’Oliveira, Hubert Eichner, Salim~El
  Rouayheb, David Evans, Josh Gardner, Zachary Garrett, Adrià Gascón, Badih
  Ghazi, Phillip~B. Gibbons, Marco Gruteser, Zaid Harchaoui, Chaoyang He, Lie
  He, Zhouyuan Huo, Ben Hutchinson, Justin Hsu, Martin Jaggi, Tara Javidi,
  Gauri Joshi, Mikhail Khodak, Jakub Konecný, Aleksandra Korolova, Farinaz
  Koushanfar, Sanmi Koyejo, Tancrède Lepoint, Yang Liu, Prateek Mittal,
  Mehryar Mohri, Richard Nock, Ayfer Özgür, Rasmus Pagh, Hang Qi, Daniel
  Ramage, Ramesh Raskar, Mariana Raykova, Dawn Song, Weikang Song, Sebastian~U.
  Stich, Ziteng Sun, Ananda~Theertha Suresh, Florian Tramèr, Praneeth
  Vepakomma, Jianyu Wang, Li~Xiong, Zheng Xu, Qiang Yang, Felix~X. Yu, Han Yu,
  and Sen Zhao.
\newblock Advances and open problems in federated learning.
\newblock \emph{Foundations and Trends® in Machine Learning}, 14\penalty0
  (1–2):\penalty0 1--210, 2021.

\bibitem[Karpov \& Zhang(2022)Karpov and Zhang]{Karpov22}
Nikolai Karpov and Qin Zhang.
\newblock Collaborative best arm identification with limited communication on
  non-iid data.
\newblock \emph{arXiv preprint arXiv:2207.08015}, 2022.

\bibitem[Koc{\'a}k \& Garivier(2020)Koc{\'a}k and Garivier]{kocak2020best}
Tom{\'a}{\v{s}} Koc{\'a}k and Aur{\'e}lien Garivier.
\newblock Best arm identification in spectral bandits.
\newblock In \emph{IJCAI}, 2020.

\bibitem[Koc{\'a}k \& Garivier(2021)Koc{\'a}k and Garivier]{kocak2021best}
Tom{\'a}{\v{s}} Koc{\'a}k and Aur{\'e}lien Garivier.
\newblock Epsilon best arm identification in spectral bandits.
\newblock In \emph{IJCAI}, 2021.

\bibitem[Landgren et~al.(2021)Landgren, Srivastava, and
  Leonard]{landgren2021distributed}
Peter Landgren, Vaibhav Srivastava, and Naomi~Ehrich Leonard.
\newblock Distributed cooperative decision making in multi-agent multi-armed
  bandits.
\newblock \emph{Automatica}, 125:\penalty0 109445, 2021.

\bibitem[Madhushani et~al.(2021)Madhushani, Dubey, Leonard, and
  Pentland]{Madhushani}
Udari Madhushani, Abhimanyu Dubey, Naomi~Ehrich Leonard, and Alex Pentland.
\newblock One more step towards reality: Cooperative bandits with imperfect
  communication.
\newblock In \emph{NeurIPS}, 2021.

\bibitem[Maillard(2019)]{maillard2019mathematics}
Odalric-Ambrym Maillard.
\newblock \emph{Mathematics of statistical sequential decision making}.
\newblock PhD thesis, Universit{\'e} de Lille, Sciences et Technologies, 2019.

\bibitem[Marfoq et~al.(2021)Marfoq, Neglia, Bellet, Kameni, and
  Vidal]{Marfoq2021a}
Othmane Marfoq, Giovanni Neglia, Aurélien Bellet, Laetitia Kameni, and Richard
  Vidal.
\newblock {F}ederated {M}ulti-{T}ask {L}earning under a {M}ixture of
  {D}istributions.
\newblock In \emph{{NeurIPS}}, 2021.

\bibitem[Mart{\'\i}nez-Rubio et~al.(2019)Mart{\'\i}nez-Rubio, Kanade, and
  Rebeschini]{martinez2019decentralized}
David Mart{\'\i}nez-Rubio, Varun Kanade, and Patrick Rebeschini.
\newblock Decentralized cooperative stochastic bandits.
\newblock In \emph{Proc. of NIPS}, 2019.

\bibitem[Mateo et~al.(2013)Mateo, Carrasco, Sellami, Mill{\'a}n-Giraldo,
  Dom{\'\i}nguez, and Soria-Olivas]{mateo2013machine}
Fernando Mateo, Juan~Jos{\'e} Carrasco, Abderrahim Sellami, M{\'o}nica
  Mill{\'a}n-Giraldo, Manuel Dom{\'\i}nguez, and Emilio Soria-Olivas.
\newblock Machine learning methods to forecast temperature in buildings.
\newblock \emph{Expert Systems with Applications}, 40\penalty0 (4):\penalty0
  1061--1068, 2013.

\bibitem[Réda et~al.(2022)Réda, Vakili, and Kaufmann]{reda22}
Clémence Réda, Sattar Vakili, and Emilie Kaufmann.
\newblock Near-optimal collaborative learning in bandits.
\newblock In \emph{NeurIPS}, 2022.

\bibitem[Sankararaman et~al.(2019)Sankararaman, Ganesh, and
  Shakkottai]{sankararaman2019social}
Abishek Sankararaman, Ayalvadi Ganesh, and Sanjay Shakkottai.
\newblock Social learning in multi agent multi armed bandits.
\newblock \emph{Proceedings of the ACM on Measurement and Analysis of Computing
  Systems}, 3\penalty0 (3):\penalty0 1--35, 2019.

\bibitem[Sattler et~al.(2020)Sattler, M{\"u}ller, and
  Samek]{sattler2020clustered}
Felix Sattler, Klaus-Robert M{\"u}ller, and Wojciech Samek.
\newblock Clustered federated learning: Model-agnostic distributed multitask
  optimization under privacy constraints.
\newblock \emph{IEEE Transactions on Neural Networks and Learning Systems},
  2020.

\bibitem[Shi \& Shen(2021)Shi and Shen]{shi2021federated_global}
Chengshuai Shi and Cong Shen.
\newblock Federated multi-armed bandits.
\newblock In \emph{AAAI}, 2021.

\bibitem[Shi et~al.(2021)Shi, Shen, and Yang]{shi2021federated}
Chengshuai Shi, Cong Shen, and Jing Yang.
\newblock Federated multi-armed bandits with personalization.
\newblock In \emph{AISTATS}, 2021.

\bibitem[Smith et~al.(2017)Smith, Chiang, Sanjabi, and
  Talwalkar]{smith2017federated}
Virginia Smith, Chao-Kai Chiang, Maziar Sanjabi, and Ameet Talwalkar.
\newblock Federated multi-task learning.
\newblock In \emph{NIPS}, 2017.

\bibitem[Su \& Khoshgoftaar(2009{\natexlab{a}})Su and
  Khoshgoftaar]{10.1155/2009/421425}
Xiaoyuan Su and Taghi~M. Khoshgoftaar.
\newblock A survey of collaborative filtering techniques.
\newblock \emph{Advances in Artificial Intelligence}, 2009, 2009{\natexlab{a}}.

\bibitem[Su \& Khoshgoftaar(2009{\natexlab{b}})Su and
  Khoshgoftaar]{DBLP:journals/advai/SuK09}
Xiaoyuan Su and Taghi~M. Khoshgoftaar.
\newblock A survey of collaborative filtering techniques.
\newblock \emph{Advances in Artificial Intelligence}, 2009{\natexlab{b}}.

\bibitem[Taghavi et~al.(2016)Taghavi, Tharmarasa, Kirubarajan, Bar-Shalom, and
  McDonald]{bias_measure}
Ehsan Taghavi, Ratnasingham Tharmarasa, Thia Kirubarajan, Yaakov Bar-Shalom,
  and Michael McDonald.
\newblock A practical bias estimation algorithm for multisensor-multitarget
  tracking.
\newblock \emph{IEEE Transactions on Aerospace and Electronic Systems},
  52\penalty0 (1):\penalty0 2--19, 2016.

\bibitem[Tao et~al.(2019)Tao, Zhang, and Zhou]{Tao2019}
Chao Tao, Qin Zhang, and Yuan Zhou.
\newblock Collaborative learning with limited interaction: Tight bounds for
  distributed exploration in multi-armed bandits.
\newblock In \emph{{FOCS}}, 2019.

\bibitem[Vanhaesebrouck et~al.(2017)Vanhaesebrouck, Bellet, and
  Tommasi]{vanhaesebrouck2017decentralized}
Paul Vanhaesebrouck, Aur{\'e}lien Bellet, and Marc Tommasi.
\newblock Decentralized collaborative learning of personalized models over
  networks.
\newblock In \emph{Proc. of AISTATS}, 2017.

\bibitem[Wang et~al.(2020{\natexlab{a}})Wang, Proutiere, Ariu, Jedra, and
  Russo]{wang2020optimal}
Po-An Wang, Alexandre Proutiere, Kaito Ariu, Yassir Jedra, and Alessio Russo.
\newblock Optimal algorithms for multiplayer multi-armed bandits.
\newblock In \emph{International Conference on Artificial Intelligence and
  Statistics}, pp.\  4120--4129. PMLR, 2020{\natexlab{a}}.

\bibitem[Wang et~al.(2020{\natexlab{b}})Wang, Hu, Chen, and Wang]{wangiclr20}
Yuanhao Wang, Jiachen Hu, Xiaoyu Chen, and Liwei Wang.
\newblock Distributed bandit learning: Near-optimal regret with efficient
  communication.
\newblock In \emph{ICLR}, 2020{\natexlab{b}}.

\bibitem[Wasserman(2013)]{wasserman2013all}
Larry Wasserman.
\newblock \emph{All of statistics: a concise course in statistical inference}.
\newblock Springer Science \& Business Media, 2013.

\end{thebibliography}
\bibliographystyle{tmlr}

\appendix

\clearpage
\onecolumn

\appendix
\renewcommand{\thesection}{Appendix~\Alph{section}}
\renewcommand{\thesubsection}{\Alph{section}.\arabic{subsection}}

\section{Proof of Lemma~\ref{lemma:E_a_true}}

\begin{lemma}
	\label{lemma:timeuniformextension}
	Let $\rv{\mu}{\oneagentindex}{t}$ be the mean value of $t$ independent real-valued random variables with the true mean $\rv{\mu}{\oneagentindex}{}$ and is $\sigma$-sub-gaussian. For all $\delta \in (0, 1)$, it holds:
	\begin{align}
		\pr
		{
			\exists t \in \mathbbm{N},
			\rv{\mu}{\oneagentindex}{t}
			-
			\rv{\mu}{\oneagentindex}{}
			\geq
			\sigma
			\sqrt
			{
				\frac{2}{t}
				(
				1 + \frac{1}{t}
				)
				\ln(\sqrt{t+1}/\delta)
			}
		}
	\leq
	\delta\enspace ,
	\\
	\pr
	{
		\exists t \in \mathbbm{N},
		\rv{\mu}{\oneagentindex}{}
		-
		\rv{\mu}{\oneagentindex}{t}
		\geq
		\sigma
		\sqrt
		{
			\frac{2}{t}
			(
			1 + \frac{1}{t}
			)
			\ln(\sqrt{t+1}/\delta)
		}
	}
	\leq
	\delta \enspace .
	\end{align}
\end{lemma}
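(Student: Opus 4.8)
The statement is a standard time-uniform concentration inequality for sub-Gaussian averages, and the plan is to derive it via the Laplace (method of mixtures) technique, as hinted by the reference to \citet{maillard2019mathematics}. Fix the agent $a$ and write $S_t = \sum_{s=1}^t (X_s - \mu_a)$, so that $\mu_a^t - \mu_a = S_t / t$. The inequality we want to prove is equivalent to $\pr{\exists t\in\mathbb{N}: S_t \geq \sqrt{2\sigma^2 t(1+1/t)\ln(\sqrt{t+1}/\delta)}}\leq\delta$, and by symmetry (replacing $X_s$ by $-X_s$) the second inequality follows from the first. So I would focus entirely on the upper-tail version.

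\textbf{Key steps.} First, I would recall that by the $\sigma$-sub-Gaussian assumption, for every fixed $\lambda\in\mathbb{R}$ the process $M_t^\lambda = \exp(\lambda S_t - \tfrac{1}{2}\lambda^2\sigma^2 t)$ is a supermartingale with $M_0^\lambda = 1$ (using independence and the sub-Gaussian moment bound term by term). Second, I would integrate $\lambda$ against a centered Gaussian prior of variance $1/(\sigma^2 c)$ for a suitable constant $c>0$: define $M_t = \int M_t^\lambda \, dF(\lambda)$; by Tonelli/Fubini this is still a nonnegative supermartingale with $M_0 = 1$, and the Gaussian integral can be computed in closed form, giving $M_t = \sqrt{\frac{c}{c+t}}\exp\!\left(\frac{S_t^2}{2\sigma^2(c+t)}\right)$. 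Third, I would apply Ville's maximal inequality for nonnegative supermartingales: $\pr{\exists t\in\mathbb{N}: M_t \geq 1/\delta}\leq \delta M_0 = \delta$. Fourth, I would unpack the event $M_t \geq 1/\delta$: it reads $\frac{S_t^2}{2\sigma^2(c+t)} \geq \ln\!\big(\tfrac{1}{\delta}\sqrt{\tfrac{c+t}{c}}\big)$, i.e. $S_t \geq \sqrt{2\sigma^2(c+t)\ln\!\big(\tfrac{1}{\delta}\sqrt{(c+t)/c}\big)}$ (on the upper-tail side). Choosing $c = 1$ gives $S_t \geq \sqrt{2\sigma^2(t+1)\ln(\sqrt{t+1}/\delta)}$, and since $t+1 = t(1+1/t)$ this is exactly the threshold in the statement. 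Hence the probability of ever exceeding it is at most $\delta$, which is what we wanted.

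\textbf{Main obstacle.} The only non-routine points are (i) justifying that the mixture $M_t$ remains a supermartingale — this needs the interchange of expectation and the $\lambda$-integral, which is licensed by nonnegativity (Tonelli) — and (ii) correctly carrying out the Gaussian integral $\int_{\mathbb{R}} \exp(\lambda S_t - \tfrac12\lambda^2\sigma^2 t)\,\sqrt{\tfrac{\sigma^2 c}{2\pi}}\exp(-\tfrac12\lambda^2\sigma^2 c)\,d\lambda$ by completing the square in $\lambda$; this is a standard Gaussian computation yielding the $\sqrt{c/(c+t)}\exp(S_t^2/(2\sigma^2(c+t)))$ factor. Everything else (Ville's inequality, the symmetry argument for the lower tail, and the algebraic identification of the threshold with $c=1$) is mechanical. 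I would therefore expect the write-up to be short, with the bulk of the care going into stating Ville's inequality in the discrete-time form and verifying the supermartingale property of the mixture.
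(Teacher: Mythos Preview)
Your proposal is correct and essentially the same approach as the paper's, just carried out one level deeper: the paper simply invokes Lemma~2.7 of \citet{maillard2019mathematics} (which is itself proved by the Laplace/method-of-mixtures argument you describe) and then specializes to identical means and variances and divides by $t$, whereas you reprove that lemma from scratch via the Gaussian mixture supermartingale and Ville's inequality. One minor point worth tightening in your write-up: the event $M_t\geq 1/\delta$ is two-sided in $S_t$ (it constrains $S_t^2$), so the step ``i.e.\ $S_t\geq\ldots$'' is really the containment $\{S_t\geq\text{threshold}\}\subset\{M_t\geq 1/\delta\}$ rather than an equivalence; this is harmless for the bound you want but should be phrased as an inclusion.
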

\begin{proof}
	The two inequalities are proved in the same way as a direct consequence
	of~\cite{maillard2019mathematics}, Lemma 2.7 therein. Let $Y_1, \dots,
	Y_t$ be a sequence of independent real-valued random variables where for each $s \leq t$, $Y_s$ has mean $\rv{\mu}{s}{}$ and is $\sigma_s$-sub-Gaussian,  then for all $\delta \in (0, 1)$, it holds that
	\begin{align*}
		\pr
		{
			\exists t \in \mathbbm{N},
			\sum_{s=1}^{t}
			(
			Y_s - \rv{\mu}{s}{}
			)
			\geq
			\sqrt
			{
				2\sum_{s=1}^{t}
				\sigma_s^2
				(
				1 + \frac{1}{t}
				)
				\ln(\sqrt{t+1}/\delta)
			}
		}
	\leq
	\delta\enspace .
	\end{align*}
	
	When all random variables $Y_s$ have the same mean $\mu_a$ and variance $\sigma$, we have
	\begin{align*}
	\pr
	{
		\exists t \in \mathbbm{N},
		\sum_{s=1}^{t}
		(
		Y_s - \rv{\mu}{a}{}
		)
		\geq
		\sqrt
		{
			2t\sigma^2
			(
			1 + \frac{1}{t}
			)
			\ln(\sqrt{t+1}/\delta)
		}
	}
	\leq
	\delta,
	\end{align*}
	
	Taking the average rather than the sum,	i.e. dividing both sides by $t$  we obtain that:
	\begin{align*}
	\pr
	{
		\exists t \in \mathbbm{N},
		\sum_{s=1}^{t}
		(
		\frac{Y_s}{t} - \frac{\rv{\mu}{a}{}}{t}
		)
		\geq
		\sqrt
		{
			\frac{2}{t}
			\sigma^2
			(
			1 + \frac{1}{t}
			)
			\ln(\sqrt{t+1}/\delta)
		}
	}
	\leq
	\delta,
	\end{align*}
	\begin{align*}
		\pr
		{
			\exists t \in \mathbbm{N},
			\sum_{s=1}^{t}
			\frac{Y_s}{t} 
			-
			\rv{\mu}{\oneagentindex}{}
			\geq
			\sqrt
			{
				\frac{2}{t}
				\sigma^2
				(
				1 + \frac{1}{t}
				)
				\ln(\sqrt{t+1}/\delta)
			}
		}
		\leq
		\delta\enspace .
	\end{align*}
	And denoting $\rv{\mu}{\oneagentindex}{t} = \sum_{s=1}^{t} 
	\frac{Y_s}{t} $, we conclude
	\begin{align*}
	\pr
	{
		\exists t \in \mathbbm{N},
		\rv{\mu}{a}{t}
		-
		\rv{\mu}{\oneagentindex}{}
		\geq
		\sigma
		\sqrt
		{
			\frac{2}{t}
			(
			1 + \frac{1}{t}
			)
			\ln(\sqrt{t+1}/\delta)
		}
	}
	\leq
	\delta\enspace .
	\end{align*}
\end{proof}

\lemmaone*
\begin{proof}
  Let us recall that $E_{\oneagentindex}=	\underset{ t \in \mathbbm{N}}{\bigcap}\underset{ \onememoryindex \in [\nagent]}{\bigcap}
		|
		\rv{\bar{x}}{\oneagentindex, \onememoryindex}{t}
		-
		\rv{\mu}{\onememoryindex}{}
		|
		\leq
		\CB{\delta}{\nobservation{\oneagentindex, \onememoryindex}{t}}$. 
	Then
	\begin{align*}
	\pr
	{
		E_{\oneagentindex}
	}
	&=
	1 
	-
	\pr
	{
		\bar{E_{\oneagentindex}}
	},\\
	&=
	1 
	-
	\pr
	{
		\exists t \in \mathbbm{N},\, 
		\exists \onememoryindex \in [\nagent] :
		|
		\rv{\bar{x}}{\oneagentindex, \onememoryindex}{t}
		-
		\rv{\mu}{\onememoryindex}{}
		|
		>
		\CB{\delta}{\nobservation{\oneagentindex, \onememoryindex}{t}}
	},\\
	&\geq
	1
	-
	\sum_{\onememoryindex \in [\nagent]}
	\pr
	{
		\exists t \in \mathbbm{N}:
		|
		\rv{\bar{x}}{\oneagentindex, \onememoryindex}{t}
		-
		\rv{\mu}{\onememoryindex}{}
		|
		>
		\CB{\delta}{\nobservation{\oneagentindex, \onememoryindex}{t}}
	}.
	\end{align*}
        defining $\gamma(\delta) = \frac{\delta}{8\times \nagent}$ and using Lemma~\ref{lemma:timeuniformextension} 
	\begin{align*}
	\pr{E_{\oneagentindex}}
	&\geq
	1 
	-
	\sum_{\onememoryindex \in [\nagent]}
	\mathbbm{P}
	\Big(
	\exists t \in \mathbbm{N}:
	|
	\rv{\bar{x}}{\oneagentindex, \onememoryindex}{t}
	-
	\rv{\mu}{\onememoryindex}{}
	|
	>
	\sigma
	\sqrt
	{
		\frac{2}{\nobservation{\oneagentindex, \onememoryindex}{t}}
		\times
		(
		1
		+ \frac{1}{\nobservation{\oneagentindex, \onememoryindex}{t}}
		)
		\ln(
		\sqrt
		{
			\nobservation{\oneagentindex, \onememoryindex}{t} + 1
		}
		/
		\gamma(\delta)
		)
	}
	\Big),
	\\
	&\geq
	1 
	-
	\sum_{\onememoryindex \in [\nagent]}
	\gamma(\delta)
	=
	1 
	-
	\sum_{\onememoryindex \in [\nagent]}
	\frac{\delta}{8 \nagent}
	=
	1
	-
	\frac{\delta}{8}.\qedhere
	\end{align*}
\end{proof}

\section{Proof of Theorem~\ref{theorem:class_estimation}}
In this section, we prove Theorem~\ref{theorem:class_estimation}. We first show Lemma~\ref{lemma:class-membership-rule} using two technical lemmas. We then prove Lemma~\ref{lemma:gat_nalt_nalstar}, which combined with Lemma~\ref{lemma:class-membership-rule}, yields the main result (Theorem~\ref{theorem:class_estimation}). Let us first remark that

\begin{align*}
\distance{\oneagentindex, \delta}{t}{\onememoryindex}
&=
|
\rv{\bar{x}}{\oneagentindex, \oneagentindex}{t}
-
\rv{\bar{x}}{\oneagentindex, \onememoryindex}{t}
|
-
\CB{\delta}{\nobservation{\oneagentindex, \oneagentindex}{t}}
-
\CB{\delta}{\nobservation{\oneagentindex, \onememoryindex}{t}},\\
&=
|
(\rv{\bar{x}}{\oneagentindex, \oneagentindex}{t} - \rv{\mu}{\oneagentindex}{})
-
(\rv{\bar{x}}{\oneagentindex, \onememoryindex}{t} -\rv{\mu}{\onememoryindex}{})
+(\rv{\mu}{\oneagentindex}{} -\rv{\mu}{\onememoryindex}{} )
|
-
\CB{\delta}{\nobservation{\oneagentindex, \oneagentindex}{t}}
-
\CB{\delta}{\nobservation{\oneagentindex, \onememoryindex}{t}}.
\end{align*}

As a consequence we have

\begin{align}
\distance{\oneagentindex, \delta}{t}{\onememoryindex}
&\leq
\Delta_{\oneagentindex, \onememoryindex}
+
|
\rv{\bar{x}}{\oneagentindex, \oneagentindex}{t}
-
\rv{\mu}{\oneagentindex}{}
|
+
|
\rv{\bar{x}}{\oneagentindex, \onememoryindex}{t}
-
\rv{\mu}{\onememoryindex}{}
|
-
\CB{\delta}{\nobservation{\oneagentindex, \oneagentindex}{t}}
-
\CB{\delta}{\nobservation{\oneagentindex, \onememoryindex}{t}} \label{eq:updelta}.\\
\distance{\oneagentindex, \delta}{t}{\onememoryindex}
&\geq
\Delta_{\oneagentindex, \onememoryindex}
-
|
\rv{\bar{x}}{\oneagentindex, \oneagentindex}{t}
-
\rv{\mu}{\oneagentindex}{}
|
-
|
\rv{\bar{x}}{\oneagentindex, \onememoryindex}{t}
-
\rv{\mu}{\onememoryindex}{}
|
-
\CB{\delta}{\nobservation{\oneagentindex, \oneagentindex}{t}}
-
\CB{\delta}{\nobservation{\oneagentindex, \onememoryindex}{t}} \label{eq:lowdelta}.
\end{align}

\begin{lemma}\label{lem:nstar}
	Under $E_{\oneagentindex}$,   $\forall l\in[\nagent]$,  if $l\not\in \class{\oneagentindex}$ then $\forall \nobservation{\oneagentindex, \onememoryindex}{t}\geq\nobservation{\oneagentindex, \onememoryindex}{\star} = \lceil\CBinv{\delta}{\frac{\Delta_{\oneagentindex, \onememoryindex}}{4}}\rceil$ we have  $\distance{\oneagentindex,\delta}{t}{\onememoryindex}>0$.
\end{lemma}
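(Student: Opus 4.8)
The goal is to show that once agent $\oneagentindex$ has enough samples about an agent $\onememoryindex$ lying in a different class, the optimistic distance $\distance{\oneagentindex,\delta}{t}{\onememoryindex}$ is strictly positive, so that $\onememoryindex$ will be (correctly) excluded from the optimistic class. The natural tool is the lower bound \eqref{eq:lowdelta} on $\distance{\oneagentindex,\delta}{t}{\onememoryindex}$ together with the high-probability event $E_{\oneagentindex}$, which controls the deviations $|\rv{\bar{x}}{\oneagentindex,\oneagentindex}{t}-\rv{\mu}{\oneagentindex}{}|$ and $|\rv{\bar{x}}{\oneagentindex,\onememoryindex}{t}-\rv{\mu}{\onememoryindex}{}|$ by the confidence widths $\CB{\delta}{\nobservation{\oneagentindex,\oneagentindex}{t}}$ and $\CB{\delta}{\nobservation{\oneagentindex,\onememoryindex}{t}}$ respectively.

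\textbf{Key steps.} First I would invoke \eqref{eq:lowdelta} and, under $E_{\oneagentindex}$, replace the two empirical deviation terms by the corresponding confidence bounds, obtaining
\[
\distance{\oneagentindex,\delta}{t}{\onememoryindex}
\;\geq\;
\Delta_{\oneagentindex,\onememoryindex}
-2\CB{\delta}{\nobservation{\oneagentindex,\oneagentindex}{t}}
-2\CB{\delta}{\nobservation{\oneagentindex,\onememoryindex}{t}}.
\]
Next I would observe that $\CB{\delta}{\cdot}$ is nonincreasing in the number of samples (which follows from its explicit form in Lemma~\ref{lemma:E_a_true}), and that $\nobservation{\oneagentindex,\oneagentindex}{t}=t\geq\nobservation{\oneagentindex,\onememoryindex}{t}$ since an agent always has at least as many of its own samples as it has stored about any other agent; hence $\CB{\delta}{\nobservation{\oneagentindex,\oneagentindex}{t}}\leq\CB{\delta}{\nobservation{\oneagentindex,\onememoryindex}{t}}$ and the right-hand side is at least $\Delta_{\oneagentindex,\onememoryindex}-4\CB{\delta}{\nobservation{\oneagentindex,\onememoryindex}{t}}$. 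Finally, by the definition of $\CBinv{\delta}{\cdot}$ — namely $\lceil\CBinv{\delta}{x}\rceil$ is the smallest integer $n$ with $x>\CB{\delta}{n}$ — the hypothesis $\nobservation{\oneagentindex,\onememoryindex}{t}\geq\nobservation{\oneagentindex,\onememoryindex}{\star}=\lceil\CBinv{\delta}{\Delta_{\oneagentindex,\onememoryindex}/4}\rceil$ gives $\CB{\delta}{\nobservation{\oneagentindex,\onememoryindex}{t}}<\Delta_{\oneagentindex,\onememoryindex}/4$, so $\Delta_{\oneagentindex,\onememoryindex}-4\CB{\delta}{\nobservation{\oneagentindex,\onememoryindex}{t}}>0$, and therefore $\distance{\oneagentindex,\delta}{t}{\onememoryindex}>0$. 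Since $\onememoryindex\notin\class{\oneagentindex}$ means $\Delta_{\oneagentindex,\onememoryindex}>0$, all the divisions and the use of $\CBinv{\delta}{\cdot}$ are well-defined.

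\textbf{Main obstacle.} The only subtle points are the two monotonicity/definitional facts that need to be nailed down cleanly: (i) that $\CB{\delta}{n}$ is nonincreasing in $n$, so that having more samples only shrinks the confidence interval and the bound $\CB{\delta}{\nobservation{\oneagentindex,\oneagentindex}{t}}\leq\CB{\delta}{\nobservation{\oneagentindex,\onememoryindex}{t}}$ is legitimate; and (ii) the precise reading of the inverse confidence bound $\lceil\CBinv{\delta}{x}\rceil$ as the threshold above which $\CB{\delta}{n}<x$, which is what converts the sample-count hypothesis into the strict inequality on widths. Both are essentially bookkeeping once the explicit formula for $\CB{\delta}{\cdot}$ from Lemma~\ref{lemma:E_a_true} is in hand; there is no real analytic difficulty, and the chain of inequalities above is the whole argument. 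Also worth stating explicitly at the start: the relation $\nobservation{\oneagentindex,\oneagentindex}{t}=t\geq\nobservation{\oneagentindex,\onememoryindex}{t}$, which holds because whenever agent $\oneagentindex$ queries $\onememoryindex$ at time $t'$ it stores $n_{a,l}^{t'}=t'\leq t$, while its own counter is $t$.
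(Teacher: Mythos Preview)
Your proposal is correct and follows essentially the same approach as the paper's proof: invoke \eqref{eq:lowdelta}, use $E_{\oneagentindex}$ to bound the deviation terms by the confidence widths, exploit $\nobservation{\oneagentindex,\oneagentindex}{t}\geq\nobservation{\oneagentindex,\onememoryindex}{t}$ together with the monotonicity of $\CB{\delta}{\cdot}$ to reduce to $\Delta_{\oneagentindex,\onememoryindex}-4\CB{\delta}{\nobservation{\oneagentindex,\onememoryindex}{t}}$, and then apply the definition of $\lceil\CBinv{\delta}{\cdot}\rceil$. If anything, your write-up is slightly more explicit than the paper's about the two ``bookkeeping'' facts (monotonicity of $\CB{\delta}{\cdot}$ and the precise meaning of $\lceil\CBinv{\delta}{x}\rceil$), which is fine.
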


\begin{proof}
	Under $E_{\oneagentindex}$, we have $|\rv{\bar{x}}{\oneagentindex, \onememoryindex}{t}-\rv{\mu}{\onememoryindex}{}|\leq\CB{\delta}{\nobservation{\oneagentindex, \onememoryindex}{t}}$ and $|\rv{\bar{x}}{\oneagentindex, \oneagentindex}{t}-\rv{\mu}{\oneagentindex}{}|\leq\CB{\delta}{\nobservation{\oneagentindex, \oneagentindex}{t}}$. Since $\nobservation{\oneagentindex, \oneagentindex}{t} \geq \nobservation{\oneagentindex, \onememoryindex}{t}$, we also have $\CB{\delta}{\nobservation{\oneagentindex, \oneagentindex}{t}}\leq \CB{\delta}{\nobservation{\oneagentindex, \onememoryindex}{t}}$.  Hence, using~\eqref{eq:lowdelta}, $
	\distance{\oneagentindex, \delta}{t}{\onememoryindex}
	\geq
	\Delta_{\oneagentindex, \onememoryindex}
	-
	2
	\CB{\delta}{\nobservation{\oneagentindex, \oneagentindex}{t}}
	-
	2
	\CB{\delta}{\nobservation{\oneagentindex, \onememoryindex}{t}} \geq
	\Delta_{\oneagentindex, \onememoryindex}
	-
	4
	\CB{\delta}{\nobservation{\oneagentindex, \onememoryindex}{t}}$.  If $l\in \class{\oneagentindex}$ then $\Delta_{\oneagentindex, \onememoryindex}=0$ and since  $\CB{\delta}{\nobservation{\oneagentindex, \onememoryindex}{t}}> 0$ we cannot ensure that $\Delta_{\oneagentindex, \onememoryindex} - 4 \CB{\delta}{\nobservation{\oneagentindex, \onememoryindex}{t}}>0$. If $l\not\in \class{\oneagentindex}$ then
	to ensure that $\distance{\oneagentindex, \delta}{t}{\onememoryindex}\geq    \Delta_{\oneagentindex, \onememoryindex} - 4 \CB{\delta}{\nobservation{\oneagentindex, \onememoryindex}{t}}>0$, we need that $
	\frac{  \Delta_{\oneagentindex, \onememoryindex}}{4}
	>
	\CB{\delta}{\nobservation{\oneagentindex, \onememoryindex}{t}}$ and hence $ \nobservation{\oneagentindex, \onememoryindex}{\star} = \lceil\CBinv{\delta}{\frac{\Delta_{\oneagentindex, \onememoryindex}}{4}}\rceil$.\footnote{In  extremely rare cases, the expression $\CBinv{\delta}{\frac{\Delta_{\oneagentindex, \onememoryindex}}{4}}$ could be an integer and we should add 1 to get a strict inequality. But for conciseness of  the expression, we omit the +1 in the definition of $n_{a,l}^\star$.}
\end{proof}

\begin{lemma}
	Under  $E_{\oneagentindex}$,  $\forall l\in[\nagent]$,  $\forall t\in \mathbb N$, if $l\in \class{\oneagentindex}$ then  $\distance{\oneagentindex, \delta}{t}{\onememoryindex} \leq 0$.\label{lem:in_ca}
\end{lemma}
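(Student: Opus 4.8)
The plan is to derive the bound directly from inequality~\eqref{eq:updelta}, which was established just above and upper-bounds $\distance{\oneagentindex, \delta}{t}{\onememoryindex}$ in terms of the gap $\Delta_{\oneagentindex, \onememoryindex}$ and the two estimation errors. First I would use Definition~\ref{def:true_similarity}: since $\onememoryindex \in \class{\oneagentindex}$ we have $\Delta_{\oneagentindex, \onememoryindex} = |\rv{\mu}{\oneagentindex}{} - \rv{\mu}{\onememoryindex}{}| = 0$. Substituting this into~\eqref{eq:updelta} yields
\begin{align*}
\distance{\oneagentindex, \delta}{t}{\onememoryindex}
\leq
|
\rv{\bar{x}}{\oneagentindex, \oneagentindex}{t}
-
\rv{\mu}{\oneagentindex}{}
|
+
|
\rv{\bar{x}}{\oneagentindex, \onememoryindex}{t}
-
\rv{\mu}{\onememoryindex}{}
|
-
\CB{\delta}{\nobservation{\oneagentindex, \oneagentindex}{t}}
-
\CB{\delta}{\nobservation{\oneagentindex, \onememoryindex}{t}}.
\end{align*}

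Next I would invoke the event $E_{\oneagentindex}$ (Definition~\ref{def:event_bounds}), which holds by assumption: for every $\onememoryindex' \in [\nagent]$ and every $t \in \mathbbm{N}$ it guarantees $|\rv{\bar{x}}{\oneagentindex, \onememoryindex'}{t} - \rv{\mu}{\onememoryindex'}{}| \leq \CB{\delta}{\nobservation{\oneagentindex, \onememoryindex'}{t}}$. Applying this once with $\onememoryindex' = \oneagentindex$ and once with $\onememoryindex' = \onememoryindex$, the two positive terms in the display above are dominated by the two confidence bounds that are subtracted, so the right-hand side is at most $0$, giving $\distance{\oneagentindex, \delta}{t}{\onememoryindex} \leq 0$ as claimed.

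There is essentially no obstacle here — the statement follows from a single chain of inequalities. The only point I would be careful about is the edge case where $\nobservation{\oneagentindex, \onememoryindex}{t} = 0$ (agent $\onememoryindex$ has not yet been queried by $\oneagentindex$): with the convention $\CB{\delta}{0} = +\infty$ both the relevant clause of $E_{\oneagentindex}$ and the inequality $\distance{\oneagentindex, \delta}{t}{\onememoryindex} \leq 0$ hold vacuously, so the argument goes through unchanged. I would mention this briefly and otherwise keep the proof to the two lines above.
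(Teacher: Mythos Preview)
Your proposal is correct and follows essentially the same argument as the paper: both use the upper bound~\eqref{eq:updelta}, invoke the event $E_{\oneagentindex}$ to control the two estimation errors by the corresponding confidence widths, and then use $\Delta_{\oneagentindex,\onememoryindex}=0$ for $\onememoryindex\in\class{\oneagentindex}$ to conclude. The only cosmetic difference is the order in which the two substitutions are made, and your extra remark about the $\nobservation{\oneagentindex,\onememoryindex}{t}=0$ edge case (handled by the convention $\CB{\delta}{0}=+\infty$) is a harmless addition not spelled out in the paper.
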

\begin{proof}
	Again, recall that  under $E_{\oneagentindex}^t$,  we have $|\rv{\bar{x}}{\oneagentindex, \onememoryindex}{t}-\rv{\mu}{\onememoryindex}{}|\leq\CB{\delta}{\nobservation{\oneagentindex, \onememoryindex}{t}}$ and $|\rv{\bar{x}}{\oneagentindex, \oneagentindex}{t}-\rv{\mu}{\oneagentindex}{}|\leq\CB{\delta}{\nobservation{\oneagentindex, \oneagentindex}{t}}$. Hence, using~\eqref{eq:updelta}, $
	\distance{\oneagentindex, \delta}{t}{\onememoryindex}
	\leq
	\Delta_{\oneagentindex, \onememoryindex}
	+
	\CB{\delta}{\nobservation{\oneagentindex, \oneagentindex}{t}}
	+
	\CB{\delta}{\nobservation{\oneagentindex, \onememoryindex}{t}}
	-
	\CB{\delta}{\nobservation{\oneagentindex, \oneagentindex}{t}}
	-
	\CB{\delta}{\nobservation{\oneagentindex, \onememoryindex}{t}}
	= \Delta_{\oneagentindex, \onememoryindex}$. 
	If $l\in \class{\oneagentindex}$ then $\Delta_{\oneagentindex, \onememoryindex}=0$ and thus $\distance{\oneagentindex, \delta}{t}{\onememoryindex}\leq0$.
	
\end{proof}

We can now prove Lemma~\ref{lemma:class-membership-rule}, which we restate
here for convenience.

\lemmatwo*
\begin{proof}
	From Lemma~\ref{lem:in_ca}, we directly have one implication. For the other one,   if $l\not \in \class{\oneagentindex}$ because  $G_\oneagentindex^t$ holds,  we have $\forall \onememoryindex\in[\nagent]$, $\nobservation{\oneagentindex, \onememoryindex}{t}\geq\nobservation{\oneagentindex, \onememoryindex}{\star}$, therefore we can  apply Lemma~\ref{lem:nstar} and we directly have $\distance{\oneagentindex, \delta}{t}{\onememoryindex}>0$. 
\end{proof}

\begin{lemma}
	\label{lemma:gat_nalt_nalstar}
	Under $E_{\oneagentindex}$, and using \RRR algorithm, $G_{\oneagentindex}^t$ holds when $t > \switchingtime{\oneagentindex}$	where
	\begin{equation*}
	\switchingtime{\oneagentindex}
	=
	\nobservation{\oneagentindex, \oneagentindex}{\star}
	-1
	+
	\nagent
	-
	\sum_{\onememoryindex \in [\nagent]\setminus\class{\oneagentindex}}
	\indicator{\nobservation{\oneagentindex, \oneagentindex}{\star} > \nobservation{\oneagentindex, \onememoryindex}{\star}-1 +\nagent}.
	\end{equation*}
\end{lemma}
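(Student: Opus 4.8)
The plan is to work under the event $E_{\oneagentindex}$ and analyze the query schedule produced by \RRR, showing that by time $\switchingtime{\oneagentindex}$ every agent $\onememoryindex$ has been queried at a late enough time that $\nobservation{\oneagentindex, \onememoryindex}{t} > \nobservation{\oneagentindex, \onememoryindex}{\star}$ for all $t > \switchingtime{\oneagentindex}$, i.e.\ exactly $G_{\oneagentindex}^t$. I would first record two stability facts. By Lemma~\ref{lem:in_ca}, every $\onememoryindex \in \class{\oneagentindex}$ has $\distance{\oneagentindex, \delta}{t}{\onememoryindex} \le 0$ for all $t$, so it stays in $\classt{\oneagentindex}{t}$ forever and is visited in every pass of the round-robin. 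For $\onememoryindex \notin \class{\oneagentindex}$, once $\nobservation{\oneagentindex, \onememoryindex}{t} \ge \nobservation{\oneagentindex, \onememoryindex}{\star}$ the agent is dropped for good: Lemma~\ref{lem:nstar} gives $\distance{\oneagentindex, \delta}{t}{\onememoryindex} > 0$ at that time, and for every later $t'$ the count $\nobservation{\oneagentindex, \onememoryindex}{t'}$ stays frozen while $\nobservation{\oneagentindex, \oneagentindex}{t'} = t'$ keeps growing, so $\CB{\delta}{\nobservation{\oneagentindex, \oneagentindex}{t'}} \le \CB{\delta}{\nobservation{\oneagentindex, \onememoryindex}{t'}}$ and~\eqref{eq:lowdelta} yields $\distance{\oneagentindex, \delta}{t'}{\onememoryindex} \ge \Delta_{\oneagentindex, \onememoryindex} - 4\CB{\delta}{\nobservation{\oneagentindex, \onememoryindex}{t'}} > 0$; hence $\onememoryindex$ is never queried again, and its final recorded count is the first one that reached $\nobservation{\oneagentindex, \onememoryindex}{\star}$.

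Next I would bound the query times via a counting argument on the passes. Only the $\nagent - 1$ agents of $[\nagent]\setminus\{\oneagentindex\}$ are ever queried, a pass visits each surviving agent at most once, and — by the previous step — the surviving set only shrinks, so the gap between two consecutive queries of a surviving agent is at most $\nagent - 1$ and non-increasing. This gives: (i) any $\onememoryindex \notin \class{\oneagentindex}$ is queried with at least $\nobservation{\oneagentindex, \onememoryindex}{\star}$ samples, hence permanently dropped, by time $\nobservation{\oneagentindex, \onememoryindex}{\star} - 1 + \nagent$; (ii) since $\nobservation{\oneagentindex, \onememoryindex}{\star} = \nobservation{\oneagentindex, \oneagentindex}{\star}$ for every $\onememoryindex \in \class{\oneagentindex}$ — which is the largest $\nobservation{\oneagentindex, \cdot}{\star}$ over all agents because $\Delta_{\oneagentindex,\onememoryindex} \ge \Delta_{\oneagentindex}$ when $\onememoryindex \notin \class{\oneagentindex}$ — each such $\onememoryindex \neq \oneagentindex$ is queried at least once in every window of $\nagent - 1$ consecutive steps, hence at some time in $\{\nobservation{\oneagentindex, \oneagentindex}{\star}+1,\dots,\nobservation{\oneagentindex, \oneagentindex}{\star}-1+\nagent\}$; and trivially $\nobservation{\oneagentindex, \oneagentindex}{t} = t$. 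Together these give $G_{\oneagentindex}^t$ for every $t > \nobservation{\oneagentindex, \oneagentindex}{\star} - 1 + \nagent$.

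Finally I would sharpen this with the correction term. Any $\onememoryindex \notin \class{\oneagentindex}$ with $\nobservation{\oneagentindex, \oneagentindex}{\star} > \nobservation{\oneagentindex, \onememoryindex}{\star} - 1 + \nagent$ is, by~(i), dropped strictly before time $\nobservation{\oneagentindex, \oneagentindex}{\star}$, hence is absent from the cycle during the whole final stretch of passes; each such agent shortens those passes by one step, so the critical agents of $\class{\oneagentindex}$ reach a query time exceeding $\nobservation{\oneagentindex, \oneagentindex}{\star}$ one step sooner per such $\onememoryindex$. Carrying this accounting through yields $G_{\oneagentindex}^t$ for all $t > \switchingtime{\oneagentindex}$ with
\[
\switchingtime{\oneagentindex}
=
\nobservation{\oneagentindex, \oneagentindex}{\star} - 1 + \nagent
-
\sum_{\onememoryindex \in [\nagent]\setminus\class{\oneagentindex}}
\indicator{\nobservation{\oneagentindex, \oneagentindex}{\star} > \nobservation{\oneagentindex, \onememoryindex}{\star} - 1 + \nagent}.
\]

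I expect the bookkeeping in the last two steps to be the main obstacle: one has to track exactly which agents are in the cycle at each time — including the subtlety that an under-sampled agent outside $\class{\oneagentindex}$ may temporarily leave $\classt{\oneagentindex}{t}$ and re-enter it, which can only shorten some passes and never delays the critical queries — and then pin down the constants in the ``$-1$ per early-dropped agent'' correction without an off-by-one error while keeping the round-robin pointer consistent across the shrinking cycle.
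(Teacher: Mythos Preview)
Your proposal is correct and follows essentially the same approach as the paper: use Lemma~\ref{lem:in_ca} to keep members of $\class{\oneagentindex}$ in the cycle, use Lemma~\ref{lem:nstar} to eliminate non-members once enough samples are collected, bound the inter-query gap by $\nagent-1$ via the round-robin structure, and then subtract one step per agent that is provably dropped before time $\nobservation{\oneagentindex,\oneagentindex}{\star}$. In fact your write-up is more careful than the paper's own proof, which is quite terse and does not explicitly argue the ``once dropped, stays dropped'' stability or address the temporary exit/re-entry subtlety you flag at the end.
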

\begin{proof}
	According to Algorithm~\ref{algorithm:main}, $\classt{\oneagentindex}{0} = [\nagent]$ and an agent is eliminated from set $\classt{\oneagentindex}{t}$ at time $t$ if $\distance{\oneagentindex,\delta}{t}{\onememoryindex}=
	|
	\rv{\bar{x}}{\oneagentindex, \onememoryindex}{t}
	-
	\rv{\bar{x}}{\oneagentindex, \oneagentindex}{t}
	|
	-
	\CB{\delta}{\nobservation{\oneagentindex, \oneagentindex}{t}}
	-
	\CB{\delta}{\nobservation{\oneagentindex, \onememoryindex}{t}} >  0$.
	According to Lemma~\ref{lem:nstar}, the time required to eliminate agent $\onememoryindex$ from the class $\class{\oneagentindex}$ is at least $\nobservation{\oneagentindex, \onememoryindex}{\star}$. If agent $l$ is queried at time $\nobservation{\oneagentindex, \onememoryindex}{\star} - 1 $, then using \RRR (or round robin), we are sure that it will be removed from $\classt{\oneagentindex}{t}$ for all $t$ larger than $\nobservation{\oneagentindex, \onememoryindex}{\star} - 1 +\nagent$. 
	
	Let us consider $h$ being an agent such that $\nobservation{\oneagentindex, h}{\star} = \max_{\onememoryindex \in [\nagent]\backslash\class{\oneagentindex}}\nobservation{\oneagentindex, \onememoryindex}{\star}$. By definition,  $\Delta_{\oneagentindex, h} = \min_{\onememoryindex \in   [\nagent]\backslash\class{\oneagentindex}} \Delta_{\oneagentindex, \onememoryindex}$ and $\nobservation{\oneagentindex, h}{\star}$ can be denoted by $\nobservation{\oneagentindex, \oneagentindex}{\star}$. 
	
	In the case of round robin, we are sure that $G_\oneagentindex^t$ will be true when $t\geq \nobservation{\oneagentindex, \oneagentindex}{\star} -1 + \nagent$. But using \RRR, since the loop ignores agents not in $\classt{\oneagentindex}{t}$, we have that  $G_{\oneagentindex}^t$ holds when $t > \switchingtime{\oneagentindex}$ where
	
	\begin{equation*}
	\switchingtime{\oneagentindex}
	=
	\nobservation{\oneagentindex, \oneagentindex}{\star}
	-1
	+
	\nagent
	-
	\sum_{\onememoryindex \in [\nagent]\setminus\class{\oneagentindex}}
	\indicator{\nobservation{\oneagentindex, \oneagentindex}{\star} > 
	\nobservation{\oneagentindex, \onememoryindex}{\star}-1 +\nagent}.\qedhere
	\end{equation*}
\end{proof}

Finally, we use the above lemmas to prove
Theorem~\ref{theorem:class_estimation}, which we restate below for
convenience.

\thmone*
\begin{proof}
	From Lemma~\ref{lemma:gat_nalt_nalstar} and Lemma~\ref{lemma:class-membership-rule}, we deduce that if  $E_\oneagentindex$ holds and knowing that $\classt{\oneagentindex}{t} =\{\onememoryindex \in [\nagent]: \distance{\oneagentindex, \delta}{t}{\onememoryindex} \leq 0\}$ then $\forall t > \switchingtime{\oneagentindex} $, $\class{\oneagentindex}=\classt{\oneagentindex}{t}$. Hence, $\pr{\forall t > \switchingtime{\oneagentindex},\,  \class{\oneagentindex}=\classt{\oneagentindex}{t}} \geq \pr{E_\oneagentindex} \geq 1 - \delta/8$ using Lemma~\ref{lemma:E_a_true}. 
\end{proof}

\section{Proof of Theorem~\ref{thm:mean_estimation}}
In this section we detail the proof of Theorem~\ref{thm:mean_estimation},
about the PAC mean estimation properties of the ColME strategy. We restate the
theorem below for convenience.

\thmtwo*
\begin{proof}
	Let us assume that at time $t$ we have $\class{\oneagentindex}^t = \class{\oneagentindex}$ .  Therefore
	\begin{equation*}
	\rv{\mu}{\oneagentindex}{t}
	=
	\sum_{\onememoryindex \in \class{\oneagentindex}}
	\rv{\bar{x}}{\oneagentindex, \onememoryindex}{t}
	\weighting{\oneagentindex, \onememoryindex}{t}
	=
	\frac{ \sum_{\onememoryindex \in \class{\oneagentindex}}
		\rv{\bar{x}}{\oneagentindex, \onememoryindex}{t}
		\nobservation{\oneagentindex, \onememoryindex}{t}}{\sum_{\onememoryindex \in \class{\oneagentindex}} \nobservation{\oneagentindex, \onememoryindex}{t}}.
	\end{equation*}
	Remark that $\sum_{\onememoryindex \in \class{\oneagentindex}}
	\rv{\bar{x}}{\oneagentindex, \onememoryindex}{t}
	\nobservation{\oneagentindex, \onememoryindex}{t}$ is the sum of all $\nobservation{\oneagentindex, \onememoryindex}{t}$ samples received by all agents $\onememoryindex$ in $\class{\oneagentindex}$. In  other words, $\rv{\mu}{\oneagentindex}{t}$ is the  estimation of $\mu_\oneagentindex$ with $\sum_{\onememoryindex \in \class{\oneagentindex}} \nobservation{\oneagentindex, \onememoryindex}{t}$ examples. Hence in order to have $|\rv{\mu}{\oneagentindex}{t} - \rv{\mu}{\oneagentindex}{}| \leq \epsilon$ when $E_{\oneagentindex}$ holds, we should have $\beta(\sum_{\onememoryindex \in \class{\oneagentindex}} \nobservation{\oneagentindex, \onememoryindex}{t})\leq \epsilon$. Let us see at what time denoted by $n_{\epsilon,a}$ we have $\lceil\beta^{-1}(\epsilon)\rceil=\sum_{\onememoryindex \in \class{\oneagentindex}} \nobservation{\oneagentindex, \onememoryindex}{t}$.  With Algorithm~\ref{algo:class_estimation} using \RRR, we know that when  $\class{\oneagentindex}^t = \class{\oneagentindex}$, then only members of $\class{\oneagentindex}$ are queried. Therefore,
	\begin{gather*}
	\lceil\beta^{-1}(\epsilon)\rceil = n_{\epsilon,a} + ( n_{\epsilon,a}-1)+\dots+( n_{\epsilon,a}-|\class{\oneagentindex}| + 1)=|\class{\oneagentindex}|n_{\epsilon,a}-\frac{|\class{\oneagentindex}|-1}{2}|\class{\oneagentindex}|,\\
	n_{\epsilon,a} = \frac{\lceil\beta^{-1}(\epsilon)\rceil}{|\class{\oneagentindex}|}+\frac{|\class{\oneagentindex}|-1}{2}.
	\end{gather*}
	
	As a summary,  if $E_\oneagentindex$ holds, then we have $\forall t\geq n_{\epsilon,\oneagentindex}$,   $\class{\oneagentindex}^t = \class{\oneagentindex}$ implies that $|\rv{\mu}{\oneagentindex}{t} - \rv{\mu}{\oneagentindex}{}| \leq \epsilon$. 
	Now, following Theorem~\ref{theorem:class_estimation}, we have $\pr{
		\exists t >\switchingtime{\oneagentindex}: \classt{\oneagentindex}{t}
		\neq \class{\oneagentindex}
	}
	\leq
	\frac{\delta}{8}$. Since $\stoppingtime{\oneagentindex}=\max( \switchingtime{\oneagentindex}, n_{\epsilon,\oneagentindex})\geq \switchingtime{\oneagentindex}$, then  $\pr{\exists t > \stoppingtime{\oneagentindex}: |\rv{\mu}{\oneagentindex}{t}-\rv{\mu}{\oneagentindex}{}| > \epsilon}\leq \frac{\delta}{8}+\pr{\bar E_\oneagentindex}=\frac{\delta}{4}$.
\end{proof}

\section{Proof of Theorem~\ref{theorem:eta_class_estimation}}
The
proof of Theorem~\ref{theorem:eta_class_estimation} follows the same step as
that of Theorem~\ref{theorem:class_estimation},
up to replacing the $0$ threshold by $\eta$. We only state the intermediate
lemmas (which are adaptations of
Lemmas~\ref{lem:nstar}-\ref{lem:in_ca}-\ref{lemma:gat_nalt_nalstar}) and omit
the detailed proof. 

\begin{lemma}\label{lem:eta_nstar}
  Under $E_{\oneagentindex}$,   $\forall l\in[\nagent]$,  if $l\not\in \class{\eta, \oneagentindex}$ then $\forall \nobservation{\oneagentindex, \onememoryindex}{t}\geq\nobservation{\oneagentindex, \onememoryindex}{\eta} = \lceil\CBinv{\delta}{\frac{\Delta_{\oneagentindex, \onememoryindex} - \eta}{4}}\rceil$ we have  $\distance{\oneagentindex,\delta}{t}{\onememoryindex}>\eta$.
\end{lemma}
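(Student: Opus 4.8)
The plan is to mirror the proof of Lemma~\ref{lem:nstar}, replacing the separating threshold $0$ by $\eta$. First I would start from the lower bound on the optimistic distance recorded in~\eqref{eq:lowdelta}, namely
\[
\distance{\oneagentindex, \delta}{t}{\onememoryindex}
\geq
\Delta_{\oneagentindex, \onememoryindex}
-
|\rv{\bar{x}}{\oneagentindex, \oneagentindex}{t} - \rv{\mu}{\oneagentindex}{}|
-
|\rv{\bar{x}}{\oneagentindex, \onememoryindex}{t} - \rv{\mu}{\onememoryindex}{}|
-
\CB{\delta}{\nobservation{\oneagentindex, \oneagentindex}{t}}
-
\CB{\delta}{\nobservation{\oneagentindex, \onememoryindex}{t}}.
\]
Under $E_{\oneagentindex}$ both empirical deviations are controlled by the corresponding confidence bounds. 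Moreover, since agent $\oneagentindex$ receives a fresh sample from its own distribution at every round while it only queries other agents intermittently, we always have $\nobservation{\oneagentindex, \oneagentindex}{t} \geq \nobservation{\oneagentindex, \onememoryindex}{t}$, hence $\CB{\delta}{\nobservation{\oneagentindex, \oneagentindex}{t}} \leq \CB{\delta}{\nobservation{\oneagentindex, \onememoryindex}{t}}$ exactly as in Lemma~\ref{lem:nstar}. Combining these facts yields $\distance{\oneagentindex, \delta}{t}{\onememoryindex} \geq \Delta_{\oneagentindex, \onememoryindex} - 4\,\CB{\delta}{\nobservation{\oneagentindex, \onememoryindex}{t}}$.

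Next I would invoke the hypothesis $\onememoryindex \notin \class{\eta, \oneagentindex}$, which by Definition~\ref{def:eta_similarity} means $\Delta_{\oneagentindex, \onememoryindex} > \eta$. To conclude that $\distance{\oneagentindex, \delta}{t}{\onememoryindex} > \eta$ it then suffices to ensure $\Delta_{\oneagentindex, \onememoryindex} - 4\,\CB{\delta}{\nobservation{\oneagentindex, \onememoryindex}{t}} > \eta$, i.e. $\CB{\delta}{\nobservation{\oneagentindex, \onememoryindex}{t}} < \frac{\Delta_{\oneagentindex, \onememoryindex} - \eta}{4}$. Recalling that $\lceil \CBinv{\delta}{x} \rceil$ is defined as the smallest integer $n$ with $x > \CB{\delta}{n}$, this inequality holds as soon as $\nobservation{\oneagentindex, \onememoryindex}{t} \geq \nobservation{\oneagentindex, \onememoryindex}{\eta} = \lceil \CBinv{\delta}{\frac{\Delta_{\oneagentindex, \onememoryindex} - \eta}{4}} \rceil$, which is precisely the claimed statement.

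There is essentially no genuine obstacle here: the argument is a routine adaptation of Lemma~\ref{lem:nstar}. The only points requiring a little care are (i) that $\frac{\Delta_{\oneagentindex, \onememoryindex} - \eta}{4}$ is strictly positive — which is guaranteed exactly by $\onememoryindex \notin \class{\eta, \oneagentindex}$, so that $\CBinv{\delta}{\cdot}$ is well defined and finite at this argument; and (ii) the same edge case already flagged in the footnote of Lemma~\ref{lem:nstar}, namely that if $\CBinv{\delta}{\frac{\Delta_{\oneagentindex, \onememoryindex} - \eta}{4}}$ happens to be an integer one would formally need to add $1$ to obtain a strict inequality, which we suppress for readability in the definition of $\nobservation{\oneagentindex, \onememoryindex}{\eta}$.
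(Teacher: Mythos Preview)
Your proposal is correct and follows exactly the approach the paper intends: the paper explicitly omits the detailed proof of this lemma, stating it is a direct adaptation of Lemma~\ref{lem:nstar} with the threshold $0$ replaced by $\eta$, which is precisely what you carry out. Your handling of the two minor points (positivity of $\frac{\Delta_{\oneagentindex,\onememoryindex}-\eta}{4}$ and the integer edge case from the footnote) is also appropriate.
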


\begin{lemma}
  Under  $E_{\oneagentindex}$,  $\forall l\in[\nagent]$,  $\forall t\in \mathbb N$, if $l\in \class{\eta, \oneagentindex}$ then  $\distance{\oneagentindex, \delta}{t}{\onememoryindex} \leq \eta$.\label{lem:eta_in_ca}
\end{lemma}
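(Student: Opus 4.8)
The plan is to follow the proof of Lemma~\ref{lem:in_ca} essentially verbatim, simply carrying the radius $\eta$ through the final step. The starting point is the deterministic upper bound \eqref{eq:updelta} on the optimistic distance: it is obtained from the triangle inequality applied to the decomposition of $\distance{\oneagentindex,\delta}{t}{\onememoryindex}$ into the true gap $\Delta_{\oneagentindex,\onememoryindex}$ plus the two empirical fluctuations $\rv{\bar{x}}{\oneagentindex,\oneagentindex}{t}-\rv{\mu}{\oneagentindex}{}$ and $\rv{\bar{x}}{\oneagentindex,\onememoryindex}{t}-\rv{\mu}{\onememoryindex}{}$, so it holds for every agent $\onememoryindex$ and every time $t$, regardless of whether $\onememoryindex$ lies in $\class{\eta,\oneagentindex}$.

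First I would invoke the event $E_\oneagentindex$ of Definition~\ref{def:event_bounds}, which forces $|\rv{\bar{x}}{\oneagentindex,\oneagentindex}{t}-\rv{\mu}{\oneagentindex}{}|\le\CB{\delta}{\nobservation{\oneagentindex,\oneagentindex}{t}}$ and $|\rv{\bar{x}}{\oneagentindex,\onememoryindex}{t}-\rv{\mu}{\onememoryindex}{}|\le\CB{\delta}{\nobservation{\oneagentindex,\onememoryindex}{t}}$ simultaneously for all $\onememoryindex$ and $t$. Substituting these two inequalities into \eqref{eq:updelta} cancels the positive and negative confidence-width terms in pairs and leaves the clean bound $\distance{\oneagentindex,\delta}{t}{\onememoryindex}\le\Delta_{\oneagentindex,\onememoryindex}$. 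If now $\onememoryindex\in\class{\eta,\oneagentindex}$, then by Definition~\ref{def:eta_similarity} we have $\Delta_{\oneagentindex,\onememoryindex}\le\eta$, whence $\distance{\oneagentindex,\delta}{t}{\onememoryindex}\le\eta$, which is exactly the claimed statement.

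I do not expect a genuine obstacle here. Unlike the converse direction (a true non-member being correctly ruled out, which is the role of Lemma~\ref{lem:eta_nstar} and genuinely needs the sample-count event $G_{\eta,\oneagentindex}^t$), this ``easy'' direction requires no control at all on $\nobservation{\oneagentindex,\onememoryindex}{t}$: any number of samples keeps a true class member inside the $\eta$-optimistic class. The only points requiring a little care are to use the one-sided bound \eqref{eq:updelta} rather than its lower-bound counterpart \eqref{eq:lowdelta}, and to observe that $\eta$ enters purely through the definition of $\class{\eta,\oneagentindex}$ and not through any probabilistic estimate, which is why the conclusion is valid under $E_\oneagentindex$ alone and carries no extra failure probability.
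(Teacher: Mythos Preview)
Your proposal is correct and follows exactly the approach the paper takes: it is the straightforward adaptation of the proof of Lemma~\ref{lem:in_ca}, using \eqref{eq:updelta} together with $E_{\oneagentindex}$ to obtain $\distance{\oneagentindex,\delta}{t}{\onememoryindex}\le\Delta_{\oneagentindex,\onememoryindex}$ and then invoking $\Delta_{\oneagentindex,\onememoryindex}\le\eta$ from Definition~\ref{def:eta_similarity}. The paper in fact omits the detailed proof and simply notes that it is obtained by replacing the threshold $0$ by $\eta$, which is precisely what you did.
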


\begin{lemma}
  \label{lemma:eta_gat_nalt_nalstar}
  Under $E_{\oneagentindex}$, and using \RRR algorithm, $G_{\eta, \oneagentindex}^t$ holds when $t > \switchingtime{\oneagentindex}^{\eta}$	where
  \begin{equation*}
    \switchingtime{\oneagentindex}^{\eta}
    =
    \nobservation{\oneagentindex, \oneagentindex}{\eta}
    -1
    +
    \nagent
    -
    \sum_{\onememoryindex \in [\nagent]\setminus\class{\eta, \oneagentindex}}
    \indicator{\nobservation{\oneagentindex, \oneagentindex}{\eta} > \nobservation{\oneagentindex, \onememoryindex}{\eta}-1 +\nagent}.
  \end{equation*}
\end{lemma}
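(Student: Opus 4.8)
The plan is to mirror, essentially line by line, the proof of Lemma~\ref{lemma:gat_nalt_nalstar}, replacing the elimination threshold $0$ by $\eta$ and the quantities $\nobservation{\oneagentindex,\onememoryindex}{\star}$, $\nobservation{\oneagentindex,\oneagentindex}{\star}$, $\class{\oneagentindex}$, $\Delta_{\oneagentindex}$ by their $\eta$-analogues $\nobservation{\oneagentindex,\onememoryindex}{\eta}$, $\nobservation{\oneagentindex,\oneagentindex}{\eta}$, $\class{\eta,\oneagentindex}$, $\Delta_{\eta,\oneagentindex}$, while invoking Lemmas~\ref{lem:eta_nstar} and~\ref{lem:eta_in_ca} in place of Lemmas~\ref{lem:nstar} and~\ref{lem:in_ca}. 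The target is a purely deterministic statement: on the event $E_{\oneagentindex}$, the memory sizes $\nobservation{\oneagentindex,\onememoryindex}{t}$ (which, together with the elimination dynamics, fully determine whether $G_{\eta,\oneagentindex}^{t}$ holds) are guaranteed to exceed their thresholds $\nobservation{\oneagentindex,\onememoryindex}{\eta}$ for every $\onememoryindex\in[\nagent]$ as soon as $t > \switchingtime{\oneagentindex}^{\eta}$.

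First I would recall the elimination mechanism of \etaalgo: agent $\onememoryindex$ leaves $\classt{\eta,\oneagentindex}{t}$ exactly when $\distance{\oneagentindex,\delta}{t}{\onememoryindex} > \eta$. By Lemma~\ref{lem:eta_in_ca}, under $E_{\oneagentindex}$ every in-class agent $\onememoryindex \in \class{\eta,\oneagentindex}$ has $\distance{\oneagentindex,\delta}{t}{\onememoryindex} \le \eta$ at all times, hence is never eliminated and keeps being queried throughout the cycle. By Lemma~\ref{lem:eta_nstar}, every out-of-class agent $\onememoryindex \notin \class{\eta,\oneagentindex}$ is guaranteed to satisfy $\distance{\oneagentindex,\delta}{t}{\onememoryindex} > \eta$ as soon as $\nobservation{\oneagentindex,\onememoryindex}{t} \ge \nobservation{\oneagentindex,\onememoryindex}{\eta}$, so it is discarded at the first query whose returned statistics rest on at least $\nobservation{\oneagentindex,\onememoryindex}{\eta}$ samples. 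Since $\Delta_{\eta,\oneagentindex}$ is the smallest gap to an out-of-class agent and $\CBinv{\delta}{\cdot}$ is decreasing, I would record the key identity $\nobservation{\oneagentindex,\oneagentindex}{\eta} = \max_{\onememoryindex \notin \class{\eta,\oneagentindex}} \nobservation{\oneagentindex,\onememoryindex}{\eta}$, so that the in-class agents and the hardest out-of-class agent share the same dominant threshold.

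The core counting argument then proceeds as follows. Under plain \RR every agent is refreshed at least once in every window of $\nagent$ consecutive steps (agent $\oneagentindex$ itself at every step via its own sampling), so an agent with threshold $\nobservation{\oneagentindex,\onememoryindex}{\eta}$ reaches a memory count of at least $\nobservation{\oneagentindex,\onememoryindex}{\eta}$ by time $\nobservation{\oneagentindex,\onememoryindex}{\eta} - 1 + \nagent$ at the latest; taking the dominant threshold shows $G_{\eta,\oneagentindex}^{t}$ holds for all $t > \nobservation{\oneagentindex,\oneagentindex}{\eta} - 1 + \nagent$, which is the \RR bound. To sharpen this to the \RRR bound I would account for the acceleration produced by skipping already-discarded agents: each out-of-class agent $\onememoryindex$ that is removed strictly before the bottleneck is reached --- i.e.\ each $\onememoryindex$ with $\nobservation{\oneagentindex,\oneagentindex}{\eta} > \nobservation{\oneagentindex,\onememoryindex}{\eta} - 1 + \nagent$ --- shortens the active query cycle by one and therefore saves one step before the dominant threshold is attained. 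Summing these savings over $\onememoryindex \in [\nagent]\setminus\class{\eta,\oneagentindex}$ gives the correction $\sum_{\onememoryindex} \indicator{\nobservation{\oneagentindex,\oneagentindex}{\eta} > \nobservation{\oneagentindex,\onememoryindex}{\eta} - 1 + \nagent}$ and hence the stated $\switchingtime{\oneagentindex}^{\eta}$.

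I expect the main obstacle to be, exactly as in the perfect-class proof, the bookkeeping of this \RRR acceleration: the cadence at which the surviving agents are queried depends on the order and the precise timing of the eliminations, which is sample-path dependent, so one must verify that each early elimination contributes at most one saved step and that these savings compose additively without double counting. A secondary technical point is the strict-versus-nonstrict inequality at the eliminating query --- we need the frozen count of a removed agent to satisfy $\nobservation{\oneagentindex,\onememoryindex}{t} > \nobservation{\oneagentindex,\onememoryindex}{\eta}$ rather than merely $\ge$ --- which is handled exactly as in the footnote accompanying the definition of $\nobservation{\oneagentindex,\onememoryindex}{\star}$, and which for $\nagent \ge 2$ follows because the first query at or beyond the threshold already lands at a strictly larger count. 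Once this deterministic bound on $G_{\eta,\oneagentindex}^{t}$ is established, combining it with Lemma~\ref{lemma:eta_class-membership-rule} and $\pr{E_{\oneagentindex}} \ge 1 - \delta/8$ from Lemma~\ref{lemma:E_a_true} yields Theorem~\ref{theorem:eta_class_estimation}, paralleling the passage from Lemma~\ref{lemma:gat_nalt_nalstar} to Theorem~\ref{theorem:class_estimation}.
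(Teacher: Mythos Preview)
Your proposal is correct and matches the paper's approach exactly: the paper explicitly states that Lemma~\ref{lemma:eta_gat_nalt_nalstar} is an adaptation of Lemma~\ref{lemma:gat_nalt_nalstar} obtained by replacing the threshold $0$ by $\eta$ and the quantities $\nobservation{\oneagentindex,\onememoryindex}{\star}$, $\class{\oneagentindex}$ by their $\eta$-analogues, and omits the detailed proof. Your write-up is in fact more thorough than what the paper provides.
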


\section{Proof of Theorem~\ref{thm:eta_mean_estimation}}
In this section we detail the proof of Theorem~\ref{thm:eta_mean_estimation},
about the PAC mean estimation properties of the $\eta$-ColME strategy. We
restate the theorem below for convenience.

\thmfour*
\begin{proof}
  Since $t > \stoppingtime{\oneagentindex}^{\eta} > \switchingtime{\oneagentindex}^{\eta}$, at time $t$ we have $\class{\eta, \oneagentindex}^t = \class{\eta, \oneagentindex}$ .  Therefore
  \begin{equation*}
    \rv{\mu}{\oneagentindex}{t}
    =
    \sum_{\onememoryindex \in \class{\eta, \oneagentindex}}
    \rv{\bar{x}}{\oneagentindex, \onememoryindex}{t}
    \weighting{\oneagentindex, \onememoryindex}{t}
    =
    \frac{ \sum_{\onememoryindex \in \class{\eta ,\oneagentindex}}
      \rv{\bar{x}}{\oneagentindex, \onememoryindex}{t}}{|\class{\eta, \oneagentindex}|}.
  \end{equation*}
  Remark that $\rv{\mu}{\oneagentindex}{t}$ is not equivalent to the average
  of all the samples of agents in $\class{\eta, \oneagentindex}$: it is the
  average of
  the mean values for each agent in $\class{\eta, \oneagentindex}$.
  Therefore, although some agents may have more samples than the others, all
  are assigned uniform weights. We would like to have $|\rv{\mu}
  {\eta, \oneagentindex}{t} - \rv{\mu}{\eta, \oneagentindex}{}| \leq
  \epsilon$. When $E_{\oneagentindex}$ holds, we can rewrite this as
  \begin{equation*}
  	|\rv{\mu}{\eta, \oneagentindex}{t}
  	-
  	\rv{\mu}{\eta, \oneagentindex}{}|
  	=
  	|
  	\frac{1}{|\class{\eta, \oneagentindex}|}
  	\sum_{\onememoryindex \in \class{\eta, \oneagentindex}}
  	\rv{\bar{x}}{\oneagentindex, \onememoryindex}{t}
  	-
  	\rv{\mu}{\onememoryindex}{}
  	|
  	\leq
  	\frac{1}{|\class{\eta, \oneagentindex}|}
  	\sum_{\onememoryindex \in \class{\eta, \oneagentindex}}
  	|
  	\rv{\bar{x}}{\oneagentindex, \onememoryindex}{t}
  	-
  	\rv{\mu}{\onememoryindex}{}
  	|
  	\leq
  	\epsilon
  \end{equation*}
  Therefore, we need:
  \begin{equation*}
  	\sum_{\onememoryindex \in \class{\eta, \oneagentindex}}
  	|
  	\rv{\bar{x}}{\oneagentindex, \onememoryindex}{t}
  	-
  	\rv{\mu}{\onememoryindex}{}
  	|
  	\leq
  	|\class{\eta, \oneagentindex}| \times \epsilon,
  \end{equation*}
  A sufficient condition for the above inequality to hold is to ensure that
  each term is bounded by $\epsilon$:
  \begin{equation}
  	\forall \onememoryindex \in \class{\eta, \oneagentindex}:
  	|
  	\rv{\bar{x}}{\oneagentindex, \onememoryindex}{t}
  	-
  	\rv{\mu}{\onememoryindex}{}
  	|
  	\leq
  	\epsilon
  	\label{eq:eta_forall_centers_less_than_epsilon}
  \end{equation}
  This is achieved when $\CB
  {\delta}{\nobservation{\oneagentindex, \onememoryindex}{t}} < \epsilon$ for
  all $\onememoryindex \in \class{\eta, \oneagentindex}$. Since we are using
  \RRR and also that $\classt{\eta, \oneagentindex}{t} = \class{\eta,
  \oneagentindex}$, the number of samples required for each agent in $
  \class{\eta, \oneagentindex}$ are $n_{\oneagentindex, 1}^{t}$, $n_
  {\oneagentindex, 1}^{t} - 1$, $n_{\oneagentindex, 1}^{t} - 2$, $\dots$, $n_
  {\oneagentindex, 1}^{t} - |\class{\eta, \oneagentindex}| + 1$ where we
  consider the one with the maximum number of observations to have index 1 for
  notation simplicity (which corresponds to index $\oneagentindex$). For
  Eq.~\ref{eq:eta_forall_centers_less_than_epsilon} to hold, it is thus
  sufficient to have:
  \begin{equation*}
  	\CBinv{\delta}{\epsilon} < n_{\oneagentindex, \oneagentindex}^t - |\class{\eta, \oneagentindex}| + 1
  \end{equation*}
  \begin{equation*}
  	\CBinv{\delta}{\epsilon} + |\class{\eta, \oneagentindex}| - 1 < n_{\oneagentindex, \oneagentindex}^t 
  \end{equation*}
  Therefore $\stoppingtime{\oneagentindex}^{\eta} = \max(\switchingtime{\oneagentindex}^{\eta}, \CBinv{\delta}{\epsilon} + |\class{\eta, \oneagentindex}| - 1)$.

  As a summary,  if $E_\oneagentindex$ holds, then we have $\forall t\geq \stoppingtime{\oneagentindex}^{\eta}$,   $\class{\eta, \oneagentindex}^t = \class{\eta, \oneagentindex}$ implies that $|\rv{\mu}{\eta, \oneagentindex}{t} - \rv{\mu}{\eta, \oneagentindex}{}| \leq \epsilon$. 
  Now, following Theorem~\ref{theorem:eta_class_estimation}, we have $\pr{
      \exists t >\switchingtime{\oneagentindex}^{\eta}: \classt{\eta, \oneagentindex}{t}
      \neq \class{\eta, \oneagentindex}
    }
    \leq
    \frac{\delta}{8}$. Since $\stoppingtime{\oneagentindex}^{\eta}=\max( \switchingtime{\oneagentindex}^{\eta}, n_{\epsilon,\oneagentindex}^{\eta})\geq \switchingtime{\oneagentindex}^{\eta}$, then  $\pr{\exists t > \stoppingtime{\oneagentindex}^{\eta}: |\rv{\mu}{\oneagentindex}{t}-\rv{\mu}{\eta, \oneagentindex}{}| > \epsilon}\leq \frac{\delta}{8}+\pr{\bar E_\oneagentindex}=\frac{\delta}{4}$.
\end{proof}

\begin{figure}[t]
    \centering
    \subfigure[First class (mean $0.2$)]{\includegraphics[scale=0.20]
    {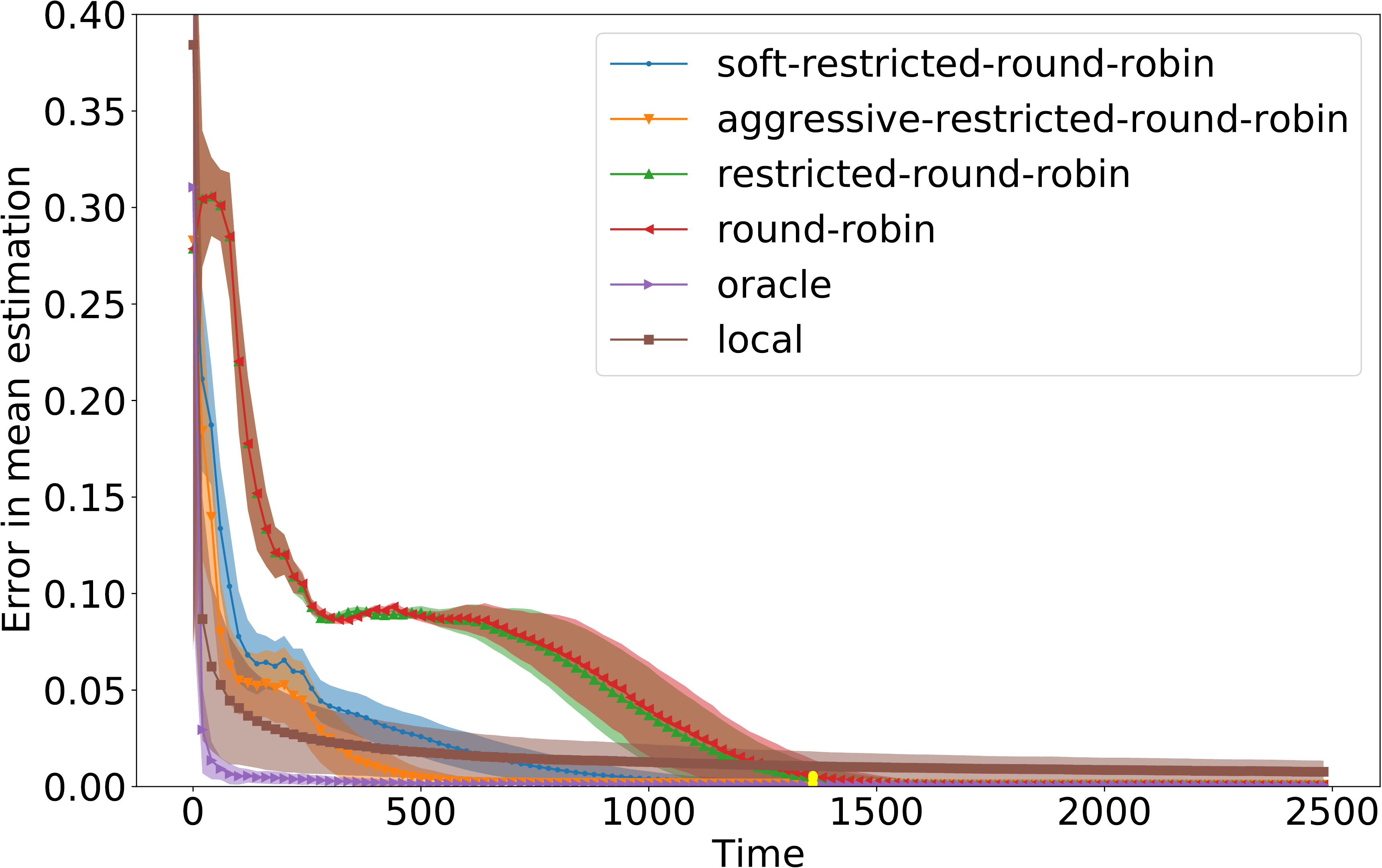}
        \label{fig:estimation_gaussian3_0}}
    \subfigure[Second class (mean $0.4$)]{\includegraphics[scale=0.20]
    {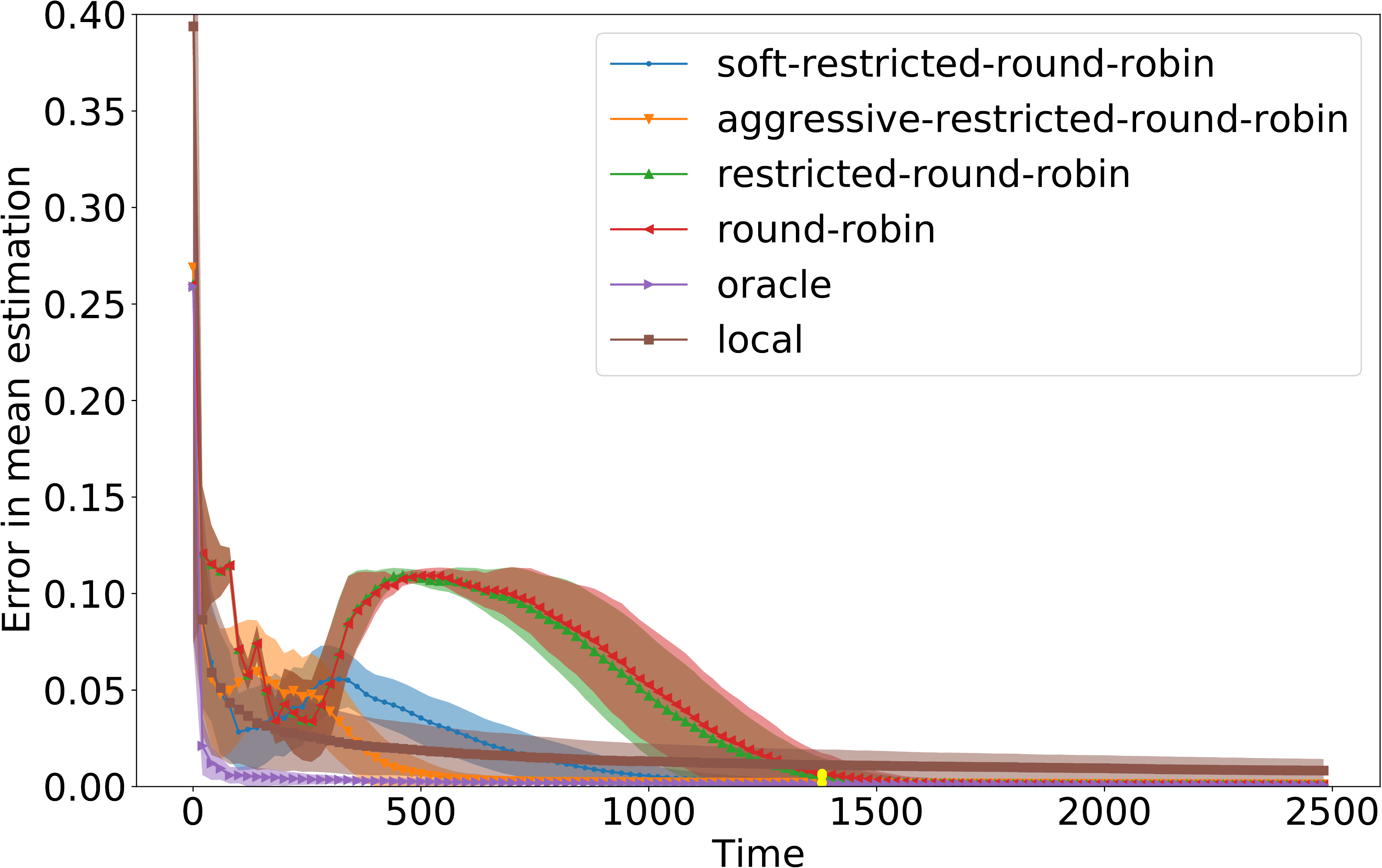}
        \label{fig:estimation_gaussian3_1}}
    \subfigure[Third class (mean $0.8$)]{\includegraphics[scale=0.20]
    {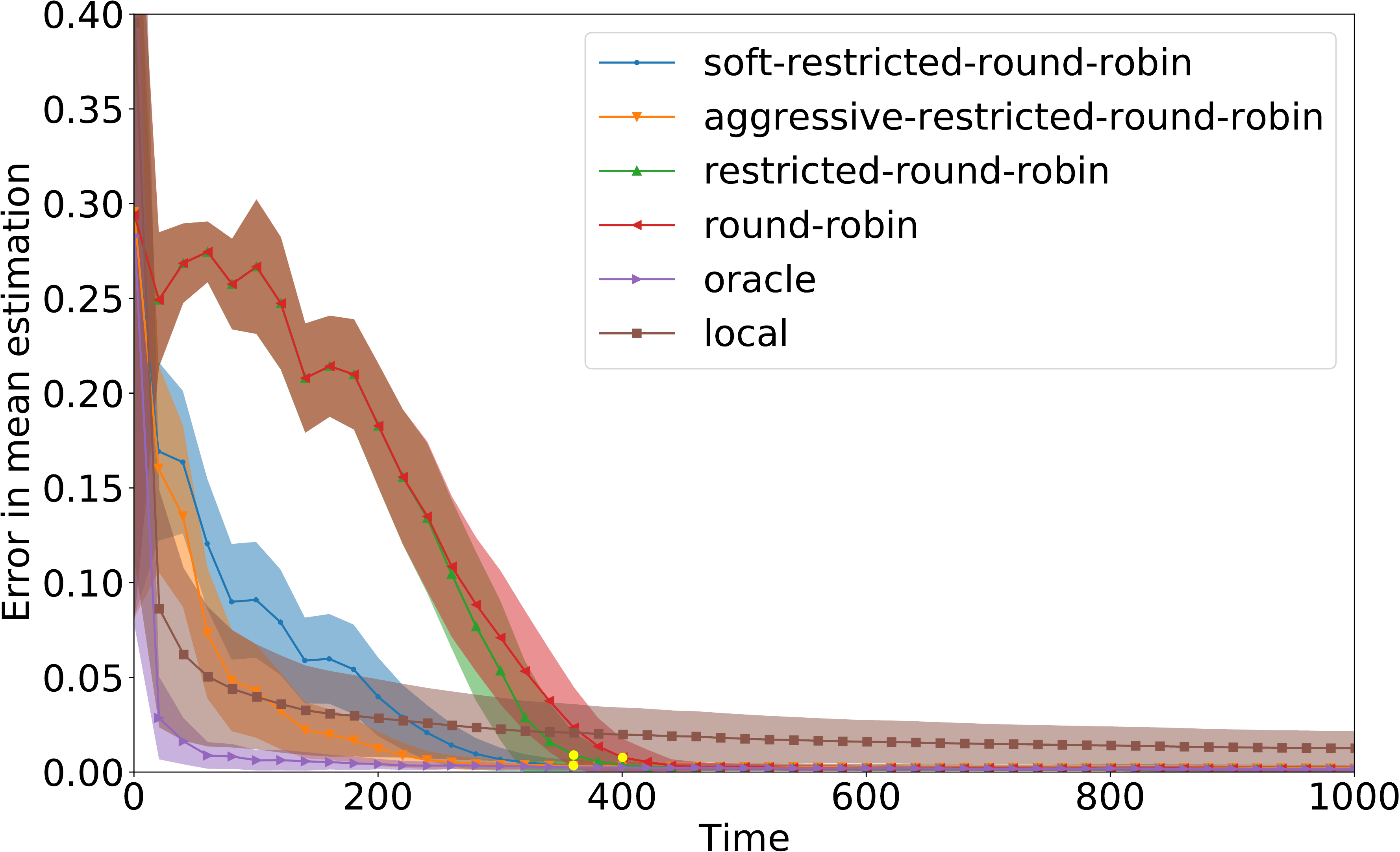}
        \label{fig:estimation_gaussian3_2}}
    \caption{Error in mean estimation for the 3-class problem (Gaussian
    distributions with true means 0.2, 0.4, 0.8).
    Note that the time scale is different for the third class to show the
    relevant details more clearly.
  \label{fig:estimation_gaussian3}}
\end{figure}

\begin{figure}[t]
	\centering
	\subfigure[\rev{Class estimation precision for $0.2$ vs $0.4$}]{\includegraphics
	[scale=0.18]
		{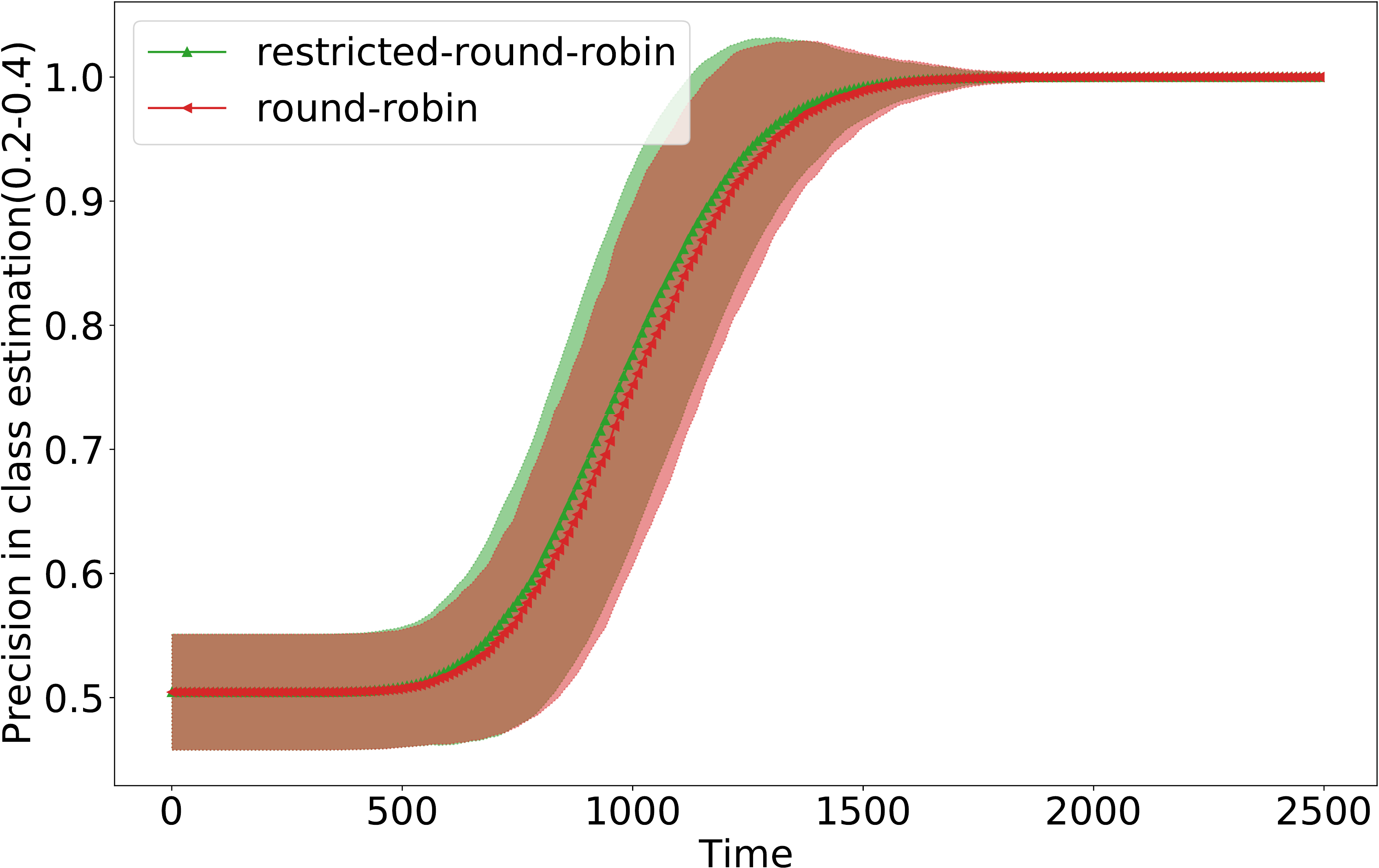}
		\label{fig:identification_0}}
		\hspace*{.5cm}
	\subfigure[\rev{Class estimation precision for $0.2$ vs $0.8$}]
	{\includegraphics
	[scale=0.18]
		{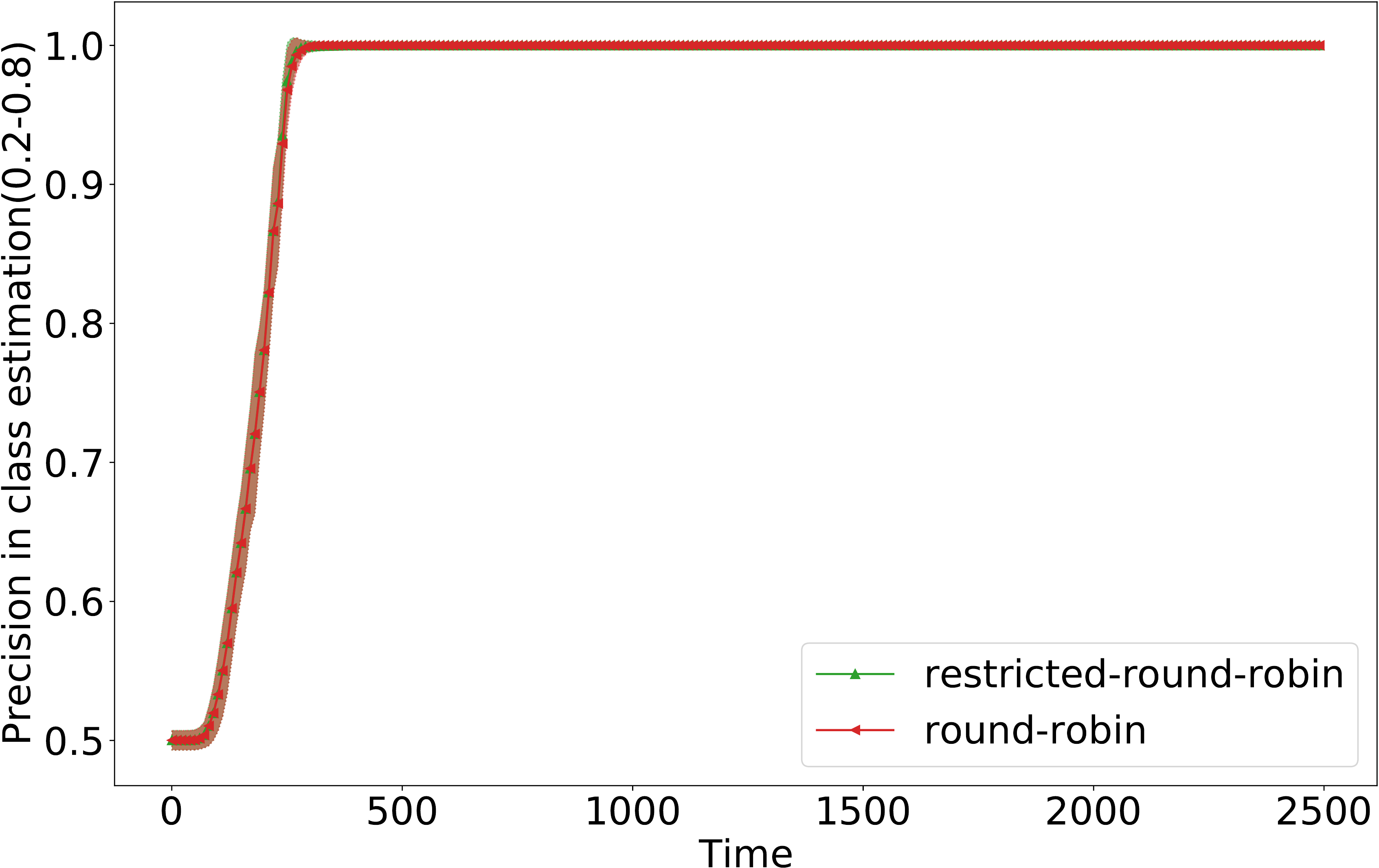}
		\label{fig:identification_1}}
	\subfigure[\rev{Class estimation precision for $0.4$ vs $0.8$}]{\includegraphics
	[scale=0.18]
		{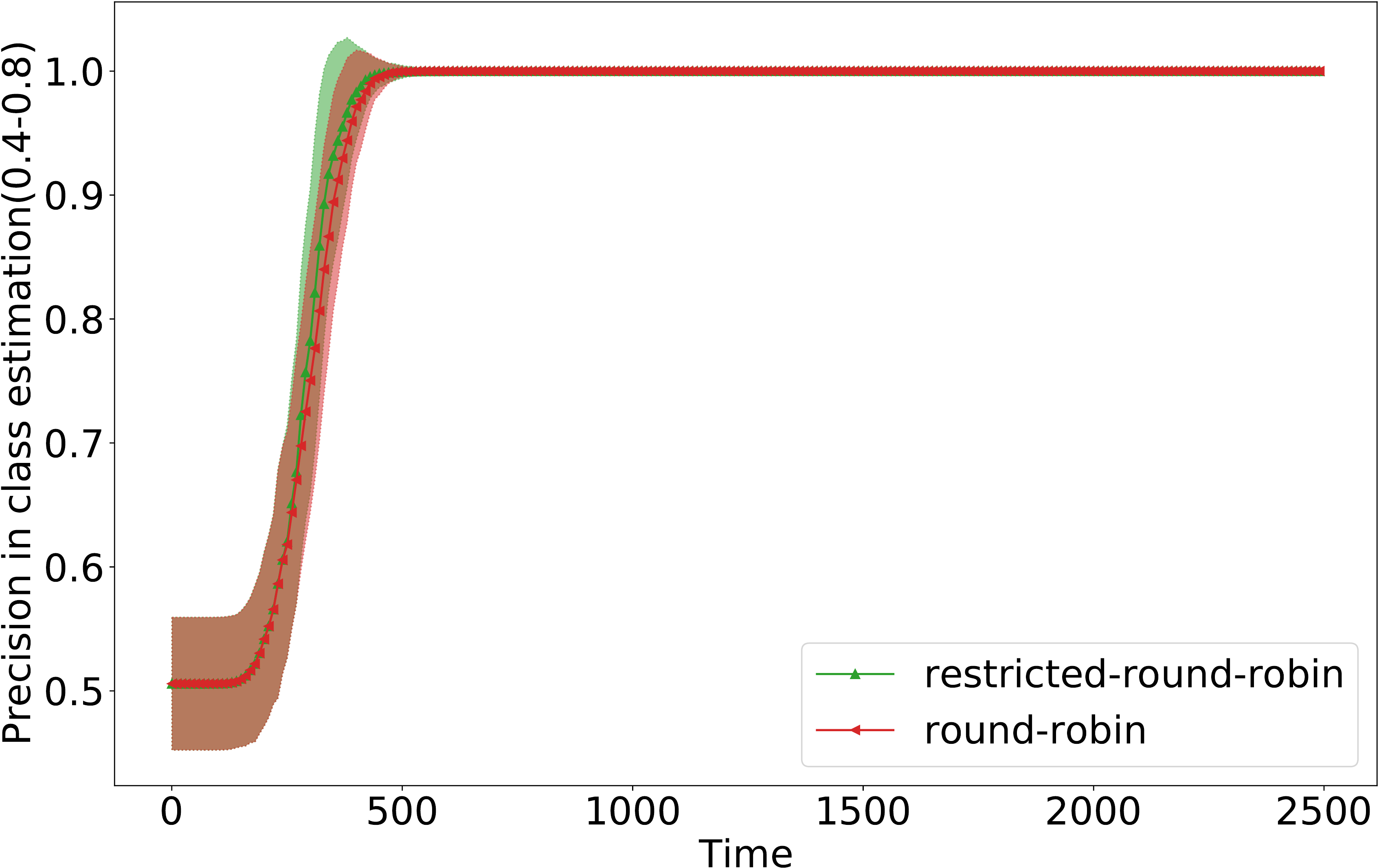}
		\label{fig:identification_2}}
	\caption{\rev{Class estimation precision across time for each pair of
	classes on the 3-class problem.}
		\label{fig:identification}}
\end{figure}

\section{Additional Experimental Results}

In this section we provide additional illustrative results to better
understand different aspects of our \algo algorithm.

\subsection{Per-Class behavior on the 3-class problem}
\label{sec:per-class}

For the 3-class problem described in the main text, we provide complementary
figures to show the error in mean estimation
\textit{in each
class separately}. These plots are shown in
Figure~\ref{fig:estimation_gaussian3}.
We can see that the average class identification time (represented by yellow
dots) is
different for different classes. For instance, as the gap between the
class with mean 0.8 and the other two is larger, this class requires less
samples to be identified. Indeed, the average class identification time is
less than $t=400$ for that class
(Figure~\ref{fig:estimation_gaussian3_2}), while it is about $t=1400$ for
the other two (Figures~\ref{fig:estimation_gaussian3_0}-\ref{fig:estimation_gaussian3_1}). Therefore, agents from the class with mean
$0.8$ reach a highly accurate estimate much faster than agents from other
classes.

\rev{We also show the class estimation precision across each pair of classes
in Figure~\ref{fig:identification}. We see that classes $0.2$
and $0.8$, who have the biggest gap, are separated first
(Figure~\ref{fig:identification_1}), then $0.4$ and $0.8$ (Figure~
\ref{fig:identification_2}) and finally $0.2$ and $0.4$ (Figure~
\ref{fig:identification_0}).}

\rev{Finally, for mean estimation, we provide the per-class counterparts of
Table~\ref{table:epsilonthreshold} in
Tables~\ref{table:epsilonthreshold-class0}-\ref{table:epsilonthreshold-class2}. In
line with previous results on class estimation, agents of class 0.8 are the
ones that converge faster to the desired mean estimation accuracy.
Interestingly, observe that for $\epsilon=0.1$, agents of class 0.2 converge
almost as fast as those from class 0.8: this is because they do
not need to eliminate all agents from class 0.4 to reach
this accuracy. This illustrates how our approach can naturally adapt to the
gaps and desired estimation accuracy.}

 \begin{table*}[t]
    \centering
    \small
    \rev{
	\begin{tabular}{lllll}
		\toprule
		Algorithm & \multicolumn{2}{c}{$\mathrm{conv}(0.1)$} &
		\multicolumn{2}{c}{$\mathrm{conv}(0.01)$}\\
		& avg/std & max & avg/std & max\\
		\midrule
		Round-Robin (RR) & $289 \pm 92$& $594$&$1207 \pm 170$ &$1782$\\
		Restricted-RR & $271 \pm 77$& $568$ & $1194 \pm 177$&
		$1857$ \\
		Soft-Restricted-RR & $111 \pm 30$& $239$ &
		$761 \pm 173$& $1379$\\
		Aggressive-Restricted-RR& $74 \pm 40$& $\mathbf{215}$& $
		\mathbf{382 \pm 95}$& $\mathbf{830}$ \\
		\midrule
		Local & $\mathbf{41 \pm 40}$& $288$& $4548 \pm 4075$& $28616$
		\\
		\midrule
		Oracle & $\mathit{6 \pm 4}$& $\mathit{15}$& $\mathit{100 \pm
		63}$& $\mathit{246}$ \\
		\midrule
		\midrule
		Theoretical Restricted-RR ($\tau_a$) & \multicolumn{2}{c}{$3878$}‌& 
		\multicolumn{2}{c}{$33406$} \\
		Theoretical Local ($\tau_a$)& \multicolumn{2}{c}{$885$}& \multicolumn{2}{c}
		{$100216$}\\
		\bottomrule
	\end{tabular}
	}
    \caption{\rev{Empirical convergence times (see
    Eq.~\ref{eq:convergencetime3})
    for class 0.2 of
    different algorithms on the 3-class problem (Gaussian
   distributions with true means $0.2$, $0.4$, $0.8$) for a target estimation error of $\epsilon=0.1$ 
    (unfavorable regime) and
    $\epsilon=0.01$ (favorable regime). We report
   the average, standard deviation and maximum across agents and runs.
   We also
   report the high-probability mean
   estimation times $\tau_a$ given by our theory for \RRR and \Local.}}
    \label{table:epsilonthreshold-class0}
\end{table*}

 \begin{table*}[t]
    \centering
    \small
    \rev{
	\begin{tabular}{lllll}
		\toprule
		Algorithm & \multicolumn{2}{c}{$\mathrm{conv}(0.1)$} &
		\multicolumn{2}{c}{$\mathrm{conv}(0.01)$}\\
		& avg/std & max & avg/std & max\\
		\midrule
		Round-Robin (RR) & $721 \pm 160$& $1239$&$1232 \pm 181$&$2088$
		\\
		Restricted-RR & $708 \pm 158$& $1175$& $1216 \pm
		183$& $1925$ \\
		Soft-Restricted-RR & $\mathbf{31 \pm 33}$& $340$ & $829
		\pm 188$& $1432$\\
		Aggressive-Restricted-RR& $35 \pm 42$& $340$& $
		\mathbf{420 \pm 92}$& $\mathbf{832}$ \\
		\midrule
		Local & $41 \pm 39$& $\mathbf{277}$& $4482 \pm 3846$&
		$26488$ \\
		\midrule
		Oracle & $\mathit{4 \pm 3}$& $\mathit{19}$& $\mathit{99 \pm
		70}$ & $\mathit{303}$\\
		\midrule
		Theoretical Restricted-RR ($\tau_a$)& \multicolumn{2}{c}{$3878$}‌& \multicolumn{2}{c}{$33406$} \\
		Theoretical Local ($\tau_a$)& \multicolumn{2}{c}{$885$}& \multicolumn{2}{c}
		{$100216$}\\
		\bottomrule
	\end{tabular}
	}
    \caption{\rev{Empirical convergence times (see
    Eq.~\ref{eq:convergencetime3})
    for class 0.4 of
    different algorithms on the 3-class problem (Gaussian
   distributions with true means $0.2$, $0.4$, $0.8$) for a target estimation error of $\epsilon=0.1$ 
    (unfavorable regime) and
    $\epsilon=0.01$ (favorable regime). We report
   the average, standard deviation and maximum across agents and runs.
   We also
   report the high-probability mean
   estimation times $\tau_a$ given by our theory for \RRR and \Local.}}
    \label{table:epsilonthreshold-class1}
\end{table*}

 \begin{table*}[t]
    \centering
    \small
    \rev{
	\begin{tabular}{lllll}
		\toprule
		Algorithm & \multicolumn{2}{c}{$\mathrm{conv}(0.1)$} &
		\multicolumn{2}{c}{$\mathrm{conv}(0.01)$}\\
		& avg/std & max & avg/std & max\\
		\midrule
		Round-Robin (RR) & $271 \pm 35$& $401$&$379 \pm
		38$&$538$ \\
		Restricted-RR & $266 \pm 31.25$& $384$ & $342 \pm 38$ & $
		\mathbf{526}$\\
		Soft-Restricted-RR & $100 \pm 36$& $232$ & $280 \pm 78$&
		$958$\\
		Aggressive-Restricted-RR& $57 \pm 24$& $\mathbf{172}$& $
		\mathbf{222 \pm 90}$& $958$ \\
		\midrule
		Local & $\mathbf{41 \pm 39}$& $289$& $4434 \pm 3883$ & $27470$\\
		\midrule
		Oracle & $\mathit{6 \pm 4}$& $\mathit{17}$& $\mathit{95 \pm
		56}$& $\mathit{208}$ \\
		\midrule
		\midrule
		Theoretical Restricted-RR ($\tau_a$)& \multicolumn{2}{c}{$1085$‌}& \multicolumn{2}{c}{$33406$} \\
		Theoretical Local ($\tau_a$)& \multicolumn{2}{c}{$885$}& \multicolumn{2}{c}
		{$100216$}\\
		\bottomrule
	\end{tabular}
	}
    \caption{\rev{Empirical convergence times (see
    Eq.~\ref{eq:convergencetime3})
    for class 0.8 of
    different algorithms on the 3-class problem (Gaussian
   distributions with true means $0.2$, $0.4$, $0.8$) for a target estimation error of $\epsilon=0.1$ 
    (unfavorable regime) and
    $\epsilon=0.01$ (favorable regime). We report
   the average, standard deviation and maximum across agents and runs.
   We also
   report the high-probability mean
   estimation times $\tau_a$ given by our theory for \RRR and \Local.}}
    \label{table:epsilonthreshold-class2}
\end{table*}

\subsection{Results on a 2-class problem}

\begin{figure}[t]
	\centering
	\subfigure[Class estimation precision over all agents]{\includegraphics[scale=0.2]
		{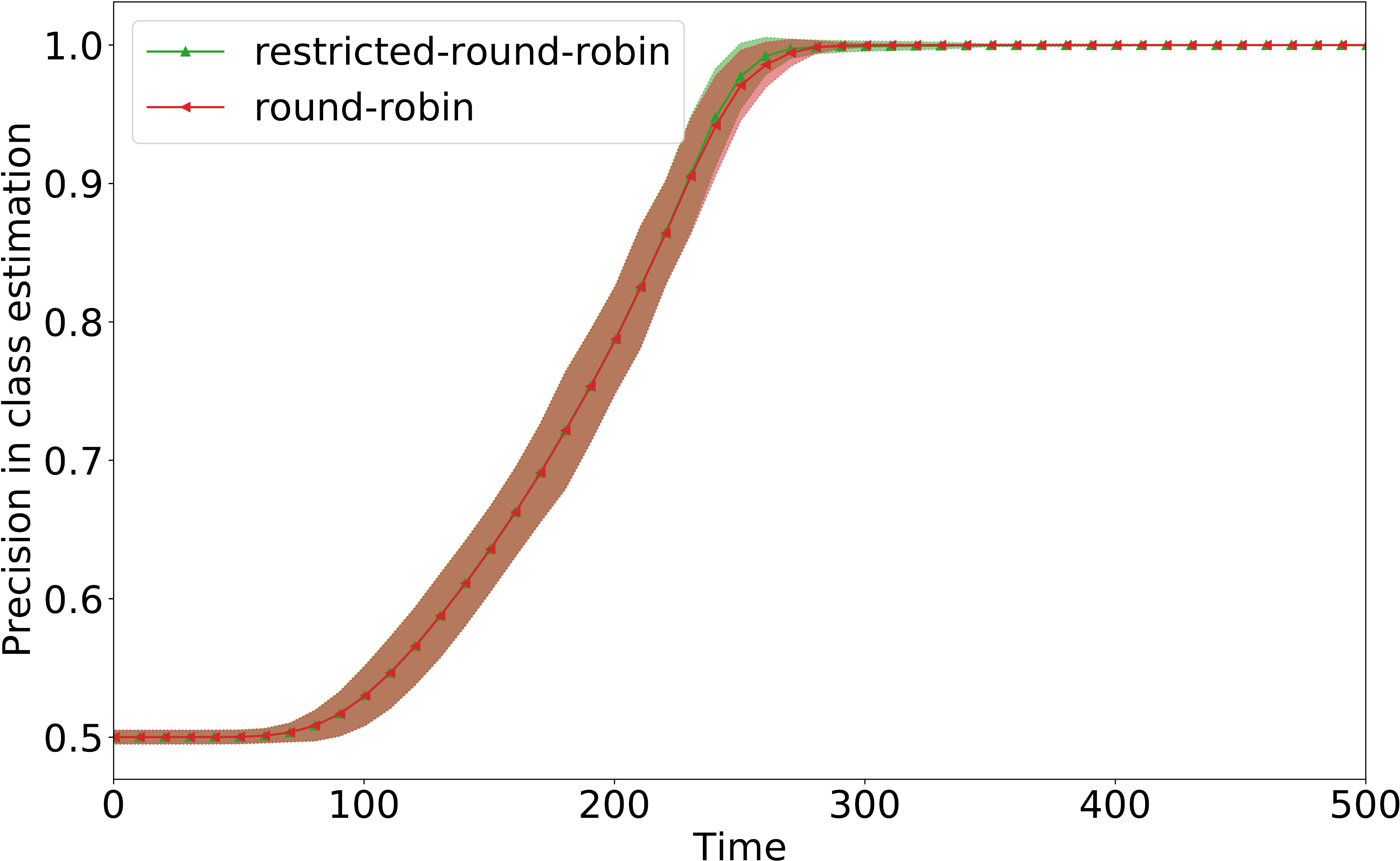}
		\label{fig:precision2}}
	\subfigure[Error in mean estimation over all agents]{\includegraphics
    [scale=0.2]
		{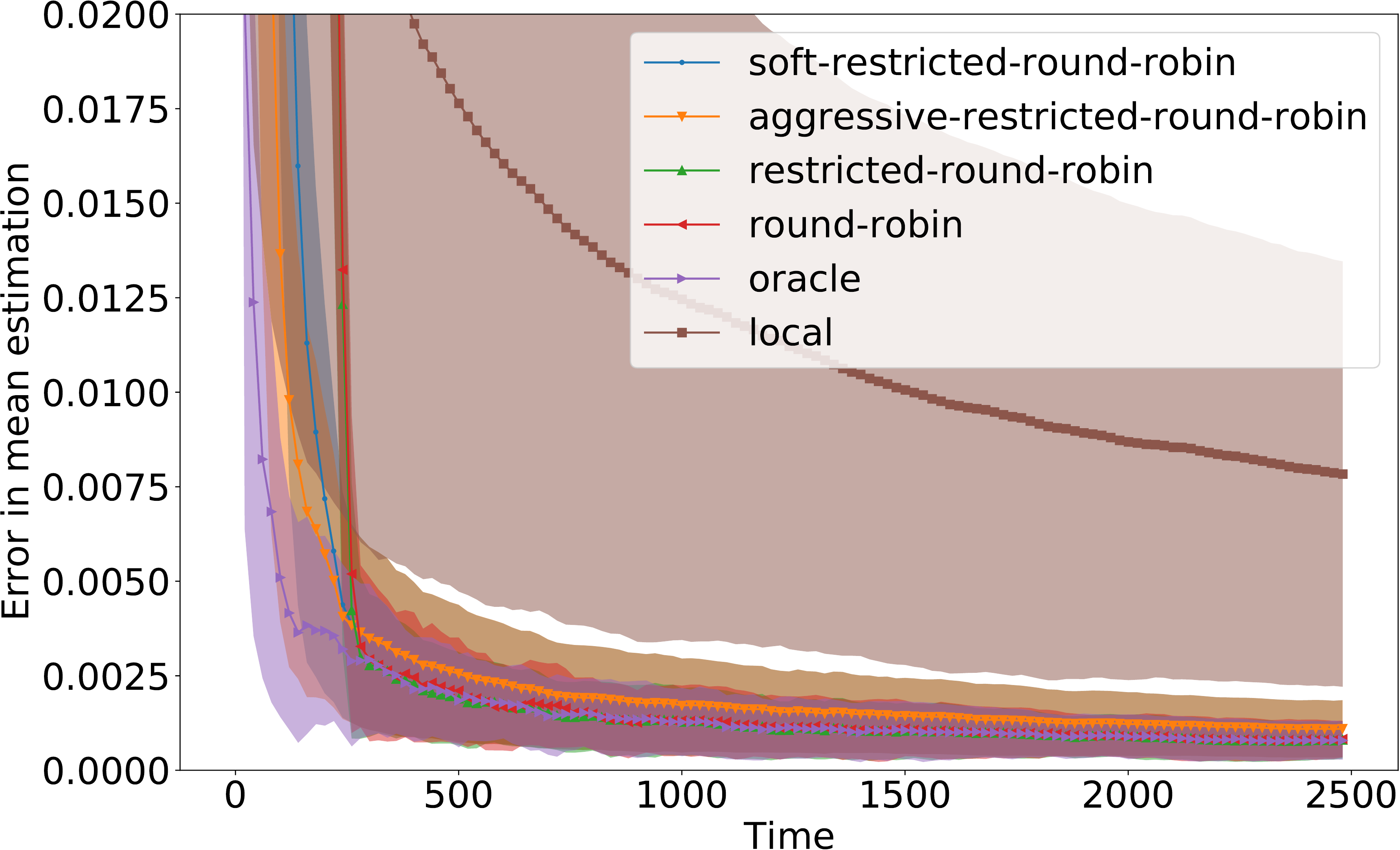}
		\label{fig:network_mean_estimation}}
	\caption{Results for the 2-class problem (Gaussian
    distributions with true means 0.2 and 0.8).
    \label{fig:2-class}}
\end{figure}

We experiment with a 2-class problem generated in the same way as the 3-class
 problem considered in the main text, except that the means are chosen among $
\{0.2, 0.8\}$. This makes the problem easier since the gap between the two
 classes corresponds to the largest gap in the 3-class problem. The
 results shown in Figure~\ref{fig:2-class} reflect this: agents correctly
 identify their class and reach highly accurate mean estimates much faster
 than in the 3-class problem. Consequently, the improvement compared to \Local
 is even more significant and our approach almost matches the performance of
 \Oracle. We omit the per-class figures as they are
 essentially the same as Figure~\ref{fig:network_mean_estimation}.


\end{document}